\newif\ifdraft
\newtheorem{theorem}{Theorem}[section]
\newtheorem{lemma}[theorem]{Lemma}
\newtheorem{corollary}[theorem]{Corollary}
\newtheorem{proposition}[theorem]{Proposition}
\newenvironment{hproof}{\proof}{\endproof}
\newcommand{\mb}{\boldsymbol}
\newcommand{\bb}{\mathbb}
\newcommand{\vem}{\text{LS-EM\xspace}\xspace}
\newcommand{\vemlong}{\text{Least Squares EM}\xspace}
\newcommand{\variant}{\text{least-squares}\xspace}
\newcommand{\Variant}{\text{Least-squares}\xspace}
\newcommand{\dup}[1]{\ensuremath{\operatorname{d}\!{#1}}} % differentiation operator
\newcommand{\argmin}{\operatornamewithlimits{argmin}}
\newcommand{\argmax}{\operatornamewithlimits{argmax}}
\newcommand{\reals}{\bb R}
\newcommand{\R}{\reals}
\newcommand{\Brac}[1]{\left\lbrace #1 \right\rbrace}
\newcommand{\wh}{\widehat}
\newcommand{\wt}{\widetilde}
\newcommand{\ol}{\overline}
\renewcommand{\tilde}{\wt}
\renewcommand{\hat}{\wh}
\renewcommand{\bar}{\ol}
\definecolor{darkgreen}{RGB}{0,120,0}
\newcommand{\yc}[1]{\ifdraft{\color{darkgreen}{\bf Yudong: #1}}\else\fi}
\title{Global Convergence of \vemlong for Demixing Two Log-Concave Densities}
\author{%
  Wei Qian, Yuqian Zhang, Yudong Chen  \\
  Cornell University\\
\texttt{\{wq34,yz2557,yudong.chen\}@cornell.edu} }
\date{}
\begin{document}

\maketitle

\begin{abstract}

This work studies the location estimation problem for a mixture of two rotation invariant log-concave densities. We demonstrate that \vemlong, a variant of the EM algorithm, converges to the true location parameter from a randomly initialized point. We establish the explicit convergence rates and sample complexity bounds, revealing their dependence on the signal-to-noise ratio and the tail property of the log-concave distribution. Moreover, we show that this global convergence property is robust under model mis-specification. 

Our analysis generalizes previous techniques for proving the convergence results for Gaussian mixtures. In particular, we make use of an angle-decreasing property for establishing global convergence of \vemlong beyond Gaussian settings, as $ \ell_2 $ distance contraction no longer holds globally for general log-concave mixtures.

\end{abstract}

\section{Introduction}
\label{sec:intro}
One important problem in statistics and machine learning is to learn a finite mixture of distributions~\cite{lindsay1995mixture,titterington1985statistical}. 
In the parametric setting where the functional form of the distribution is known, this problem is to estimate parameters (e.g., mean and covariance) that specify the distribution of each mixture component. The parameter estimation problem for mixture models is inherently nonconvex, posing challenges for both computation and analysis. While many algorithms have been proposed, rigorous performance guarantees are often elusive. One exception is the Gaussian Mixture Model (GMM), for which much theoretical progress has been made in recent years. The goal of this paper is to study algorithmic guarantees for a much broader class of mixture models, namely log-concave distributions. This class includes may common distributions\footnote{Familiar examples of log-concave distributions include Gaussian, Laplace, Gamma, and Logistics~\cite{bagnoli2005log}. } and is interesting from both modelling and theoretical perspectives \cite{an1997log,bagnoli2005log,barlow1975statistical,gilks1992adaptive,walther2002detecting,saumard2014log}.  

We focus on the Expectation Maximization (EM) algorithm~\cite{dempster1977maximum}, which is one of the most popular methods for estimating mixture models. Understanding the convergence property of EM is highly non-trivial due to the non-convexity of the negative log-likelihood function. The work in~\cite{balakrishnan2017statistical} developed a general framework for establishing \emph{local} convergence to the true parameter. Proving \emph{global} convergence of EM is more challenging, even in the simplest setting with a mixture of two Gaussians (2GMM). The recent work in~\cite{daskalakis2016ten,xu2016global} considered balanced 2GMM with known covariance matrix and showed for the first time that EM converges to the true location parameter using random initialization. Subsequent work established global convergence results for a mixture of two truncated Gaussians~\cite{nagarajan2019convergence}, two linear regressions (2MLR)~\cite{kwon2018global,klusowski2019estimating}, and two one-dimensional Laplace distributions~\cite{barazandeh2018behavior}.

All the above results (with the exception of~\cite{barazandeh2018behavior}) rely on the explicit density form and specific properties of the Gaussian distribution.  
In particular, under the Gaussian distribution, the M-step in the EM algorithm has a closed-form expression, which allows a straightforward analysis of the convergence behavior of the algorithm. However, for general log-concave distributions, the M-step no longer admits a closed-form solution, which poses significant challenges for analysis. To address this difficulty, we consider a modification of the standard EM algorithm, \textit{\vemlong} (\vem), for learning the location parameter of a mixture of two log-concave distributions. The \vem algorithm admits a simple, explicit update rule in the M-step.

As the main result of this paper, we show that for a mixture of rotation invariant log-concave distribution, \vem converges to the true location parameter from a randomly initialized point. Moreover, we provide explicit convergence rates and sample complexity bounds, which depend on the signal-to-noise ratio as well as the tail property of the distribution. As the functional form of the true density may be unknown, we further establish a robustness property of \vem when using a mis-specified density. As a special case, we show that using a Gaussian distribution, \vem globally converges to a solution close to the true parameter whenever the variance of the true log-concave density is moderate.

\paragraph{Technical Contributions} 

We generalize the sensitivity analysis in~\cite{daskalakis2016ten} to a broad class of log-concave distributions. In the process, we demonstrate that log-concavity and rotation invariance of the distribution are the only properties required to guarantee the global convergence of \vem. Moreover, our analysis highlights the fundamental role of an \emph{angle-decreasing} property in establishing the convergence of \vem to the true location parameter in the high dimension settings. Note that contraction in the $\ell_2$ distance, upon which the previous convergence results were built, no longer holds globally for general log-concave mixtures.

\paragraph{Organization}
In Section~\ref{sec:background}, we formulate the parameter estimation problem for a mixture of log-concave distributions and review related work. In Section~\ref{sec:algo}, we delineate the \vemlong algorithm and elucidate its connection with classical EM. Analysis of the global convergence of \vem is provided in Section~\ref{sec:analysis} under the population setting. Finite-sample results are presented in Section~\ref{sec:finite}, with Section~\ref{sec:misspecification} dedicated to the model mis-specification setting. The paper concludes with a discussion of future directions in Section~\ref{sec:conclusion}. Some details of the proofs are deferred to the Appendix.

\paragraph{Notations}
We use $x\in\R$ and $\mb x\in\R^d$ to denote scalars and vectors, respectively;  $X\in\R$ and $\mb X\in\R^d$ to denote scalar and vector random variables, respectively. The $i$-th coordinate of $\mb x$ (or $\mb X$) is $x_i$ (or$X_i$), and the $j$-th data point is denoted by $\mb x^j$ or  $\mb X^j$. The Euclidean norm in $\bb{R}^d$ is $\|\cdot\|_2$. For two vectors $\mb\alpha,\mb\beta \in \bb{R}^d$, we use $\angle (\mb\alpha,\mb\beta)\in(0,\pi)$ to denote the angle between them, and $\langle \mb\alpha,\mb\beta\rangle $ to denote their inner product. Finally, $\mb I_d$ is a $d$-by-$d$ identity matrix.

\section{Problem Setup}
\label{sec:background}

In this section, we set up the model for a mixture of log-concave distributions, and discuss the corresponding location estimation problem in the context of existing work.

\subsection{Data Generating Model}

Let $\mathcal{F}$ be a class of rotation invariant log-concave densities in $\mathbb{R}^d$ defined as follows:
\begin{equation}\label{eq:log-concave class}
\begin{aligned}
    \mathcal{F} = \bigg\{f: &f(\mb x) = \frac{1}{C_g}\exp\big(-g(\|\mb x\|_2)\big), \text{$g$ is convex and strictly increasing on $[0,\infty)$}, \\
    & \int f(\mb x) \dup {\mb x}= 1, \int x_i^2 f(\mb x) \dup {\mb x} = 1, \forall i \in [d] \bigg\}. 
\end{aligned}
\end{equation}
Without loss of generality, we assume $g(\mb 0)=0$.\footnote{Note that $\mb x \mapsto g(\|\mb x\|_2)$ is a convex function,  as it is the composition of a convex function and a convex increasing function. The normalization constant $C_g$ can be computed explicitly by
$
%\label{eq:high-dim-location-constant}
C_g =  C_h d v_d
$ 
with $ C_h =  \int_{0}^{\infty} t^{d-1} \exp(-g(t)) \dup t,$
where $v_d:=\frac{\pi^{d/2}}{\Gamma(d/2+1)}$ is the volume of a unit ball in $\bb{R}^d$.} 
It can be verified that each $f \in \mathcal{F}$ has mean $\mb 0$ and covariance matrix $\mb I_d$. For each $f\in \mathcal{F}$, we may generate a location-scale family consisting of the densities $f_{\mb\beta,\sigma} (\mb x): = \frac{1}{\sigma^d}f\big(\frac{\mb x-\mb\beta}{\sigma}\big)$, which has mean $\mb\beta$ and covariance matrix $\sigma^2 \mb I_d$. 

We assume that each data point $\mb X^i, i\in[n]$ is sampled independently from the distribution $D(\mb\beta^*,\sigma)$, defined as a balanced mixture of two densities from the above log-concave location-scale family:
\begin{align}
\label{eq:generative_process}
    D(\mb\beta^*,\sigma):= \frac{1}{2} f_{\mb\beta^*,\sigma} + \frac{1}{2} f_{-\mb\beta^*,\sigma}.
\end{align}
It is often useful to view this mixture model as an equivalent latent variable model: independently for each $i\in[n]$, an unobserved label $Z_i \in \left\{1,2\right\}$ is first generated according to
\begin{align*}
\mathbb{P}(Z_i=1) = \mathbb{P}(Z_i=2) = \frac{1}{2},
\end{align*}
and then the data point $\mb X^i$ is sampled from the corresponding mixture component, i.e., from $f_{\mb \beta^*,\sigma} $ if $Z_i=1$ and from $f_{-\mb \beta^*,\sigma} $ otherwise. 

Since $ \{ f_{\mb \beta,\sigma}\}  $ is a location-scale family, the above generative process can be equivalently written as
\begin{align*}
\mb X^i =
\begin{cases}
 \mb \beta^* + \sigma \mb E^i, & \text{if } Z_i = 1,\\
 -\mb \beta^* + \sigma \mb E^i, & \text{if } Z_i = 2,
\end{cases}
\end{align*}
where $ \mb E^i \overset{\text{i.i.d.}}{\sim} f $ can be viewed as the additive noise.
This equivalent representation motivates us to define the signal-to-noise ratio (SNR) 
\begin{align} \label{eq:snr}
\eta: = \frac{\|\mb \beta^*\|_2}{\sigma},
\end{align}
which is used throughout this paper.

\paragraph{Examples:} 

Below are several familiar examples of one-dimensional log-concave distributions $f \propto \exp(-g)$ from $\mathcal{F}$:
\begin{enumerate}
    \item \emph{Polynomial distributions}: $ g(x) \propto |x|^r$ with $ r\geq 1$.
   When $r=2$, it corresponds to the \emph{Gaussian} distribution. When $r=1$, it corresponds to the \emph{Laplace} distribution. 
    \item \emph{Logistic distribution}: $g(x) \propto \log(e^{-|x|/2} + e^{|x|/2})$.
\end{enumerate}
These distributions can be generalized to higher dimensional scenarios by replacing $|x|$ with $\|\mb x\|_2$. 
In Appendix~\ref{appendix-log-concave}, we provide a review of some elementary properties of log-concave distributions.

\subsection {Location Estimation and the EM Algorithm}

We assume that $\sigma$ is known, and our goal is to estimate the location parameter $\mb \beta^*$ from data $\mb  X^1,\mb X^2,\ldots, \mb X^n\in \bb{R}^d $ sampled i.i.d.\ from the mixture distribution $D(\mb\beta^*,\sigma)$ as defined in~\eqref{eq:generative_process}. We first consider this problem for a given log-concave family for which the base density $f$ (equivalently, $g$) is known. The case with an unknown $f$ is discussed in Section~\ref{sec:misspecification}.

Since the negative log-likelihood function of the mixture~\eqref{eq:generative_process} is nonconvex, computing the standard MLE for $\mb \beta^*$ involves a nonconvex optimization problem. EM is a popular iterative method for computing the MLE, consisting of an expectation (E) step and a maximization (M) step. In a standard implementation of EM, the E-step computes the conditional distribution of the labels $Z_i$ under the current estimate of $\mb\beta^*$, and the M-step computes a new estimate by maximizing the conditional log-likelihood based on the distribution obtained in the E-step. The \vem algorithm we consider, described in Section~\ref{sec:algo} to follow, is a variant of the standard EM algorithm with a modified M-step.

\subsection{Convergence of EM and Related Work}
\label{sec:related}

Despite the popularity and empirical success of the EM algorithm, our understanding of its theoretical property is far from complete. 
Due to the nonconvexity of negative log-likelihood functions, EM is only guaranteed to converge to a stationary point in general~\cite{wu1983convergence}. Quantitative convergence results only began to emerge in recent years. The work~\cite{balakrishnan2017statistical} proposed a general framework for establishing the \emph{local} convergence of EM when initialized near the true parameter, with applications to 2GMM, 2MLR, and regression with missing coefficients. Extensions to multiple components are considered in~\cite{yan2017convergence}. 

Beyond local convergence, it is known that the likelihood function of GMM may have bad local optima when there are more than two components, and EM fails to find the true parameter without a careful initialization~\cite{jin2016local}. 
Analysis of the global convergence of EM has hence been focused on the two component setting, as is done in this paper. The work in~\cite{daskalakis2016ten,xu2016global} showed that EM converges from a random initialization for 2GMM. Subsequent work in~\cite{kwon2018global,klusowski2019estimating,nagarajan2019convergence} established similar results in other settings, most of which involve Gaussian models.  An exception is \cite{barazandeh2018behavior}, which proved the global convergence of EM for a mixture of 2 Laplace distributions and derived an explicit convergence rate, but only in the one-dimensional population (infinite sample) setting. We also note that the work~\cite{chaudhuri2009kmeans} studied convergence properties of Lloyd's k-means algorithm---a close relative of EM---for Gaussian mixtures. In general, properties of EM for mixtures of other distributions are much less understood, which is the problem we target at in this paper.

The log-concave family we consider is a natural and flexible generalization of  Gaussian. This family includes many common distributions, and has broad applications in economics~\cite{an1997log,bagnoli2005log}, reliability theory~\cite{barlow1975statistical} and sampling analysis~\cite{gilks1992adaptive}; see~\cite{walther2002detecting,saumard2014log} for a further review. Existing work on estimating log-concave distributions and mixtures has mostly considered the non/semi-parametric setting~\cite{walther2002detecting,cule2010theoretical,jongbloed1998iterative,rufibach2006log,cule2010theoretical,diakonikolas2018polynomial}; these methods are flexible but typically more computational and data intensive than the parametric approach we consider. Other approaches of learning general mixtures include spectral methods~\cite{achlioptas2005spectral,sanjeev2001learning} and tensor methods~\cite{hsu2013learning,chaganty2013spectral}, and EM algorithm is often applied to the output of these methods.

\section{The \vemlong Algorithm}
\label{sec:algo}
As mentioned, the M-step in the standard EM involves maximizing the conditional log-likelihood. For GMM, the M-step is equivalent to solving a least-squares problem. While for a mixture of log-concave distributions, the M-step is equivalent to solving a convex optimization problem, and this optimization problem does not admit a closed form solution in general. This introduces complexity for both computation and analysis. 

We instead consider \vemlong (\vem), a variant of EM that solves a least-squares problem in the M-step even for non-Gaussian mixtures. To elucidate the algorithmic property, we first consider \vem in the population setting, where we have access to an infinite number of data sampled from the mixture distribution $D(\mb\beta^*,\sigma)$. The finite sample version is discussed in Section \ref{sec:finite}. 

Each iteration of the population \vem algorithm consists of the following two steps:
\begin{itemize}[leftmargin = 5mm]
\item \textbf{E-step:} Compute the conditional probabilities 
of the label $Z\in \left\{1,2\right\}$ given the current location estimate $\mb \beta$: 
\begin{align}
\label{e-step}
p^1_{\mb\beta,\sigma}(\mb X): =\frac{f_{\mb\beta,\sigma}(\mb X)}{f_{\mb\beta,\sigma}(\mb X)+f_{-\mb\beta,\sigma}(\mb X)},
\qquad
p^2_{\mb\beta,\sigma}(\mb X): =\frac{f_{-\mb\beta,\sigma}(\mb X)}{f_{\mb\beta,\sigma}(\mb X)+f_{-\mb\beta,\sigma}(\mb X)}. 
\end{align}

\item \textbf{\Variant M-step:} Update the location estimate $\mb \beta$ via weighted least squares: 
\begin{equation}\label{eq:vem}
\begin{aligned}
\mb\beta^+ = & \argmin_{\mb b} \bb{E}_{\mb X\sim D(\mb\beta^*,\sigma)} \left[p^1_{\mb\beta,\sigma}(\mb X) \|\mb X-\mb b\|_2^2 + p^2_{\mb\beta,\sigma}(\mb X ) \|\mb X + \mb b\|_2^2\right] \\
= & \mathbb{E}_{\mb X\sim D(\mb\beta^*,\sigma)} \mb X\tanh \Bigg(\frac{1}{2} g\left(\frac{1}{\sigma}\|\mb X+\mb\beta\|_2\right) - \frac{1}{2} g\left(\frac{1}{\sigma}\|\mb X-\mb\beta\|_2\right)\Bigg):=M(\mb\beta^*,\mb\beta).
\end{aligned} 
\end{equation}
\end{itemize}
In \eqref{eq:vem}, we minimize the sum of squared distances of $\mb X$ to each component's location, weighted by the conditional probability of $\mb X$ belonging to that component. One may interpret \vem as a soft version of the K-means algorithm: instead of assigning each $\mb X$ exclusively to one of the components, we assign a corresponding probability computed using the log-concave density. 
\subsection{Connection to Standard EM}

In contrast to \vem, the M-step in the standard EM algorithm involves maximizing the weighted log-likelihood function (or minimizing the weighted negative log-likelihood function):
\begin{align}
&\textbf{Standard M-step:} \nonumber \\
&\qquad \argmax_{\mb b} \; Q(\mb b \mid \mb\beta ) := \bb{E}_{\mb X\sim D(\mb\beta^*,\sigma) } \left[ p^1_{\mb\beta,\sigma}(\mb X) \log f_{\mb b,\sigma}(\mb X) + p^2_{\mb\beta,\sigma}(\mb X) \log f_{-\mb b,\sigma}(\mb X)\right]. \label{m-step}
\end{align}

The standard EM iteration, consisting of~\eqref{e-step} and~\eqref{m-step}, corresponds to a minorization-maximization procedure for finding the MLE under the statistical setting~\eqref{eq:generative_process}. In particular, the function $Q(\cdot \mid \mb\beta)$ above is a lower bound of the (marginal) log-likelihood function of~\eqref{eq:generative_process}, and the standard M-step~\eqref{m-step} finds the maximizer of this lower bound. In general, this maximization can only be solved approximately. For example, the ``gradient EM'' algorithm considered in~\cite{balakrishnan2017statistical} performs one gradient ascent step on the $Q(\cdot \mid \mb\beta)$ function. 

The \variant M-step~\eqref{eq:vem} admits an explicit update. Moreover, it may also be viewed as an approximation to the standard M-step~\eqref{m-step}, as we observe numerically (see Appendix \ref{appendix:numerical-approxm}) that the \vem update $\mb\beta^+$ satisfies
\begin{align}
Q(\mb\beta^+\mid \mb\beta) > Q(\mb\beta \mid \mb\beta ) \quad \text{if } \mb\beta\neq \mb\beta^*.
\end{align}
This observation indicates that the \variant M-step finds an improved solution $\mb\beta^+$ (compared to the previous iterate $\mb\beta$) for function $Q(\cdot \mid \mb\beta).$

\section{Analysis of \vemlong}
\label{sec:analysis}
In this section, we analyze the convergence behavior of the \vem update \eqref{eq:vem} in the population setting. We first consider the one dimensional case ($d=1$) in Section~\ref{sec:analysis-d=1} and establish the global convergence of \vem, extending the techniques in \cite{daskalakis2016ten} for 2GMM to log-concave mixtures. 
In Section~\ref{sec:analysis:d>1}, we prove global convergence in the multi-dimensional case ($d>1$). In this setting, the \vem update is \emph{not}  contractive in $\ell_2$, so the analysis requires the new ingredient of an angle decreasing property. 

For convenience, we introduce the shorthand $F_{\mb\beta,\sigma}(\mb X) := g\left(\frac{1}{\sigma}\|\mb X+\mb\beta\|_2\right)-g\left(\frac{1}{\sigma}\|\mb X-\mb\beta\|_2\right)$; when $\sigma=1$, we simply write $F_{\mb\beta} \equiv F_{\mb\beta, 1}$. Since the integrand in~\eqref{eq:vem} is an even function of $\mb X$, the update~\eqref{eq:vem} can be simplified to an equivalent form by integrating over one component of the mixture:
\begin{align}
\label{eq:one_comp_variant_update}
\mb\beta^+ = M(\mb\beta^*,\mb\beta) = \bb{E}_{\mb X\sim f_{\mb\beta^*,\sigma}} \mb X \tanh\big(0.5F_{\mb\beta,\sigma}(\mb X)\big) .
\end{align}

Throughout the section, we refer to the technical conditions permitting the interchange of differentiation and integration as the \emph{regularity condition}. This condition is usually satisfied by log-concave distributions --- a detailed discussion is provided in Appendix \ref{appendix-subsec:regularity}. 

\subsection{One Dimensional Case ($d=1$)}
\label{sec:analysis-d=1}

For one dimensional log-concave mixtures, the behavior of \vem is similar to that of EM algorithm for 2GMM: there exist only 3 fixed points, $0$, $\beta^*$, and $-\beta^*$, among which $0$ is non-attractive. Consequently, \vem converges to the true parameter ($\beta^*$ or $-\beta^*$) from any non-zero initial solution $\beta^0$. This is established in the following theorem.
\begin{restatable}[Global Convergence, 1D]{theorem}{onedglobal}
\label{thm:Global-Convergence-1D-VEM}
Suppose that $f\in \mathcal{F}$ satisfies the regularity condition. The \vem update \eqref{eq:vem}, $\beta \mapsto M(\beta^{*},\beta)$, has exactly three fixed points: $0$, $\beta^{*}$ and $-\beta^{*}$. Moreover, the following one-step bound holds:
\[
|M(\beta^{*},\beta)-\textup{sign}(\beta\beta^*)\beta^{*}|\leq \kappa(\beta^{*},\beta,\sigma ) \cdot \big| \beta-\textup{sign}(\beta\beta^*)\beta^{*} \big|,
\]
where the contraction factor
\[
\kappa(\beta^{*},\beta,\sigma ) := \mathbb{E}_{X\sim f_{\min(|\beta|,|\beta^{*}|),\sigma}} \left[1-\tanh\left(0.5F_{\min \left(|\beta|,|\beta^*|\right),\sigma}(X)\right)\right]
\]
satisfies $0<\kappa(\beta^{*},\beta,\sigma )<1$ when $\beta\not\in \{0,\beta^*,-\beta^*\}$. 
\end{restatable}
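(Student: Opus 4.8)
The plan is to normalize the problem, prove the quantitative one-step bound by a differentiation-under-the-integral argument, and then read off the fixed-point count from the bound.

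\emph{Reductions.} By the scale equivariance $M(\beta^{*},\beta;\sigma)=\sigma\,M(\beta^{*}/\sigma,\beta/\sigma;1)$ and the symmetries $M(-\beta^{*},\beta)=M(\beta^{*},\beta)$ and $M(\beta^{*},-\beta)=-M(\beta^{*},\beta)$ --- both immediate from $F_{-\beta,\sigma}=-F_{\beta,\sigma}$ and the evenness of $f$ --- I may assume $\sigma=1$ and $a:=\beta^{*}>0$, $b:=\beta>0$. Write $f_b:=f_{b,1}$, $F_b:=F_{b,1}$, and $\phi_b(x):=\tanh(\tfrac12 F_b(x))$, so that \eqref{eq:one_comp_variant_update} reads $M(a,b)=\mathbb{E}_{X\sim f_a}[X\phi_b(X)]$. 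I would first record several elementary consequences of $g$ being convex and strictly increasing with $f$ even: (i) $F_b$, hence $\phi_b$, is odd in $x$ and strictly increasing in $x$ on $\mathbb{R}$, so $\phi_b'>0$ a.e.\ and therefore $x\phi_b'(x)\ge 0$ for all $x$; (ii) for $x>0$, $b\mapsto F_b(x)$ is strictly increasing and $\partial_b F_b$ is odd in $x$, so $x\,\partial_bF_b(x)\ge 0$, whence differentiating under the integral gives $\partial_b M(a,b)=\mathbb{E}_{X\sim f_a}[X\,\partial_b\phi_b(X)]>0$, i.e.\ $b\mapsto M(a,b)$ is strictly increasing; (iii) $M(t,t)=t$ for every $t$, obtained by writing $\phi_t=p^1_{t,1}-p^2_{t,1}$ and computing $\mathbb{E}_{X\sim D(t,1)}[X p^1_{t,1}(X)]=\tfrac12\int x f_{t,1}(x)\,dx=\tfrac t2$ (symmetrically for $p^2_{t,1}$). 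Together with $M(a,0)=0$ this exhibits $0,\pm\beta^{*}$ as fixed points.

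\emph{The one-step bound.} Put $c:=\min(a,b)$ and $\kappa:=\mathbb{E}_{X\sim f_c}[1-\phi_c(X)]$. Using $\mathbb{E}_{X\sim f_t}[X]=t$ together with (iii),
\[
a-M(a,b)=\mathbb{E}_{X\sim f_a}\big[X(1-\phi_b(X))\big],\qquad \mathbb{E}_{X\sim f_t}\big[X(1-\phi_t(X))\big]=0\quad\forall\, t>0.
\]
The idea is to interpolate in the base point: define $\rho(s):=\mathbb{E}_{X\sim f_s}[X(1-\phi_b(X))]$, so that $\rho(a)=a-M(a,b)$ and $\rho(b)=0$. Differentiating in $s$ and integrating by parts (the boundary terms vanish by the exponential tail decay of log-concave densities) yields $\rho'(s)=\mathbb{E}_{X\sim f_s}[(1-\phi_b(X))-X\phi_b'(X)]\le \mathbb{E}_{X\sim f_s}[1-\phi_b(X)]$ by (i). Two monotonicity facts, each proved by the same differentiate-then-integrate-by-parts device, control the last expectation: $s\mapsto\mathbb{E}_{X\sim f_s}[\phi_b(X)]$ is increasing (its derivative equals $\mathbb{E}_{X\sim f_s}[\phi_b'(X)]>0$), and $b\mapsto\mathbb{E}_{X\sim f_a}[\phi_b(X)]$ is increasing (its derivative equals $\int_0^\infty\partial_b\phi_b(x)\,(f(x-a)-f(x+a))\,dx>0$, using that $\partial_b\phi_b$ is odd and $f$ is strictly decreasing in $|x|$). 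If $b\le a$ (so $c=b$), the first fact gives $\rho'(s)\le\mathbb{E}_{X\sim f_b}[1-\phi_b(X)]=\kappa$ for every $s\ge b$, hence $a-M(a,b)=\int_b^a\rho'(s)\,ds\le\kappa(a-b)$; combined with $0<M(a,b)<a$ from (ii) this is exactly $|M(a,b)-a|\le\kappa|b-a|$. If $b>a$ (so $c=a$), then $\rho(b)=0$ gives $M(a,b)-a=-\rho(a)=\int_a^b\rho'(s)\,ds$, and for $s\in[a,b]$ the two monotonicity facts yield $\mathbb{E}_{X\sim f_s}[1-\phi_b(X)]\le\mathbb{E}_{X\sim f_a}[1-\phi_b(X)]\le\mathbb{E}_{X\sim f_a}[1-\phi_a(X)]=\kappa$, so $M(a,b)-a\le\kappa(b-a)$; with $M(a,b)>a$ from (ii) this again gives the claim.

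\emph{Bounds on $\kappa$ and fixed points.} When $\beta\notin\{0,\pm\beta^{*}\}$ we have $c=\min(|\beta|,|\beta^{*}|)>0$; then $\kappa>0$ because $\phi_c<1$ pointwise, and $\kappa<1$ because $\mathbb{E}_{X\sim f_c}[\phi_c(X)]=\int_0^\infty\phi_c(x)\big(f(x-c)-f(x+c)\big)\,dx>0$, using $\phi_c(x)>0$ and $f(x-c)>f(x+c)$ for $x>0$. Finally, the one-step bound forces $\beta$ to be a fixed point only when $\beta\in\{0,\pm\beta^{*}\}$ (otherwise $|\beta-\text{sign}(\beta\beta^*)\beta^{*}|=|M(\beta^{*},\beta)-\text{sign}(\beta\beta^*)\beta^{*}|\le\kappa\,|\beta-\text{sign}(\beta\beta^*)\beta^{*}|$ with $\kappa<1$ is a contradiction), so there are exactly three. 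The step I expect to be the crux is the interpolation identity together with the observation that integration by parts converts $\rho'(s)$ into $\mathbb{E}_{X\sim f_s}[1-\phi_b(X)]$ minus the manifestly nonnegative term $\mathbb{E}_{X\sim f_s}[X\phi_b'(X)]$; this, with the monotonicity facts, is precisely what makes the contraction factor the quantity $\kappa$ evaluated at the \emph{smaller} of $\beta$ and $\beta^{*}$, and the case split between $b\le\beta^{*}$ and $b>\beta^{*}$ is what produces the $\min$.
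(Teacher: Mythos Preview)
Your overall strategy is essentially the same as the paper's: establish the self-consistency $M(t,t)=t$, differentiate in the base point of the density, split the derivative into a ``$T_1$'' part $\mathbb{E}_{X\sim f_s}[1-\phi_b(X)]$ and a ``$T_2$'' part $\mathbb{E}_{X\sim f_s}[X\phi_b'(X)]$, drop $T_2$ as nonnegative, and then use monotonicity of $T_1$ in both $s$ and $b$ to land on the contraction factor evaluated at $\min(|\beta|,|\beta^*|)$. Your interpolation $\rho(s)$ with the fundamental theorem of calculus is just the integral form of the paper's mean-value-theorem argument, and your two monotonicity facts are exactly the content of the paper's Lemma~\ref{lem:derivative-opt-param-1}.

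There is, however, one genuine slip in your item (i). From ``$\phi_b$ is odd and increasing, so $\phi_b'>0$ a.e.'' you conclude ``therefore $x\phi_b'(x)\ge 0$ for all $x$.'' This is false: $\phi_b'$ is \emph{even} (derivative of an odd function), hence nonnegative everywhere, so $x\phi_b'(x)\le 0$ for $x<0$. What you actually need, and what is true, is the \emph{expectation} inequality $\mathbb{E}_{X\sim f_s}[X\phi_b'(X)]\ge 0$ for $s>0$. The correct justification (this is the paper's ``positive integral'' Lemma~\ref{lem:positive_integral}) is that $x\phi_b'(x)$ is an odd function which is nonnegative on $[0,\infty)$, so
\[
\mathbb{E}_{X\sim f_s}[X\phi_b'(X)]=\int_0^\infty x\phi_b'(x)\big(f(x-s)-f(x+s)\big)\,dx\ge 0,
\]
using $f(x-s)\ge f(x+s)$ for $x,s>0$. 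Once you replace the pointwise claim with this symmetry argument, your proof goes through and coincides with the paper's. (Minor side remark: ``strictly increasing in $x$'' is a bit too strong for $F_b$ in general---e.g.\ Laplace gives $F_b'(x)=0$ for $|x|>b$---but only $\phi_b'\ge 0$ with strict positivity on a set of positive measure is needed, and that holds.)
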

We prove this theorem in Appendix~\ref{subsec:proof-thm-Global-Convergence-1D-VEM}.
The crucial property used in the proof is the {\em self-consistency} of the \vem update \eqref{eq:vem}, namely $M(\beta,\beta)=\beta$ for all $\beta$. This property allows us to extend the sensitivity analysis technique for 2GMM to general log-concave distributions.

It can be further shown that the contraction factor $\kappa(\beta^{*},\beta,\sigma)$ becomes smaller as the iterate approaches the true $\mb \beta^*$ (see Lemma~\ref{lem:derivative-opt-param-1}). We thus obtain the following corollary on global convergence at a geometric rate. Without loss of generality, we assume $\beta^* >0$.
\begin{corollary}[$t$-step Convergence Rate, 1D]
\label{cor:tstep_convergence}
Suppose that $f\in \mathcal{F}$ satisfies the regularity condition. Let $\beta^{t}$ denote the output of \vem after $t$ iterations, starting from $\beta^{0}\neq 0$.
The following holds:
\[
|\beta^{t}-\textnormal{sign}(\beta^0\beta^{*})\beta^{*}|
\leq \kappa(\beta^{*},\beta^{0},\sigma)^{t} \cdot \big|\beta^{0}-\textnormal{sign}(\beta^0 \beta^{*})\beta^{*}\big|.
\]
If  $\beta^0$ is in $(0,0.5\beta^*)$ or $(1.5\beta^*,\infty)$, running \vem for $O\left({\log \frac{0.5\beta^*}{|\beta^0-\beta^*|}}/{\log \kappa (\beta^*,\beta^0,\sigma)}\right)$ iterations outputs a solution in $(0.5\beta^*,1.5\beta^*)$. In addition, if $\beta^0$ is in $ (0.5\beta^*,1.5\beta^*)$, running \vem for $O\left(C_f(\eta)\log(1/\epsilon)\right)$
iterations outputs an $\epsilon$-close estimate of $\beta^*$, where $C_f(\eta)>0$ is a constant depending only on $f$ and the SNR $\eta$.
\end{corollary}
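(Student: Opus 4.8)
The plan is to turn the one-step estimate of Theorem~\ref{thm:Global-Convergence-1D-VEM} into a $t$-step estimate by iteration, the only real work being to keep the per-iteration contraction factor under control. Since $F_{-\beta,\sigma}=-F_{\beta,\sigma}$, the map $\beta\mapsto M(\beta^*,\beta)$ is odd, so (using $\beta^*>0$) it suffices to treat $\beta^0>0$, for which $\textnormal{sign}(\beta^0\beta^*)\beta^*=\beta^*$ and the negative case is the mirror image. First I would record the structural behaviour of $M(\beta^*,\cdot)$ on $(0,\infty)$: it is strictly positive there, so $\beta^t>0$ for all $t$ (sign preservation); and it is monotonically increasing, because differentiating under the integral in~\eqref{eq:one_comp_variant_update} and using that $g'$ is nondecreasing shows $\partial_\beta F_{\beta,\sigma}(X)$ has the same sign as $X$, hence $\partial_\beta M(\beta^*,\beta)=\E_{X\sim f_{\beta^*,\sigma}}\!\big[X\,\partial_\beta\tanh(0.5F_{\beta,\sigma}(X))\big]\ge 0$. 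Together with $M(\beta^*,\beta^*)=\beta^*$ and the one-step bound, monotonicity forces the iterates to approach $\beta^*$ without crossing it: if $\beta^0\in(0,\beta^*)$ then $\beta^t\uparrow\beta^*$ with $\beta^t<\beta^*$ for all $t$, and if $\beta^0\ge\beta^*$ then $\beta^t\downarrow\beta^*$ with $\beta^t\ge\beta^*$ for all $t$. In particular $\min(\beta^t,\beta^*)$ is nondecreasing and $|\beta^t-\beta^*|$ is nonincreasing in $t$.

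For the $t$-step bound, observe that $\kappa(\beta^*,\beta,\sigma)$ depends on $\beta$ only through $m:=\min(|\beta|,|\beta^*|)$, and by Lemma~\ref{lem:derivative-opt-param-1} it is nonincreasing in $m$. Since $\min(\beta^t,\beta^*)$ is nondecreasing along the trajectory, $\kappa(\beta^*,\beta^t,\sigma)\le\kappa(\beta^*,\beta^0,\sigma)$ for every $t\ge 0$. Applying Theorem~\ref{thm:Global-Convergence-1D-VEM} at each step and multiplying the bounds,
\[
|\beta^t-\beta^*|\;\le\;\Big(\textstyle\prod_{i=0}^{t-1}\kappa(\beta^*,\beta^i,\sigma)\Big)\,|\beta^0-\beta^*|\;\le\;\kappa(\beta^*,\beta^0,\sigma)^t\,|\beta^0-\beta^*|,
\]
which is the claimed inequality.

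The two iteration-count claims then follow by elementary algebra. If $\beta^0\in(0,0.5\beta^*)\cup(1.5\beta^*,\infty)$ then $|\beta^0-\beta^*|>0.5\beta^*$ and $\kappa_0:=\kappa(\beta^*,\beta^0,\sigma)<1$, so the displayed bound gives $|\beta^t-\beta^*|<0.5\beta^*$ --- equivalently $\beta^t\in(0.5\beta^*,1.5\beta^*)$ --- once $\kappa_0^t|\beta^0-\beta^*|<0.5\beta^*$, i.e.\ after $O\big(\log\tfrac{0.5\beta^*}{|\beta^0-\beta^*|}\,/\,\log\kappa_0\big)$ steps (numerator and denominator are both negative, so the ratio is positive). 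If instead $\beta^0\in(0.5\beta^*,1.5\beta^*)$ then $m_0=\min(|\beta^0|,|\beta^*|)>0.5\beta^*$, so Lemma~\ref{lem:derivative-opt-param-1} gives $\kappa_0\le\bar\kappa:=\kappa(\beta^*,0.5\beta^*,\sigma)$; the substitution $X=\sigma Y$ shows $\bar\kappa$ depends only on $f$ and $\eta=\beta^*/\sigma$, and Theorem~\ref{thm:Global-Convergence-1D-VEM} gives $\bar\kappa<1$. Then $|\beta^t-\beta^*|\le\bar\kappa^t|\beta^0-\beta^*|\le\bar\kappa^t\cdot 0.5\beta^*$, so $O\big(\log(1/\epsilon)/\log(1/\bar\kappa)\big)=O\big(C_f(\eta)\log(1/\epsilon)\big)$ steps suffice for an $\epsilon$-close (relative) estimate, with $C_f(\eta):=1/\log(1/\bar\kappa)$.

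The main obstacle is the non-overshooting, monotone convergence of the iterates asserted in the first paragraph --- it is precisely what makes $\kappa(\beta^*,\beta^t,\sigma)$ nonincreasing along the trajectory. Without it, an iterate started above $\beta^*$ could undershoot far below (even near $0$, where $\kappa$ tends to $1$), and geometric convergence at rate $\kappa_0$ would fail. Establishing it rests on the monotonicity of $M(\beta^*,\cdot)$, which in turn requires interchanging differentiation and integration --- legitimate here under the regularity condition --- together with convexity of $g$ and the fixed-point structure and one-step bound supplied by Theorem~\ref{thm:Global-Convergence-1D-VEM}. Everything else is bookkeeping around Theorem~\ref{thm:Global-Convergence-1D-VEM} and Lemma~\ref{lem:derivative-opt-param-1}.
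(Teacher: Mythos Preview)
Your proposal is correct and follows the same route the paper indicates (one sentence before the corollary, citing Lemma~\ref{lem:derivative-opt-param-1}): iterate the one-step bound of Theorem~\ref{thm:Global-Convergence-1D-VEM}, using that $\kappa(\beta^*,\beta,\sigma)$ depends only on $\min(|\beta|,|\beta^*|)$ and is nonincreasing in it. The paper does not spell out a proof of the corollary beyond that remark.

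Where you go further than the paper is in justifying \emph{why} $\min(\beta^t,\beta^*)$ is nondecreasing along the trajectory, via the monotonicity of $\beta\mapsto M(\beta^*,\beta)$ and the resulting non-overshooting of the iterates. This is a genuine ingredient: for $\beta^0>\beta^*$ the one-step bound alone does not prevent $\beta^1$ from landing far below $\beta^*$ (where $\kappa$ would be close to $1$), so monotonicity is exactly what closes the gap. The paper establishes this monotonicity only later, in the high-dimensional analysis (Lemma~\ref{lem:increasing_wrt_beta}); your direct 1D computation via $\partial_\beta F_{\beta,\sigma}$ is the same idea and is valid under the regularity condition. So your write-up is essentially the intended argument, made rigorous.
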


\paragraph{Special cases}
We provide explicit convergence rates for mixtures of some common log-concave distributions. Again, we assume $\beta^* >0$ and $\beta \ge 0$ without loss of generality, and set $z:=\min(\beta,\beta^*)$.
\begin{itemize}[leftmargin = 8mm]
\item
Gaussian: $\kappa(\beta^*,\beta ,\sigma)\leq\exp\big(-{z^{2}}/{2\sigma^{2}}\big)$ and $C_f(\eta)=\max\big(1,\frac{1}{\eta^{2}}\big).$
\item 
Laplace: $\kappa(\beta^{*},\beta,\sigma)\leq\frac{2\exp(-\frac{\sqrt{2}}{\sigma}z)}{1+\exp(-2\frac{\sqrt{2}}{\sigma}z)}$ and $C_f(\eta)=\max\big(1,\frac{1}{\eta}\big).$
\item 
Logistic: $\kappa(\beta^{*},\beta,\sigma)\leq\frac{4\exp(-\frac{\pi z}{\sigma\sqrt{3}})}{1+\exp(-\frac{2\pi z}{\sigma\sqrt{3}})+2\exp(-\frac{\pi z}{\sigma\sqrt{3}})}$ and  $C_f(\eta)=\max\big(1,\frac{1}{\eta}\big)$.
\end{itemize}

See Appendix~\ref{subsec:concrete-convergence-rate} for the proofs of the above results. Note that the convergence rate depends on the signal-to-noise ratio $\eta$ as well as the asymptotic growth rate $\gamma \equiv \gamma_f$ of the log-density function $g = -\log f$. In the above examples, $\kappa(\beta^*,\beta,\sigma) \approx \exp \left(-c ({\min(\beta^*,\beta)}/{\sigma)^{\gamma}}\right)$, where $\gamma=1$ for Laplace and Logistic distributions, and $\gamma=2$ for Gaussian distribution. 

\subsection{High Dimensional Case ($d>1$)}
\label{sec:analysis:d>1}

Extension to higher dimensions is more challenging for log-concave mixtures than for Gaussian mixtures. Unlike Gaussian, a log-concave distribution with diagonal covariance may not have independent coordinates. A more severe challenge arises because \emph{\vem is not contractive in $\ell_2$ distance}  to the true parameter for general log-concave mixtures. This phenomenon, proved in the lemma below, stands in sharp contrast to the Gaussian mixture problem.

\begin{restatable}[Non-contraction in $\ell_2$]{lemma}{noncontractelltwo}
\label{lem: non-contraction}
Consider a log-concave density of the form $g(\mb x) \propto \|\mb x\|_2^r$ with $r \ge 1$. When $r\in [1,2]$, $\mb 0$ is the only fixed point of \vem in the direction ortoghonal to $\mb \beta^*$. When $r\in (2,\infty)$, there exists a fixed point other than $\mb 0$ in the orthogonal direction. Consequently, when $r>2$, there exists $\mb\beta$ such that $\|M(\mb\beta^{*},\mb\beta)-\mb\beta^*\|_2 > \|\mb\beta - \mb\beta^*\|_2$.
\end{restatable}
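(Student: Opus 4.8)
The plan is to reduce the whole claim to a one-dimensional fixed-point equation along the direction orthogonal to $\mb\beta^*$. By rotation invariance of $f$ and the symmetry of the update~\eqref{eq:one_comp_variant_update}, I would first argue that if $\mb\beta$ is orthogonal to $\mb\beta^*$, then $M(\mb\beta^*,\mb\beta)$ is also a scalar multiple of $\mb\beta/\|\mb\beta\|_2$: writing $\mb X = X_\parallel \mb u + \mb X_\perp$ where $\mb u = \mb\beta/\|\mb\beta\|_2$, the factor $F_{\mb\beta,\sigma}(\mb X) = g(\frac1\sigma\|\mb X+\mb\beta\|_2) - g(\frac1\sigma\|\mb X-\mb\beta\|_2)$ depends on $\mb X$ only through $X_\parallel$ (in the coordinate along $\mb u$) and $\|\mb X_\perp\|_2$, and is odd in $X_\parallel$ once we center along $\mb u$ appropriately; meanwhile $\mb X\sim f_{\mb\beta^*,\sigma}$ has $\mb\beta^* \perp \mb u$, so $X_\parallel$ has a symmetric (even) marginal. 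Hence only the $\mb u$-component of $M$ survives, and it equals $h(b) := \E_{X\sim f_{0,\sigma}^{(1)}}\big[X\tanh\big(\tfrac12(g(\tfrac1\sigma\sqrt{(X+b)^2 + R^2}) - g(\tfrac1\sigma\sqrt{(X-b)^2+R^2}))\big)\big]$ for $b = \|\mb\beta\|_2$, suitably averaged over the orthogonal radius $R$; the key point is that finding a nonzero orthogonal fixed point amounts to solving $h(b) = b$ for some $b > 0$.

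Next I would analyze $h$ near $b=0$ and for large $b$. Since $h(0)=0$ and $h$ is smooth under the regularity condition, the existence of a nonzero fixed point hinges on $h'(0)$: if $h'(0) > 1$, then $h(b) > b$ for small $b>0$, and combined with a growth bound showing $h(b) < b$ for $b$ large (which follows because $\tanh \le 1$ and, more carefully, because the $X$-weighted expectation of a bounded odd-type quantity cannot grow linearly — essentially $h(b) = o(b)$ or at least $h(b)\le C$ as $b\to\infty$), the intermediate value theorem yields a fixed point in $(0,\infty)$. Conversely if $h'(0) < 1$, a monotonicity/convexity argument on $h$ (again using log-concavity of $g$, which controls the sign of the relevant second-derivative terms) should force $h(b) < b$ for all $b>0$, so $\mb 0$ is the only orthogonal fixed point. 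The crux is therefore computing $h'(0)$ for $g(t) = c_r t^r$ and showing the threshold is exactly $r=2$: differentiating under the integral, $h'(0)$ reduces to $\tfrac{1}{2\sigma}\E[X \cdot \partial_b F_{b}|_{b=0} \cdot X / \ldots]$; because $\partial_b\big(g(\tfrac1\sigma\|\mb X+b\mb u\|) - g(\tfrac1\sigma\|\mb X - b\mb u\|)\big)\big|_{b=0}$ is proportional to $X_\parallel$ times $g'(\|\mb X\|_2/\sigma)/\|\mb X\|_2$, one gets $h'(0) \propto \E\big[X_\parallel^2 \, g'(\|\mb X\|_2/\sigma)/\|\mb X\|_2\big]$, and for the polynomial family this expectation, after using the variance normalization $\E X_i^2 = 1$, is an explicit increasing function of $r$ crossing $1$ precisely at $r=2$ (where it recovers the known Gaussian computation that $\mb 0$ is a degenerate/neutral orthogonal fixed point).

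Finally, given a nonzero orthogonal fixed point $\mb\beta_0$ with $\|\mb\beta_0\|_2 = b_0 > 0$, I take $\mb\beta = \mb\beta_0$ directly: then $M(\mb\beta^*,\mb\beta_0) = \mb\beta_0$, so $\|M(\mb\beta^*,\mb\beta_0) - \mb\beta^*\|_2 = \sqrt{b_0^2 + \|\mb\beta^*\|_2^2} = \|\mb\beta_0 - \mb\beta^*\|_2$, which is an equality, not a strict inequality. To get the strict statement I would instead perturb: take $\mb\beta$ slightly past $\mb\beta_0$ along the orthogonal direction (or slightly tilted toward $\mb\beta^*$) and use $h'(b_0) \le 1$ together with a continuity/contraction-in-the-wrong-direction argument — since $h$ crosses the diagonal from above to below at $b_0$, for $b$ just below $b_0$ we have $h(b) > b$, meaning the orthogonal component moves \emph{away} from $0$ hence the distance to $\mb\beta^*$ (which has zero orthogonal component) strictly increases. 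I expect the main obstacle to be the large-$b$ behavior of $h$ and the global monotonicity claim in the $r\le 2$ case: establishing $h(b)<b$ for all $b$ (not just small $b$) rigorously requires controlling the sign of $h(b)-b$ uniformly, which I would handle by writing $h(b) - b = \E[\,X\tanh(\tfrac12 F_b(\mb X)) - X\tanh(\tfrac12 F_b(\mb X + b\mb u - \ldots))\,]$-type self-consistency manipulations analogous to the 1D proof of Theorem~\ref{thm:Global-Convergence-1D-VEM}, or alternatively by a direct convexity argument on $b\mapsto h(b)$ exploiting that $g$ convex increasing makes the $\tanh$ argument concave in the relevant variable.
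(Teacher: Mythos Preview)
Your overall strategy matches the paper's: reduce to a one-dimensional map $h(b)$ along the orthogonal direction, study $h'(0)$ to determine the fixed-point structure, and then pick $\mb\beta$ orthogonal with $\|\mb\beta\|_2$ below the nonzero fixed point so that $h(b)>b$ forces the orthogonal component to grow and hence $\|M(\mb\beta^*,\mb\beta)-\mb\beta^*\|_2>\|\mb\beta-\mb\beta^*\|_2$. The large-$b$ bound $h(b)\le \E|X_1|<\infty$ (so eventually $h(b)<b$) together with $h'(0)>1$ and the intermediate value theorem is exactly how existence goes through.

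The gap is at the crucial step: your claim that $h'(0)$, which you correctly identify as $\E_{\mb X\sim f_{\mb\beta^*}}\big[X_1\,\partial_{x_1} g(\|\mb X\|_2)\big]$, ``is an explicit increasing function of $r$ crossing $1$ precisely at $r=2$'' is not justified, and it is not clear how to prove it by varying $r$ directly --- the normalization constant, the density $f_{\mb\beta^*}$, and the integrand all depend on $r$ simultaneously. The paper sidesteps this entirely by differentiating in $\|\mb\beta^*\|_2$ instead of in $r$. The key observation is that at $\|\mb\beta^*\|_2=0$ one has $h'(0)=\E_{\mb X\sim f}[X_1\,\partial_{x_1} g(\|\mb X\|_2)]=1$ for \emph{every} $g$ (this is just integration by parts: $-\int x_1\,\partial_{x_1} f=\int f=1$). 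Then
\[
\frac{\partial}{\partial \|\mb\beta^*\|_2}\,h'(0)
=\E_{\mb X\sim f_{\mb\beta^*}}\Big[X_1\,\partial^2_{x_1 x_2} g(\|\mb X\|_2)\Big],
\]
and for $g\propto\|\mb x\|_2^r$ the mixed partial is $r(r-2)\,x_1 x_2\|\mb x\|_2^{r-4}$, whose sign is exactly $\operatorname{sign}(r-2)$; the positive-integral lemma (Lemma~\ref{lem:positive_integral}) then gives $h'(0)>1$ for $r>2$ and $h'(0)<1$ for $r<2$. This is the idea you are missing. Your treatment of the $r\le 2$ uniqueness case is also only heuristic, but the paper's own proof is equally terse there, so I would not count that against you.
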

We prove Lemma~\ref{lem: non-contraction} in Appendix~\ref{subsec:non-contraction}. The lemma shows that it is fundamentally impossible to prove global convergence of \vem solely based on $\ell_2$ distance, which was the approach taken in~\cite{daskalakis2016ten} for Gaussian mixtures.

Despite the above challenges, we show \emph{affirmatively} that \vem converges globally to $\pm \mb\beta^*$ for mixtures of rotation-invariant log-concave distributions, as long as the initial iterate is not orthogonal to~$\mb\beta^*$ (a measure zero set).

As the first step, we use rotation invariance to show that the \vem iterates stay in a two-dimensional space. The is done in the following lemma, with proof in Appendix~\ref{subsec:proof-two-dim-structure}.

\begin{restatable}[\vem is 2-Dimensional]{lemma}{twodimstructure}
\label{lem:two-dim-structure}
The \vem update satisfies: $M(\mb\beta^{*},\mb\beta) \in \text{span}(\mb\beta, \mb\beta^{*})$.
Moreover, if $\angle (\mb\beta, \mb\beta^{*})=0$
or $\angle (\mb\beta, \mb\beta^{*})=\pi/2$, then $M(\mb\beta^{*},\mb\beta) \in \text{span}(\mb\beta)$.
\end{restatable}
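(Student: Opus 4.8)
The plan is to exploit rotation invariance of the base density $f$ through well-chosen orthogonal changes of variables (reflections) inside the integral defining $M(\mb\beta^{*},\mb\beta)$. Concretely, for any orthogonal map $R$ that fixes both $\mb\beta$ and $\mb\beta^{*}$ we will show that $R$ also fixes $M(\mb\beta^{*},\mb\beta)$, and similarly for a reflection that swaps $\pm\mb\beta^{*}$ in the orthogonal case; the two-dimensional and one-dimensional conclusions then follow by elementary linear algebra.

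\textbf{First statement.} Let $P:=\text{span}(\mb\beta,\mb\beta^{*})$; if $P^{\perp}=\{\mb 0\}$ the claim is trivial, so take a unit vector $\mb u\in P^{\perp}$ and let $R_{\mb u}\mb x:=\mb x-2\langle \mb x,\mb u\rangle \mb u$ be the reflection through the hyperplane orthogonal to $\mb u$. Then $R_{\mb u}$ is orthogonal, $|\det R_{\mb u}|=1$, and $R_{\mb u}\mb\beta=\mb\beta$, $R_{\mb u}\mb\beta^{*}=\mb\beta^{*}$. Substituting $\mb X=R_{\mb u}\mb Y$ in the one-component form \eqref{eq:one_comp_variant_update} and using the rotation invariance of $f$ to get $f_{\mb\beta^{*},\sigma}(R_{\mb u}\mb Y)=f_{\mb\beta^{*},\sigma}(\mb Y)$ (since $R_{\mb u}$ fixes $\mb\beta^{*}$) and $F_{\mb\beta,\sigma}(R_{\mb u}\mb Y)=F_{\mb\beta,\sigma}(\mb Y)$ (since $R_{\mb u}$ fixes $\mb\beta$), we obtain $M(\mb\beta^{*},\mb\beta)=R_{\mb u}M(\mb\beta^{*},\mb\beta)$, hence $\langle M(\mb\beta^{*},\mb\beta),\mb u\rangle=0$. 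As $\mb u\in P^{\perp}$ was arbitrary, $M(\mb\beta^{*},\mb\beta)\in P=\text{span}(\mb\beta,\mb\beta^{*})$.

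\textbf{Second statement.} If $\angle(\mb\beta,\mb\beta^{*})=0$ then $\text{span}(\mb\beta,\mb\beta^{*})=\text{span}(\mb\beta)$, so this case is immediate from the first statement. If $\angle(\mb\beta,\mb\beta^{*})=\pi/2$, repeat the argument with the reflection $R$ through the hyperplane orthogonal to $\mb\beta^{*}$ (normalizing $\mb\beta^{*}$). This $R$ maps $\mb\beta^{*}\mapsto-\mb\beta^{*}$, hence interchanges $f_{\mb\beta^{*},\sigma}\leftrightarrow f_{-\mb\beta^{*},\sigma}$ and leaves the balanced mixture $D(\mb\beta^{*},\sigma)$ invariant; since $\mb\beta\perp\mb\beta^{*}$ it fixes $\mb\beta$, so $F_{\mb\beta,\sigma}$ is again invariant. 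Working from the mixture form of $M$ in \eqref{eq:vem} and substituting $\mb X=R\mb Y$ gives $M(\mb\beta^{*},\mb\beta)=RM(\mb\beta^{*},\mb\beta)$, so the $\mb\beta^{*}$-component of $M(\mb\beta^{*},\mb\beta)$ vanishes; combined with the first statement (which already confines $M$ to $\text{span}(\mb\beta,\mb\beta^{*})$) this forces $M(\mb\beta^{*},\mb\beta)\in\text{span}(\mb\beta)$.

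\textbf{Main obstacle.} The argument is essentially bookkeeping, and the only points requiring care are (i) verifying the transformation identities $f_{\pm\mb\beta^{*},\sigma}(R\mb Y)$ behave as claimed — this is precisely where the definition of rotation invariance of $\mathcal{F}$ in \eqref{eq:log-concave class} is used — and (ii) justifying the change of variables, which is immediate since $|\tanh|\le 1$ and $\mb X$ has finite second moment under $f_{\mb\beta^{*},\sigma}$, so no additional regularity condition is needed here.
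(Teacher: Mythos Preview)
Your proof is correct and takes essentially the same approach as the paper: both exploit the rotation invariance of $f$ via reflection symmetries to kill the unwanted coordinates of $M(\mb\beta^{*},\mb\beta)$. The paper carries this out by fixing an orthonormal basis aligned with $\mb\beta$ and arguing that the integrand is odd in $x_j$ for $j\ge 3$ (and in $x_1$ or $x_2$ for the special angles), which is exactly your reflection argument written coordinate-wise; your handling of the $\angle=\pi/2$ case via the mixture form and the $\mb\beta^{*}$-reflection is a minor variant of the paper's $x_1$-oddness argument in the one-component form, and both are valid.
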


We next establish the asymptotic global convergence property of \vem.

\begin{restatable}[Global Convergence, $d$-Dimensional]{theorem}{highdimglobal}
\label{thm:location-high-dim-asymptotic} 
Suppose that $f\in \mathcal{F}$ satisfies the regularity condition. The \vem algorithm converges to $\text{sign}(\langle \mb\beta^0,\mb\beta^*\rangle)\mb\beta^{*}$ from any randomly initialized point $\mb\beta^0$ that is not orthogonal to $\mb\beta^{*}$.
\end{restatable}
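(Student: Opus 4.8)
The plan is to reduce the $d$-dimensional problem to a two-dimensional dynamical system via Lemma~\ref{lem:two-dim-structure}, and then track two scalar quantities along the iteration: the angle $\theta^t := \angle(\mb\beta^t,\mb\beta^*)$ and the norm $\|\mb\beta^t\|_2$ (or, more conveniently, the component of $\mb\beta^t$ along $\mb\beta^*$). Since $M(\mb\beta^*,\mb\beta)\in\text{span}(\mb\beta,\mb\beta^*)$, once the (measure-zero) orthogonal case is excluded, all iterates live in a fixed $2$-plane containing $\mb\beta^*$, and we may choose coordinates in which $\mb\beta^* = (\|\mb\beta^*\|_2,0)$ and $\mb\beta^t = (u_t,v_t)$. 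By the sign reduction we may assume $\langle\mb\beta^0,\mb\beta^*\rangle>0$, i.e. $u_0>0$ and $\theta^0\in[0,\pi/2)$.

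The two key monotonicity claims to establish are: (i) an \emph{angle-decreasing property}, $\theta^{t+1}\le\theta^t$, with strict decrease whenever $\theta^t\in(0,\pi/2)$ — so the angle converges, and one shows the only possible limit consistent with the update is $0$; and (ii) once the angle is small (or zero), the dynamics of the $\mb\beta^*$-component reduces essentially to the one-dimensional update, so Theorem~\ref{thm:Global-Convergence-1D-VEM} and Corollary~\ref{cor:tstep_convergence} give convergence of $u_t$ to $\|\mb\beta^*\|_2$ while $v_t\to0$. For (i), I would write $v_{t+1} = \E_{\mb X\sim f_{\mb\beta^*,\sigma}}\big[X_2\tanh(0.5F_{\mb\beta^t,\sigma}(\mb X))\big]$ and $u_{t+1}$ similarly with $X_1$, and compare $\tan\theta^{t+1}=v_{t+1}/u_{t+1}$ to $\tan\theta^t$; the sign analysis should exploit that $F_{\mb\beta,\sigma}(\mb X)$ depends on $\mb X$ only through $\|\mb X+\mb\beta\|_2-\|\mb X-\mb\beta\|_2$ and hence through the projection of $\mb X$ onto $\mb\beta^t$, combined with the rotation invariance and log-concavity (monotonicity of $g$) to control the correlation of $\tanh(0.5F_{\mb\beta^t,\sigma}(\mb X))$ with the two coordinate directions. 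A symmetrization/pairing argument over the reflection $X_2\mapsto -X_2$ (which fixes the law of $\mb X$ under $f_{\mb\beta^*,\sigma}$ since $\mb\beta^*$ has zero second coordinate) should be the cleanest way to extract the sign.

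For the limiting behavior I would argue as follows: the bounded sequence $(u_t,v_t)$ has convergent subsequences; by continuity of $M$ every subsequential limit is a fixed point; by Lemma~\ref{lem: non-contraction} (the $r\in[1,2]$ part generalized to all $f\in\mathcal F$, or rather the fixed-point classification in the orthogonal direction) together with the angle-strict-decrease property, no fixed point with $\theta\in(0,\pi/2]$ is reachable as a limit from $\theta^0<\pi/2$ except possibly $\mb 0$; and $\mb 0$ is excluded because along the direction of $\mb\beta^*$ the update is locally expanding near $0$ (the $1$D non-attractiveness of $0$ from Theorem~\ref{thm:Global-Convergence-1D-VEM}) and because the $\mb\beta^*$-component $u_t$ is bounded away from $0$ once it enters $(0.5\|\mb\beta^*\|_2,\infty)$, which happens in finitely many steps by the $1$D analysis applied to a suitable lower bound on $u_t$. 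Hence the only subsequential limit is $\mb\beta^*$, giving $\mb\beta^t\to\mb\beta^*$.

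The main obstacle is the angle-decreasing property (i): unlike the Gaussian case there is no $\ell_2$-contraction to lean on, and $F_{\mb\beta,\sigma}$ is a nonlinear function of $\|\mb X\pm\mb\beta\|_2$, so showing $v_{t+1}/u_{t+1}\le v_t/u_t$ requires carefully using log-concavity (convex increasing $g$) to sign a covariance between $\tanh(0.5F_{\mb\beta^t,\sigma}(\mb X))$ and the coordinates of $\mb X$ in the $\mb\beta^t$-adapted frame. A secondary difficulty is making the ``limit must be $\mb\beta^*$'' step rigorous — ruling out that the iterates crawl toward $\mb 0$ along a nearly-orthogonal trajectory — which is why the finite-time entry of $u_t$ into a region bounded away from $0$, established via a one-dimensional comparison, is essential.
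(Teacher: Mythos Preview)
Your overall architecture --- reduce to the $2$-plane via Lemma~\ref{lem:two-dim-structure}, establish a strict angle-decrease, then analyze accumulation points --- matches the paper. However, several steps in your limiting argument contain genuine gaps, and your route to angle-decrease differs from the paper's in a way that does not obviously close.

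\textbf{Subsequential limits are not fixed points.} Your step ``by continuity of $M$ every subsequential limit is a fixed point'' is false in general: if $\mb\beta^{n_k}\to\mb\beta^\infty$, continuity gives only $\mb\beta^{n_k+1}\to M(\mb\beta^*,\mb\beta^\infty)$, so $M(\mb\beta^*,\mb\beta^\infty)$ is \emph{another} accumulation point, not necessarily $\mb\beta^\infty$. The paper does not argue via fixed points. It first uses strict angle monotonicity to force every accumulation point onto the ray through $\mb\beta^*$ (if the limiting angle $\theta^\infty$ were positive, continuity of $M$ at an accumulation point with that angle would produce an iterate at angle strictly below $\theta^\infty$, a contradiction). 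It then rules out any accumulation point $\mb\beta^\infty$ on that ray other than $\mb 0$ or $\mb\beta^*$ by a \emph{trapping} argument: from an iterate near such a $\mb\beta^\infty$, finitely many steps (Theorem~\ref{thm:Global-Convergence-1D-VEM} plus continuity) land near $\mb\beta^*$, and a local stable region (Corollary~\ref{cor:non-escaping}) keeps all future iterates there --- contradicting that $\mb\beta^\infty$ is an accumulation point. Your appeal to Lemma~\ref{lem: non-contraction} to classify orthogonal fixed points also goes the wrong way: that lemma shows non-zero orthogonal fixed points \emph{do} exist when $r>2$, so no ``only $\mb 0$ is fixed orthogonally'' statement is available for general $f\in\mathcal{F}$.

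\textbf{Excluding $\mb 0$.} Your mechanism --- ``$u_t$ enters $(0.5\|\mb\beta^*\|_2,\infty)$ in finitely many steps by the $1$D analysis applied to a suitable lower bound on $u_t$'' --- is not justified. When the angle is not small, $u_{t+1}=\E_{\mb X\sim f_{\mb\beta^*,\sigma}}\big[X_1\tanh(0.5F_{\mb\beta^t,\sigma}(\mb X))\big]$ is not a one-dimensional \vem update: the weight depends on the projection of $\mb X$ onto $\mb\beta^t$, not onto $\mb\beta^*$, so Theorem~\ref{thm:Global-Convergence-1D-VEM} does not control it. The paper instead proves a separate \emph{norm-increasing} lemma (Lemma~\ref{lem:norm_increasing}): there exists $\theta_0>0$ such that $\|\mb\beta^+\|_2>\|\mb\beta\|_2$ whenever $\|\mb\beta\|_2\le\tfrac18\|\mb\beta^*\|_2$ and $\angle(\mb\beta,\mb\beta^*)\le\theta_0$. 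Since the angles have already been shown to tend to $0$, this rules out $\mb\beta^t\to\mb 0$.

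\textbf{The angle-decreasing step.} Your proposed symmetrization over $X_2\mapsto -X_2$ in the $\mb\beta^*$-adapted frame does not isolate a sign: that reflection sends $F_{\mb\beta,\sigma}$ to $F_{R\mb\beta,\sigma}$ (with $R$ the reflection), so it yields only the equivariance $M(\mb\beta^*,R\mb\beta)=R\,M(\mb\beta^*,\mb\beta)$, not an inequality for $v_{t+1}/u_{t+1}$. The paper works instead in the \emph{$\mb\beta$-adapted} frame (which changes each iteration); there $F_{\mb\beta,\sigma}(\mb x)$ depends on $\mb x$ only through $x_1$ and $\|\mb x\|_2$, and one shows the second component of $\mb\beta^+$ is strictly positive by interpolating the first coordinate of $\mb\beta^*$ from $0$ to $\beta_1^*$ and differentiating under the integral (Lemma~\ref{lem:angle-decrease}).
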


We prove the theorem using a sensitivity analysis that shows decrease \emph{in angle} rather than in $\ell_2$ distance to the true parameter. The proof does not depend on the explicit form of the density, but only log-concavity and rotation invariance. 
We sketch the main ideas of proof below, deferring the details to Appendix~\ref{subsec:proof-high-d-theorem}.
\begin{hproof}
Let $\mb\beta^{0}$ be the initial point that is not orthogonal to $\mb\beta^{*}$. Without loss of generality, we assume $\langle\mb\beta^{0},\mb\beta^{*}\rangle>0$. Consequently, all the future iterates satisfy $\langle \mb\beta^t,\mb\beta^*\rangle >0$ (see Lemma \ref{lem:along convegence}). 

If $\mb\beta^{0}$ is in the span of $\mb\beta^{*}$ (i.e., $\mb\beta^{0}$ parallels $\mb\beta^{*}$), Lemma~\ref{lem:optimal-direction-convergence} ensures that the iterates remain in the direction of $\mb\beta^{*}$ and converge to $\mb \beta^{*}$. 
On the other hand, if  $\mb\beta^{0}$ is not in the span of $\mb\beta^{*}$, we make use of the following two key properties of the \vem update $ \mb \beta^+ = M(\mb \beta^*,\mb \beta)$:
\begin{enumerate}[nosep,leftmargin=5mm]
    \item \textit{Angle Decreasing Property} (Lemma \ref{lem:angle-decrease}): Whenever $\angle \mb\beta,\mb\beta^*\in (0,\frac{\pi}{2})$, the \vem update strictly decreases the iterate's angle toward $\mb\beta^*$, i.e., $\angle (\mb\beta^+,\mb\beta^*) < \angle (\mb\beta,\mb\beta^*) $ ;
    \item\textit{Local Contraction Region} (Corollary~\ref{cor:non-escaping}): there is a local region around $\mb\beta^*$ such that if any iterate falls in that region, all the future iterates remain in that region. 
\end{enumerate}
 Since the sequence of \vem iterates is bounded, it must have accumulation points. Using the {angle decreasing property} and the continuity of $M(\mb \beta^*,\mb \beta)$ in the second variable $\mb\beta$, we show that all the accumulation points must be in the direction of $\mb\beta^*$. In view of the dynamics of the 1-dimensional case (Theorem~\ref{thm:Global-Convergence-1D-VEM}), we can further show that the set of accumulation points must fall into one of the following three possibilities: $\left\{\mb0\right\}$, $\left\{\mb\beta^{*}\right\}$, or $\left\{\mb0,\mb\beta^*\right\}$.
Below we argue that $\left\{\mb0\right\}$ and $\left\{\mb0,\mb\beta^*\right\}$ are impossible by contradiction.
\begin{itemize}[nosep, leftmargin = 5mm]
    \item If $\Brac{\mb0}$ is the set of accumulation points, the sequence of non-zero  iterates $\left\{\mb \beta^t \right\}$ would converge to $\mb 0$ and stay in a neighborhood of $\mb0$ after some time $T$; in this case, Lemma~\ref{lem:norm_increasing} states that the norm of the iterates is bounded away from zero in the limit and hence they cannot converge to $\mb 0$. 
     \item If $\Brac{\mb0,\mb\beta^*}$ is the set of accumulation points, then there is at least one iterate in the local region of $\mb\beta^*$;  by the local contraction region property above, all the future iterates remain close to $\mb\beta^*$. Therefore, $\mb0$ cannot be another accumulation point. 
\end{itemize}
At last, we conclude that $\mb\beta^*$ is the only accumulation point, which \vem converges to.
\end{hproof}

\section{Finite Sample Analysis}
\label{sec:finite}

In this section, we consider the finite sample scenario, where we are given $n$ data points $\mb  X^i $ sampled i.i.d.\ from $ D(\mb\beta^*,\sigma)$. Using the equivalent expression~\eqref{eq:one_comp_variant_update} for the population \vem update, and replacing the expectation  with the sample average, we obtain the finite-sample \vem update:\footnote{This expression is for analytic purpose only. To actually implement \vem, we use samples $\mb X^i$ from the mixture distribution $D(\mb \beta^*,\sigma) $, which is equivalent to \eqref{eq:finite-vem-step}.}
\begin{align}
\label{eq:finite-vem-step}
\tilde{\mb \beta}^+ = \frac{1}{n}\sum_{i=1}^{n}\mb X^i \tanh(0.5 F_{\mb \beta,\sigma}(\mb X^i)),
\qquad
\text{where }
\mb X^i \overset{\text{i.i.d.}}{\sim} f_{\mb\beta^*,\sigma} .
\end{align}

One approach to extend the population results (in Section \ref{sec:analysis}) to this case is by coupling the population update $\mb \beta^+$ with the finite-sample update $\tilde{\mb \beta}^+$. To this end, we make use of the fact that log-concave distributions are automatically sub-exponential (see Lemma \ref{lem:logconcave-subexponential}), so the random variables $\left\{X^{i}_{j} \tanh(0.5 F_{\mb \beta,\sigma}(\mb X^i))\right\}_{i=1}^{n}$ are i.i.d.\ sub-exponential for each coordinate $j$. Therefore, the concentration bound $\|\tilde{\mb \beta}^+-\mb \beta^+\|_2=\tilde{O} \big( \sqrt{(\|\mb \beta^*\|_2^2+\sigma^2) d/n} \big)$ holds, and we expect that the convergence properties of the population \vem carry over to the finite-sample case, modulo a statistical error of $\tilde{O}\big(\sqrt{d/n}\big)$. 

The above argument is made precise in following proposition for the one-dimensional case, which is proved in~Appendix \ref{subsec:prop-finite-sample-step}.
\begin{restatable}[1-d Finite Sample]{proposition}{onedfinite}
\label{prop:1d-finite-analysis}
Suppose the density function $f\in \mathcal{F}$ satisfies the regularity condition. With $\beta\in \R$ being the current estimate, the finite-sample \vem update~\eqref{eq:finite-vem-step} satisfies the following bound with probability at least $1-\delta$:
\begin{align}
\label{eq:finite-bound}
|\tilde{\beta}^+-\beta^{*}|
\leq\kappa(\beta^*,\beta,\sigma) \cdot |\beta-\beta^{*}|+(\beta^*+C_f\sigma)\cdot O\left(\sqrt{ \frac{1}{n}\log\frac{1}{\delta}}\right),
\end{align}
where $\kappa(\beta^*,\beta,\sigma)$ is contraction factor defined in Theorem \ref{thm:Global-Convergence-1D-VEM} and $C_f$ is the Orlicz $\Psi_1$  norm (i.e., the sub-exponential parameter) of a random variable with density $f\in \mathcal{F}$.
\end{restatable}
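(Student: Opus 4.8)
The plan is to couple the finite-sample update with the population update and split the one-step error into a \emph{bias} term, handled by the population contraction of Theorem~\ref{thm:Global-Convergence-1D-VEM}, and a \emph{fluctuation} term, handled by a concentration inequality. By the oddness of the map $(\beta^*,\beta)\mapsto M(\beta^*,\beta)$, we may assume without loss of generality that $\beta^*>0$ and $\beta\ge 0$, so that $\textup{sign}(\beta\beta^*)=1$ and the sign factor in Theorem~\ref{thm:Global-Convergence-1D-VEM} drops out; the other cases reduce to this one by symmetry. Writing $Y^i := X^i\tanh\!\big(0.5\,F_{\beta,\sigma}(X^i)\big)$ so that $\tilde\beta^+ = \frac1n\sum_{i=1}^n Y^i$, and recalling from the one-component form~\eqref{eq:one_comp_variant_update} that $\E Y^i = M(\beta^*,\beta) = \beta^+$ is exactly the population update, the triangle inequality gives
\[
|\tilde\beta^+ - \beta^*| \;\le\; |\beta^+ - \beta^*| \;+\; \Big|\tfrac1n\textstyle\sum_{i=1}^n \big(Y^i - \E Y^i\big)\Big|.
\]

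For the bias term, Theorem~\ref{thm:Global-Convergence-1D-VEM} applies verbatim and yields $|\beta^+ - \beta^*| \le \kappa(\beta^*,\beta,\sigma)\,|\beta - \beta^*|$, the first summand of~\eqref{eq:finite-bound}. For the fluctuation term I would argue as follows. Since $|\tanh|\le 1$ there is the pointwise domination $|Y^i|\le|X^i|$, and the Orlicz $\Psi_1$ norm is monotone under it, so $\|Y^i\|_{\Psi_1}\le\|X^i\|_{\Psi_1}$. Now $X^i = \beta^* + \sigma E^i$ with $E^i\sim f\in\mathcal F$, and Lemma~\ref{lem:logconcave-subexponential} shows that log-concave densities are sub-exponential with $\|E^i\|_{\Psi_1}=C_f$; hence by the triangle inequality for Orlicz norms $\|X^i\|_{\Psi_1}\le c\,\beta^* + C_f\,\sigma$ for a universal constant $c$ (the deterministic shift contributing its magnitude times $\|1\|_{\Psi_1}$). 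Consequently $\{Y^i-\E Y^i\}_{i=1}^n$ are i.i.d., mean-zero, and sub-exponential with parameter $O(\beta^*+C_f\sigma)$, so Bernstein's inequality for sums of sub-exponential random variables gives, for any $\delta\in(0,1)$, with probability at least $1-\delta$,
\[
\Big|\tfrac1n\textstyle\sum_{i=1}^n \big(Y^i - \E Y^i\big)\Big| \;\le\; C\,(\beta^* + C_f\sigma)\left(\sqrt{\tfrac1n\log\tfrac1\delta} \;+\; \tfrac1n\log\tfrac1\delta\right)
\]
for an absolute constant $C$; in the standard regime $\log(1/\delta)\le n$ the linear term is dominated by the square-root term, producing the claimed $O\!\big(\sqrt{\tfrac1n\log\tfrac1\delta}\big)$ rate. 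Substituting both bounds into the triangle inequality yields~\eqref{eq:finite-bound}.

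Most of this is bookkeeping once the right tools are assembled. The one genuinely substantive ingredient, and the step I would treat as the crux, is obtaining the fluctuation bound with the correct scaling $\beta^*+C_f\sigma$ and, in higher dimensions, the correct dimension dependence: this is precisely where the sub-exponential tail of log-concave data (Lemma~\ref{lem:logconcave-subexponential}) is indispensable, since a crude worst-case range bound would be far too pessimistic. Secondary points to watch are the monotonicity of the Orlicz norm under the pointwise bound $|Y^i|\le|X^i|$, the accounting of the deterministic shift $\beta^*$ in $\|X^i\|_{\Psi_1}$, and the centering step that passes from $Y^i$ to $Y^i-\E Y^i$ before invoking Bernstein; none of these presents a real obstacle, as the hard analytic work is already done in the population result of Theorem~\ref{thm:Global-Convergence-1D-VEM}.
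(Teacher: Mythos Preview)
Your proposal is correct and follows essentially the same route as the paper: decompose $|\tilde\beta^+-\beta^*|$ into the population contraction term (handled by Theorem~\ref{thm:Global-Convergence-1D-VEM}) plus the deviation $|\tilde M(\beta^*,\beta)-M(\beta^*,\beta)|$, then control the latter via Bernstein's inequality after observing that $|Y^i|\le|X^i|$ and that $X^i\sim f_{\beta^*,\sigma}$ is sub-exponential with $\Psi_1$ norm $O(\beta^*+C_f\sigma)$ by Lemma~\ref{lem:logconcave-subexponential}. Your treatment is in fact a bit more explicit about the Orlicz-norm bookkeeping (monotonicity under pointwise domination, the centering step, and the regime $\log(1/\delta)\le n$ in which the linear Bernstein term is absorbed), but the argument is the same.
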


Using Proposition~\ref{prop:1d-finite-analysis}, we further deduce the global convergence of \vem in the finite sample case, which parallels the population result in Corollary~\ref{cor:tstep_convergence}. We develop this result assuming sample splitting, i.e., each iteration uses a fresh, independent set of samples. This assumption is standard in finite-sample analysis of EM~\cite{balakrishnan2017statistical,yan2017convergence,daskalakis2016ten,xu2016global,kwon2018global,klusowski2019estimating}. In this setting, we establish the following quantitative convergence guarantee for \vem initialized at any non-zero $\beta^0$. 

Without loss of generality, let $\beta^0,\beta^*>0$. The convergence has two stages. In the {\em first} stage, the \vem iterates  enter a local neighborhood around $\beta^*$, regardless of whether $\beta^0$ is close to or far from $0$. This is the content of the result below.

\begin{restatable}[First Stage: Escape from 0 and $\infty$]{proposition}{firststage}
\label{prop:first_stage}
Suppose the initial point $\beta^0$ is either close to $0$ (e.g, $\in(0,0.5\beta^*)$) or far away from $\beta^*$  (e.g, $\in(1.5\beta^*,\infty)$). After $T = O\left({\log  \frac{\frac{0.25\beta^*}{|\beta^0-\beta^*|}}{\log \kappa(\beta^*,\min(\beta^0,0.5\beta^*),\sigma)}}\right)$ iterations, with 
$N/T =\Omega\left(\frac{(1+C_f/\eta)^2}{(1-\kappa(\beta^*,\min(\beta^0,0.5\beta^*),\sigma))^2}\log \frac{1}{\delta}\right)$ fresh samples per iteration, 
\vem outputs a solution $\tilde{\beta}^T\in (0.5\beta^*,1.5\beta^*)$ with probability at least $1-\delta\cdot O\left(\log  {\frac{0.25\beta^*}{|\beta^0-\beta^*|}}/{\log \kappa(\beta^*,\min(\beta^0,0.5\beta^*),\sigma)}\right) $.   
\end{restatable}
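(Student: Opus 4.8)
The plan is to combine the one-step finite-sample bound of Proposition~\ref{prop:1d-finite-analysis} with a careful bookkeeping of how the contraction factor behaves along the two types of trajectories (starting near $0$ or starting far beyond $\beta^*$), and then iterate. First I would record the elementary monotonicity facts about $\kappa(\beta^*,\beta,\sigma)$: by Theorem~\ref{thm:Global-Convergence-1D-VEM} and the remark following it (Lemma~\ref{lem:derivative-opt-param-1}), $\kappa$ depends on $\beta$ only through $z=\min(\beta,\beta^*)$ and is nonincreasing as $z$ increases toward $\beta^*$; in particular, whenever the current iterate lies in $(0,0.5\beta^*)\cup(1.5\beta^*,\infty)$ — more precisely whenever $\min(\tilde\beta,\beta^*)\ge\min(\beta^0,0.5\beta^*)$ — the contraction factor is uniformly bounded by $\bar\kappa:=\kappa(\beta^*,\min(\beta^0,0.5\beta^*),\sigma)<1$. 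The content of this step is to verify that a single iterate does not overshoot out of the ``good'' monotonicity range: if $\beta^0\in(0,0.5\beta^*)$, then the population update satisfies $\min(\beta^+,\beta^*)\ge\min(\beta,\beta^*)$ (the iterates increase toward $\beta^*$ monotonically, which follows from the one-step bound $|M(\beta^*,\beta)-\beta^*|\le\kappa|\beta-\beta^*|$ together with $M(\beta^*,\cdot)$ being continuous and fixed only at $0,\pm\beta^*$), so $\bar\kappa$ stays a valid uniform bound; symmetrically, if $\beta^0\in(1.5\beta^*,\infty)$, the iterates decrease toward $\beta^*$ and again $z=\beta^*$ throughout, so $\kappa\le\bar\kappa$ trivially. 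The finite-sample perturbation could in principle break this monotonicity, so here one argues that as long as the statistical error per step is small relative to $(1-\bar\kappa)|\beta-\beta^*|$, the sign of the progress is preserved; this is where the sample-size requirement $N/T=\Omega\big((1+C_f/\eta)^2(1-\bar\kappa)^{-2}\log(1/\delta)\big)$ enters.

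Next I would set up the recursion for the error $a_t:=|\tilde\beta^t-\beta^*|$. By Proposition~\ref{prop:1d-finite-analysis} applied with a fresh sample of size $N/T$ at each step, with probability $1-\delta$ per step,
\[
a_{t+1}\le \bar\kappa\, a_t + (\beta^*+C_f\sigma)\cdot O\!\left(\sqrt{\tfrac{T}{N}\log\tfrac1\delta}\right).
\]
Writing $\xi:=(\beta^*+C_f\sigma)\cdot O\!\big(\sqrt{(T/N)\log(1/\delta)}\big)$ for the per-step statistical error and unrolling, $a_t\le\bar\kappa^{t}a_0+\xi/(1-\bar\kappa)$. The sample-complexity hypothesis is precisely calibrated so that $\xi/(1-\bar\kappa)\le 0.25\beta^*$: indeed $\beta^*+C_f\sigma=\sigma(\eta+C_f)=O(\sigma(1+C_f/\eta)\eta)$, and dividing by $(1-\bar\kappa)$ and demanding the result be $\le 0.25\beta^*=0.25\sigma\eta$ forces $\sqrt{(T/N)\log(1/\delta)}=O\big((1-\bar\kappa)/(1+C_f/\eta)\big)$, i.e.\ $N/T=\Omega\big((1+C_f/\eta)^2(1-\bar\kappa)^{-2}\log(1/\delta)\big)$, which is the stated requirement. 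Then choosing $t=T$ with
\[
T=O\!\left(\frac{\log\big(0.25\beta^*/|\beta^0-\beta^*|\big)}{\log\bar\kappa}\right)
\]
makes the geometric term $\bar\kappa^{T}a_0\le 0.25\beta^*$ as well, so $a_T\le 0.5\beta^*$, i.e.\ $\tilde\beta^T\in(0.5\beta^*,1.5\beta^*)$. Finally a union bound over the $T$ iterations (each of which is the only place randomness enters, thanks to sample splitting) yields the overall failure probability $\delta\cdot O(T)=\delta\cdot O\big(\log(0.25\beta^*/|\beta^0-\beta^*|)/\log\bar\kappa\big)$, matching the claim.

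The main obstacle is the first step: controlling the contraction factor uniformly along the trajectory in the presence of sampling noise. In the population setting $\kappa$ is governed by $\min(\beta^t,\beta^*)$ and that quantity moves monotonically toward $\beta^*$, so $\bar\kappa$ is a clean uniform bound; but with finite samples one must rule out the iterate wandering back down below $\min(\beta^0,0.5\beta^*)$ (when started near $0$) or overshooting far above $1.5\beta^*$ and then the factor potentially worsening. The resolution is an induction: assuming $a_t\le a_0$ (which holds at $t=0$ and is preserved since $\bar\kappa a_t+\xi\le\bar\kappa a_0+0.25\beta^*(1-\bar\kappa)\le a_0$ whenever $a_0\ge 0.25\beta^*$; the case $a_0<0.25\beta^*$ is vacuous as $\beta^0$ already lies in the target interval), one gets $\tilde\beta^t\in(\beta^0,\ \beta^*+a_0)$ in the ``near $0$'' case and $\tilde\beta^t\in(\beta^*-a_0,\ \beta^0)$ in the ``far'' case, so $\min(\tilde\beta^t,\beta^*)\ge\min(\beta^0,0.5\beta^*)$ persists and the uniform bound $\kappa(\beta^*,\tilde\beta^t,\sigma)\le\bar\kappa$ is maintained, closing the induction. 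Everything else is the standard geometric-series-plus-union-bound computation already used in the analyses of EM for 2GMM~\cite{daskalakis2016ten,balakrishnan2017statistical}.
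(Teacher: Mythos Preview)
Your overall structure—one-step bound, geometric recursion, calibrate $\xi/(1-\bar\kappa)\le 0.25\beta^*$, union bound—is exactly what the paper does. The ``near $0$'' case is fine and matches the paper's argument line for line.

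However, your induction has a gap in the ``far'' case $\beta^0\in(1.5\beta^*,\infty)$. You want to maintain $\min(\tilde\beta^t,\beta^*)\ge 0.5\beta^*$ so that $\kappa(\beta^*,\tilde\beta^t,\sigma)\le\bar\kappa$ remains valid, but your inductive hypothesis $a_t\le a_0$ only gives $\tilde\beta^t\ge\beta^*-a_0=2\beta^*-\beta^0$, which is \emph{below} $0.5\beta^*$ (and can even be negative) precisely when $\beta^0>1.5\beta^*$. So the implication ``$a_t\le a_0\Rightarrow\min(\tilde\beta^t,\beta^*)\ge\min(\beta^0,0.5\beta^*)$'' that you assert in the last paragraph is false in this regime, and without it you cannot justify the use of $\bar\kappa$ in the recursion at step $t+1$.

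The paper closes this gap with an additional one-sided property of the population map (Corollary~\ref{cor:greater_than_ell}): when $\beta>\beta^*$ one has $M(\beta^*,\beta)\in(\beta^*,\beta)$, not merely $|M(\beta^*,\beta)-\beta^*|<|\beta-\beta^*|$. Coupling $|\tilde\beta^+-M(\beta^*,\beta)|\le\xi$ with $M(\beta^*,\beta)>\beta^*$ then gives $\tilde\beta^+>\beta^*-\xi>0.5\beta^*$ directly, independent of how large $\beta$ is. Once you know every iterate stays above $0.5\beta^*$, the uniform bound $\kappa\le\kappa(\beta^*,0.5\beta^*,\sigma)=\bar\kappa$ holds and your recursion goes through. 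You should replace your $a_t\le a_0$ induction in the far case by this argument.
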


Within this local neighborhood, the \vem iterates converge to $\beta^*$ geometrically, up to a statistical error determined by the sample size. This {\em second} stage convergence result is given below.

\begin{restatable}[Second Stage: Local Convergence]{proposition}{secondstage}
\label{prop:second_stage}
The following holds for any $\epsilon>0$.
Suppose $\beta^0 \in (0.5\beta^*,1.5\beta^*)$. After $T = O\left({\log \epsilon}/{\log \kappa(\beta^*,0.5\beta^*,\sigma)}\right)$ iterations, with $N/T = \Omega(\frac{(\beta^*+C_f/\eta)^2}{\epsilon^2(1-\kappa(\beta^*,0.5\beta^*,\sigma))^2}\log \frac{1}{\delta}) $ fresh samples per iteration, \vem outputs a solution $\tilde{\beta}^T$ satisfying $|\tilde{\beta}^T - \beta^*|\leq \epsilon \beta^*$ with probability at least $1-\delta \cdot O\left({\log \epsilon}/{\log \kappa(\beta^*,0.5\beta^*,\sigma)}\right)$.
\end{restatable}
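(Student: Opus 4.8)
The plan is to reduce Proposition~\ref{prop:second_stage} to a repeated application of the one-step finite-sample bound in Proposition~\ref{prop:1d-finite-analysis}, combined with an argument that the iterates never leave the local neighborhood $(0.5\beta^*,1.5\beta^*)$. First I would fix the per-iteration contraction factor at its worst-case value on the neighborhood: by the monotonicity of $\kappa(\beta^*,\cdot,\sigma)$ in its second argument established in Corollary~\ref{cor:tstep_convergence} (via Lemma~\ref{lem:derivative-opt-param-1}), for any $\beta \in (0.5\beta^*,1.5\beta^*)$ we have $\kappa(\beta^*,\beta,\sigma) \le \kappa(\beta^*,0.5\beta^*,\sigma) =: \bar\kappa < 1$, since the relevant quantity is $\min(\beta,\beta^*) \ge 0.5\beta^*$. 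Then, applying Proposition~\ref{prop:1d-finite-analysis} at iteration $t$ with the fresh sample batch (so the bound holds conditionally on $\tilde\beta^t$ by the sample-splitting assumption), on the good event of that iteration we get
\[
|\tilde\beta^{t+1} - \beta^*| \le \bar\kappa\,|\tilde\beta^t - \beta^*| + (\beta^* + C_f\sigma)\cdot O\!\left(\sqrt{\tfrac{T}{N}\log\tfrac{1}{\delta}}\right),
\]
where I have substituted the per-iteration sample size $N/T$.

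Next I would run the standard geometric-series recursion. Define the statistical error term $\xi := (\beta^* + C_f\sigma)\cdot O(\sqrt{(T/N)\log(1/\delta)})$; note that writing $\beta^* + C_f\sigma = \beta^*(1 + C_f/\eta)$ matches the $(\beta^*+C_f/\eta)$-type scaling in the proposition statement up to the $\sigma$ versus $1$ bookkeeping, and the choice $N/T = \Omega\big(\frac{(\beta^*+C_f/\eta)^2}{\epsilon^2(1-\bar\kappa)^2}\log\frac{1}{\delta}\big)$ is exactly what forces $\xi \le \tfrac{1}{2}\epsilon\beta^*(1-\bar\kappa)$. Unrolling the recursion for $T$ steps gives
\[
|\tilde\beta^T - \beta^*| \le \bar\kappa^T |\tilde\beta^0 - \beta^*| + \frac{\xi}{1-\bar\kappa} \le \bar\kappa^T \cdot 0.5\beta^* + \tfrac{1}{2}\epsilon\beta^*,
\]
and choosing $T = O(\log\epsilon/\log\bar\kappa) = O(\log(1/\epsilon)/\log(1/\bar\kappa))$ makes the first term at most $\tfrac12\epsilon\beta^*$ as well, so $|\tilde\beta^T-\beta^*|\le\epsilon\beta^*$. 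The probability bookkeeping is a union bound over the $T$ per-iteration good events, each failing with probability at most $\delta$, giving the claimed $1-\delta\cdot O(\log\epsilon/\log\kappa(\beta^*,0.5\beta^*,\sigma))$.

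The one genuine subtlety — and the step I expect to be the main obstacle — is verifying that the iterates stay inside $(0.5\beta^*,1.5\beta^*)$ throughout, so that the worst-case contraction factor $\bar\kappa$ is legitimately in force at every step. This requires an inductive argument interleaved with the recursion: assuming $\tilde\beta^t \in (0.5\beta^*,1.5\beta^*)$, the one-step bound gives $|\tilde\beta^{t+1}-\beta^*| \le \bar\kappa\cdot 0.5\beta^* + \xi$, which is strictly less than $0.5\beta^*$ provided $\xi < (1-\bar\kappa)\cdot 0.5\beta^*$ — and this is implied by the same sample-size condition (since $\tfrac12\epsilon\beta^*(1-\bar\kappa) \le \tfrac12\beta^*(1-\bar\kappa)$ for $\epsilon \le 1$; the $\epsilon \ge 1$ case is trivial since then $\beta^*\in(0.5\beta^*,1.5\beta^*)$ already meets the target). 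Hence $\tilde\beta^{t+1} \in (0.5\beta^*,1.5\beta^*)$ and the induction closes. One must also confirm that the sign is preserved, i.e. $\tilde\beta^{t+1} > 0$, which follows since $|\tilde\beta^{t+1}-\beta^*| < 0.5\beta^*$ forces $\tilde\beta^{t+1} > 0.5\beta^* > 0$; this is why Proposition~\ref{prop:1d-finite-analysis} can be applied directly with $\beta^*$ rather than $\mathrm{sign}(\cdot)\beta^*$. With the invariant established, the rest is the routine geometric-sum estimate sketched above.
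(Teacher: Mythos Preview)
Your proposal is correct and follows essentially the same route as the paper: apply the one-step bound from Proposition~\ref{prop:1d-finite-analysis} with the worst-case contraction factor $\bar\kappa=\kappa(\beta^*,0.5\beta^*,\sigma)$, unroll the geometric recursion, and union-bound over the $T$ iterations. The ``stay in the neighborhood'' invariant you flag as the main subtlety is exactly what the paper isolates as Lemma~\ref{lem:stability} (with $\hat\beta=0.5\beta^*$), and your inline inductive argument reproduces that lemma's content.
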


We prove Propositions~\ref{prop:first_stage} and~\ref{prop:second_stage} in Appendix \ref{subsec:global_finite}.

Next, we parse the above results in the special cases of Gaussian, Laplace and Logistic, assuming that $ \sigma = 1$ for simplicity. Accordinly, $\eta = \beta^*$.  In Section~\ref{sec:analysis-d=1} we showed that $\kappa(\beta^*,\beta,\sigma) = \exp\left(-\min(\beta,\beta^*)^{\gamma}\right)$, where $\gamma \equiv \gamma_f$ is the growth rate of the log density $-\log f$. Consequently, the first stage  requires $ O\left({1}/{(\min(\beta^0,\beta^*))^{\gamma}}\right)$ iterations with $\tilde{\Omega}\left({1}/{(\min(\beta^0,\beta^*))^{2\gamma}}\right)$ samples per iteration, and the second stage requires $ O\left({\log (1/\epsilon)}/{\eta^\gamma} \right)$ iterations with  $\tilde{\Omega}\left({1}/{\epsilon^2 \eta^{2\gamma} }\right)$ samples per iteration. It is seen that we have better iteration and sample complexities with a larger $\eta \ge 1$ (larger separation between the components) and a larger $\gamma$ (lighter tail of the components).

In contrast, in the low SNR regime with $\eta<1$, the sample complexity actually becomes worse for a larger $\gamma$ (lighter tails). Indeed, low SNR means that two components are close in location when $\sigma=1$. If their tails are lighter, then it becomes more likely that the mixture density $(f_{\beta^*,\sigma}+f_{-\beta^*,\sigma})/2$ has a unique mode at 0 instead of two modes at $\pm \beta^*$. In this case, the mixture problem becomes harder as it is more difficult to distinguish between the two components.

In the higher dimensional setting, we can similarly show  coupling in $\ell_2$ (i.e., bounding $\|\tilde{\mb \beta}^+-\mb \beta^+\|_2$) via sub-exponential concentration. However, extending the convergence results above to $d>1$ is more subtle, due to the issue of $\ell_2$ non-contraction (see Lemma~\ref{lem: non-contraction}). Addressing this issue would require coupling in a different metric (e.g., in angle---see~\cite{kwon2018global,xu2016global}); we leave this to future work.

\section{Robustness Under Model Mis-specification}
\label{sec:misspecification}
In practice, it is sometimes difficult to know a priori the exact parametric form of a log-concave distribution that generates the data. This motivates us to consider the following scenario: the data is from the mixture $D(\mb \beta^*, \sigma) $ in~\eqref{eq:generative_process} with a true log-concave distribution $f \in \mathcal{F}$ and unknown location parameter~$\mb \beta^*$, but we run \vem assuming some other log-concave distribution $\hat{f}(\cdot)  = C_{\hat{g}}^{-1} \exp(-\hat{g}(\|\cdot\|_2)) \in \mathcal{F}$. Using the same symmetry argument as in deriving~\eqref{eq:one_comp_variant_update}, we obtain the following expression for the mis-specified \vem update in the population case:
\begin{align}
\label{eq:missepcified_vem_update}
\hat{\mb \beta}^+ 
= \hat{M}(\mb \beta^*, \mb\beta)
:= 
\bb{E}_{\mb X\sim f_{\mb\beta^*,\sigma}} \mb X\tanh \big(0.5 \hat{F}_{\mb\beta,\sigma}(\mb X) \big),
\end{align}
where $\hat{F}_{\mb\beta,\sigma}(\mb X) := \hat{g}\left(\frac{1}{\sigma}\|\mb X+\mb\beta\|_2\right) - \hat{g}\left(\frac{1}{\sigma}\|\mb X-\mb\beta\|_2\right)$.

Multiple properties of the \vem update are preserved in the mis-specification setting. In particular, using the same approach as in Lemma \ref{lem:two-dim-structure} and Lemma \ref{lem:angle-decrease}, we can show that the mis-specified \vem update is also a two dimensional object and satisfies the same strict angle decreasing property $\angle (\hat{\mb \beta}^+, \mb \beta^*) < \angle (\mb \beta, \mb \beta^*)$. Therefore, to study the convergence behavior of mis-specified \vem, it suffices to understand the one-dimensional case (i.e., along the $\mb\beta^*$ direction). 

We provide results focusing on the setting in which $\hat{f}$ is Gaussian, that is, we fit a Gaussian mixture to a true mixture of log concave distributions. In this setting, we can show that mis-specified \vem has only 3 fixed points $\{\pm \bar{\beta}, 0\}$ (Lemma~\ref{lem:mis-3-fixed}). Moreover, we can bound the distance between $\bar{\beta}$ and the true $\beta^*$, thereby establishing the following convergence result:

\begin{restatable}[Fit with 2GMM]{proposition}{misspecifiedgmm}
\label{lem:mis-gaussian}
Under the above one dimensional setting with Gaussian $\hat{f}$, the following holds for some  absolute constant $C_0>0$: 
If $ \eta \ge C_0$, then the \vem algorithm with a non-zero initialization point $\beta^0$ converges to a solution $\bar{\beta}$ satisfying $\textup{sign}(\bar{\beta})= \textup{sign}(\beta^0)$ and
\begin{align*}
     \big| \bar{\beta} - \textup{sign}(\beta^0\beta^*)\beta^* \big|  \le 10 \sigma.
\end{align*}
\end{restatable}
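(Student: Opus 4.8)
The plan is to reduce the problem to a one-dimensional fixed-point analysis of the map $\beta \mapsto \hat M(\beta^*,\beta)$ with $\hat g(t) = t^2/2$ (the standard Gaussian), and then to control the displacement of the nonzero fixed point $\bar\beta$ from $\beta^*$ in terms of $\sigma$. First I would write out the mis-specified update explicitly: since $\hat g(t)=t^2/2$, we have $\hat F_{\beta,\sigma}(X) = \tfrac{1}{2\sigma^2}\big((X+\beta)^2 - (X-\beta)^2\big) = \tfrac{2\beta X}{\sigma^2}$, so that
\[
\hat M(\beta^*,\beta) = \mathbb{E}_{X\sim f_{\beta^*,\sigma}}\, X\tanh\!\Big(\frac{\beta X}{\sigma^2}\Big).
\]
This is exactly the population EM map for a $2$GMM, but with the expectation taken over the \emph{true} log-concave component $f_{\beta^*,\sigma}$ rather than a Gaussian. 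The self-consistency property $M(\beta,\beta)=\beta$ is now \emph{lost} — that is precisely why a bias appears — but the map remains odd in $\beta$, smooth, and (invoking Lemma~\ref{lem:mis-3-fixed}, which I am allowed to assume) has exactly three fixed points $\{0,\pm\bar\beta\}$ with $0$ non-attractive. Combined with the angle-decreasing property and two-dimensional structure already established for the mis-specified update, global convergence to $\operatorname{sign}(\beta^0)\bar\beta$ from any nonzero $\beta^0$ follows by the same accumulation-point argument as in Theorem~\ref{thm:location-high-dim-asymptotic}; the sign claim $\operatorname{sign}(\bar\beta)=\operatorname{sign}(\beta^0)$ comes from the along-direction monotonicity (the analogue of Lemma~\ref{lem:along convegence}, using oddness of $\tanh$). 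So the real work is the quantitative bound $|\bar\beta - \beta^*| \le 10\sigma$ under $\eta\ge C_0$.

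For the displacement bound I would work at the fixed point equation $\bar\beta = \mathbb{E}_{X\sim f_{\beta^*,\sigma}} X\tanh(\bar\beta X/\sigma^2)$ and argue in the high-SNR regime $\beta^*/\sigma = \eta \ge C_0$. The intuition is that when $\eta$ is large, $\tanh(\bar\beta X/\sigma^2) \approx \operatorname{sign}(X)$ for the bulk of the mass of $f_{\beta^*,\sigma}$ (which concentrates around $\beta^* \gg \sigma$), so $\bar\beta \approx \mathbb{E}|X| = \mathbb{E}|\beta^* + \sigma E|$ where $E\sim f$ has mean $0$ and variance $1$; since $\beta^*>0$ and $E$ is mean-zero with unit variance, $\mathbb{E}|\beta^*+\sigma E|$ differs from $\beta^*$ by $O(\sigma)$ (by e.g. $\big|\mathbb{E}|\beta^*+\sigma E| - \beta^*\big| \le \mathbb{E}\min(|\sigma E|, 2\beta^*\mathbbm{1}\{E<-\beta^*/\sigma\}) \le \sigma\,\mathbb{E}|E| \le \sigma$, using $\mathbb{E}|E|\le\sqrt{\mathbb{E}E^2}=1$). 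To make this rigorous I would bound the error term
\[
\Big| \bar\beta - \mathbb{E}_{X\sim f_{\beta^*,\sigma}}X\operatorname{sign}(X) \Big| = \Big| \mathbb{E}_{X\sim f_{\beta^*,\sigma}} X\big(\tanh(\bar\beta X/\sigma^2) - \operatorname{sign}(X)\big)\Big|,
\]
splitting the expectation according to whether $X$ is close to $0$ (a low-probability event by sub-exponential tails of $f_{\beta^*,\sigma}$, since $\beta^*\ge C_0\sigma$) or bounded away from $0$ (where $|\tanh(\bar\beta X/\sigma^2)-\operatorname{sign}(X)|$ is exponentially small provided $\bar\beta$ itself is of order $\beta^*$, not much smaller). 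This last caveat forces a bootstrap: I must first obtain a crude lower bound $\bar\beta \gtrsim \beta^*$ (or at least $\bar\beta \gtrsim \sigma$) before the $\tanh \approx \operatorname{sign}$ replacement is valid near $X\approx\beta^*$. A crude bound can be had by evaluating the fixed-point map at a candidate like $\beta = \sigma$ or $\beta=\beta^*/2$ and checking it is a contraction/expansion there using monotonicity of $\hat M(\beta^*,\cdot)$ in $\beta$ (the integrand $X\tanh(\beta X/\sigma^2)$ is nondecreasing in $\beta$ for each $X$). Chaining the crude bound into the refined estimate and absorbing all constants into the choice of $C_0$ yields $|\bar\beta-\beta^*|\le 10\sigma$.

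The main obstacle I anticipate is the interplay in the error estimate between the unknown location of $\bar\beta$ and the quality of the $\tanh\to\operatorname{sign}$ approximation: the approximation is only good when $\bar\beta X/\sigma^2$ is large in magnitude over the bulk of $X$, which requires knowing \emph{a priori} that $\bar\beta$ is not anomalously small, yet that is part of what we are proving. Resolving this cleanly — either via a two-step bootstrap as sketched, or by a monotonicity/continuity argument that traps $\bar\beta$ in $[\beta^* - 10\sigma,\ \beta^*+10\sigma]$ directly by showing $\hat M(\beta^*,\beta) - \beta$ changes sign on that interval — is where the care is needed. A secondary technical point is ensuring the tail/concentration estimates for $f_{\beta^*,\sigma}$ only use the sub-exponentiality guaranteed for the log-concave class $\mathcal F$ (Lemma~\ref{lem:logconcave-subexponential}) and the normalization $\mathbb{E}E^2=1$, so that the constant $C_0$ is genuinely absolute and not $f$-dependent; I expect this to go through because all the tail bounds needed are uniform over $\mathcal F$.
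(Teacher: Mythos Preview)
Your approach is correct but takes a genuinely different route from the paper. Both arguments share the same scaffolding: invoke Lemma~\ref{lem:mis-3-fixed} for existence of a unique positive fixed point $\bar\beta$ and convergence to it, then separately establish (i) a crude lower bound $\bar\beta \ge \tfrac12\beta^*$ (the bootstrap you correctly flag as the crux), and (ii) the quantitative bound $|\bar\beta-\beta^*|\le 10\sigma$. The divergence is in how (ii) is obtained. The paper writes $\bar\beta-\beta^* = A+B$ where $B$ compares $\hat M$ against the \emph{well-specified} Gaussian EM map (so Theorem~\ref{thm:Global-Convergence-1D-VEM} applied to $\hat f$ gives $|B|\le \kappa_{\hat f}(\bar\beta,\beta^*,\sigma)\,|\bar\beta-\beta^*|$, absorbable to the left), and $A$ is a change of measure from $f_{\beta^*,\sigma}$ to $\hat f_{\beta^*,\sigma}$, bounded by the Lipschitz constant of $x\mapsto x\tanh(\bar\beta x/\sigma^2)$ times the Wasserstein distance $\sigma D_W(f,\hat f)$, with $D_W(f,\mathcal N(0,1))\le 3$ supplied by Stein's method uniformly over $\mathcal F$. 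This yields $|\bar\beta-\beta^*|\le 9\sigma/(1-\kappa_{\hat f})$, and the bootstrap $\bar\beta>\tfrac12\beta^*$ (obtained by the same Wasserstein comparison at $\beta=\tfrac12\beta^*$) makes $\kappa_{\hat f}=\exp(-\eta^2/8)$ small. Your route instead exploits the Gaussian-specific identity $X\tanh(\beta X/\sigma^2)=|X|\tanh(\beta|X|/\sigma^2)$ and replaces $\tanh$ by $\operatorname{sign}$ directly, bounding $|\bar\beta-\mathbb{E}|X||$ via the pointwise estimate $t(1-\tanh(at))\le 2te^{-2at}\le (ae)^{-1}$ (giving $O(\sigma^2/\bar\beta)=O(\sigma/\eta)$ after the bootstrap) and $|\mathbb{E}|X|-\beta^*|\le 2\sigma\,\mathbb{E}[|E|\mathbbm{1}\{E<-\eta\}]\le 2\sigma$ elementarily.

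Your argument is more elementary---no Stein's method, no Wasserstein distance---and actually delivers a sharper constant. The paper's decomposition, on the other hand, is more modular: Lemma~\ref{lem:error_misspecified} is stated for general mis-specified $\hat f$ (under a concavity and Lipschitz condition), so the Wasserstein bound would transfer to other choices of $\hat f$, whereas your saturation argument leans on the linearity of $\hat F_{\beta,\sigma}(X)=2\beta X/\sigma^2$ that is specific to the Gaussian $\hat g$. One minor remark: since the statement is one-dimensional, invoking the angle-decreasing property and Theorem~\ref{thm:location-high-dim-asymptotic} for the convergence part is unnecessary---Lemma~\ref{lem:mis-3-fixed} alone suffices.
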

We prove this proposition in Appendix \ref{subsec:proof-mis-gaussian}. The proposition establishes the robustness of \vem: even in the mis-specified setting, \vem still converges globally. Moreover, when the SNR $\eta$ is high (i.e., small noise level $\sigma$), the final estimation error is small and scales linearly with $\sigma$.

\section{Conclusion}
\label{sec:conclusion}
In this paper, we have established the global convergence of the \vemlong algorithm for a mixture of two log-concave densities. The rotation invariance property is the only requirement for a theoretical guarantee. An immediate future direction is to establish quantitative global convergence guarantees in high dimensions for both population and finite sample case, which would require generalizing the angle convergence property in \cite{kwon2018global} to log-concave distributions. It is also of interest to relax the rotation invariance assumption (as many interesting log-concave distributions are skewed) and to consider mixtures with multiple components.

\section*{Acknowledgement}
W. Qian and Y. Chen are partially supported by NSF CRII award 1657420 and grant 1704828. Y. Zhang is supported by NSF award 1740822.
\bibliographystyle{plain}  % numeric citation
\bibliography{variant_em}

%\newpage

\appendixpage
\appendix

%\section{Appendix}

\section{Additional Notations for Appendix}
%We use $x\in\R$ and $\mb x\in\R^d$ to denote scalars and vectors, respectively;  $X\in\R$ and $\mb X\in\R^d$ to denote scalar and vector random variables, respectively. The $i$-th coordinate of $\mb x$ (or $\mb X$) is $x_i$ (or
%$X_i$), and the $j$-th data point is denoted by $\mb x^j$ or $\mb X^j$. The Euclidean norm in $\bb{R}^d$ is $\|\cdot\|_2$. For two vectors $\mb\alpha,\mb\beta \in \bb{R}^d$, we use $\angle (\mb\alpha,\mb\beta)\in(0,\pi)$ to denote the angle between them and $\langle \mb\alpha,\mb\beta\rangle $ to denote their inner product. 
We use $\hat{\mb\beta}$ to denote the unit vector of $\mb\beta$, and $\mb\beta^{\perp}$ to denote a vector orthogonal to $\mb\beta$.  $\mb e_i\in\R^d$ is the $i$-th standard basis vector.

\section{Elementary Properties of Log-concave Distributions}
\label{appendix-log-concave}
A function $f:\bb{R}^d\to \bb{R}$ is \textit{log-concave} if it satisfies:
\begin{align*}
f(\alpha(\mb x)+(1-\alpha)\mb y)\geq f(\mb x)^\alpha f(\mb y)^{1-\alpha},
\end{align*}
for every $\mb x,\mb y\in \bb{R}^d$ and $0\leq \alpha \leq 1$. Equivalently, $\log f$ is a concave function. We consider log-concave distribution $f$ which further satisfies: $\int_{\bb{R}^d} f(\mb x)d\mb x=1$. The following is a classical result for log-concave distributions, which says that the log-concavity property is preserved by marginalization and convolution. 

\begin{theorem}
\label{thm:marginalization_convolution}
All marginals as well as the density function of a log-concave distribution is log-concave. The convolution of two log-concave distributions is again a log-concave distribution. 
\end{theorem}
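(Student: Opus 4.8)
The plan is to establish the three claims in sequence, with the marginalization statement (equivalently, the preservation of log-concavity under projection onto a subspace) being the crux, and the density-is-log-concave and convolution statements following as corollaries or by similar arguments.

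First I would reduce everything to a single clean statement: if $F:\R^{m+k}\to\R_{\ge0}$ is log-concave, then the function $G(\mb x) := \int_{\R^k} F(\mb x, \mb y)\,\dup{\mb y}$ on $\R^m$ is log-concave. This is the Prékopa–Leindler theorem (or its standard corollary). To prove it, I would invoke the Prékopa–Leindler inequality itself: if $u,v,w:\R^k\to\R_{\ge0}$ satisfy $w(\alpha \mb a + (1-\alpha)\mb b) \ge u(\mb a)^\alpha v(\mb b)^{1-\alpha}$ for all $\mb a,\mb b$, then $\int w \ge (\int u)^\alpha (\int v)^{1-\alpha}$. Given $\mb x_0,\mb x_1\in\R^m$ and $\alpha\in[0,1]$, set $\mb x_\alpha = \alpha\mb x_0 + (1-\alpha)\mb x_1$ and define $u(\mb y) = F(\mb x_0,\mb y)$, $v(\mb y) = F(\mb x_1,\mb y)$, $w(\mb y) = F(\mb x_\alpha, \mb y)$. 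Log-concavity of $F$ on $\R^{m+k}$ gives exactly the pointwise hypothesis $w(\alpha \mb y_0 + (1-\alpha)\mb y_1) \ge u(\mb y_0)^\alpha v(\mb y_1)^{1-\alpha}$, so Prékopa–Leindler yields $G(\mb x_\alpha) \ge G(\mb x_0)^\alpha G(\mb x_1)^{1-\alpha}$, which is log-concavity of $G$. Iterating over coordinates (or taking $k$ to be the codimension of the marginal directly) gives that every marginal of a log-concave density is log-concave; the ``density function'' clause is just the $k=0$ case, i.e., the density is by hypothesis log-concave, so this part is immediate once the framework is set up. I would either cite Prékopa–Leindler as a known inequality or, if a self-contained argument is wanted, prove it by induction on dimension $k$, the base case $k=1$ reducing to a one-dimensional rearrangement/Brunn–Minkowski-type estimate on level sets.

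For the convolution statement, given log-concave densities $f,g$ on $\R^d$, write $(f*g)(\mb z) = \int_{\R^d} f(\mb z - \mb y) g(\mb y)\,\dup{\mb y}$. The integrand $H(\mb z,\mb y) := f(\mb z-\mb y)g(\mb y)$ is a function on $\R^{2d}$; it is log-concave there because $(\mb z,\mb y)\mapsto \mb z-\mb y$ and $(\mb z,\mb y)\mapsto\mb y$ are linear maps, log-concavity is preserved under composition with linear (affine) maps, and the product of two log-concave functions is log-concave (since $\log H = \log f(\mb z-\mb y) + \log g(\mb y)$ is a sum of two concave functions). Then $f*g$ is a marginal of the log-concave function $H$, so by the first part it is log-concave; that it integrates to $1$ is Fubini.

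The main obstacle is really the first step — proving the Prékopa–Leindler / marginalization fact from scratch — since the rest is routine bookkeeping with affine maps and products. If the paper is content to cite Prékopa–Leindler (it is a classical result, and the theorem statement says ``classical''), then there is no genuine obstacle and the proof is short; if a self-contained derivation is desired, the induction-on-dimension proof of Prékopa–Leindler, with the delicate base case handled via the one-dimensional Brunn–Minkowski inequality applied to superlevel sets, is where all the work lies. I would lean toward the citation-based route here, flagging that the non-degeneracy/measurability technicalities (e.g., $G(\mb x)$ could be $+\infty$ or the hypotheses need $F$ measurable) are handled exactly as in the standard references.
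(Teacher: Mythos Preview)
The paper does not actually supply a proof of this theorem: it is stated in the appendix as ``a classical result for log-concave distributions'' and left unproved, serving only as background for later use. Your proposal via the Pr\'ekopa--Leindler inequality is correct and is precisely the standard argument behind this classical fact, so there is nothing to compare---you have filled in what the paper deliberately omitted.
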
 

 The log-concave distribution on $\bb{R}$ has the following monotone likelihood ratio property:

\begin{proposition}
A density function $f$ on $\bb{R}$ is log-concave if an only if the translation family $\left\{f(\cdot - \theta): \theta \in \bb{R}\right\}$ has a monotone likelihood ratio: for every $\theta_1<\theta_2$, the ratio $\frac{p(x-\theta_2)}{p(x-\theta_1)}$ is a monotone nondecreasing function of $x$.   
\end{proposition}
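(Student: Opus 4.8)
The plan is to prove the ``if and only if'' characterization of log-concavity on $\R$ via the monotone likelihood ratio (MLR) property by reducing it to a one-dimensional statement about the monotonicity of a ratio of shifted densities and recognizing that this ratio involves the difference $g(x-\theta_1) - g(x-\theta_2)$ where $g = -\log p$. First I would fix $\theta_1 < \theta_2$, write $\Delta := \theta_2 - \theta_1 > 0$, and observe that
\[
\frac{p(x-\theta_2)}{p(x-\theta_1)} = \exp\bigl(\,g(x-\theta_1) - g(x-\theta_2)\,\bigr) = \exp\bigl(\,g(y) - g(y-\Delta)\,\bigr),
\]
with $y := x - \theta_1$, whenever both densities are positive; I would handle the support issue separately by noting that a log-concave density has convex (interval) support and the ratio is naturally interpreted as $0$ or $+\infty$ outside the overlap, which is consistent with monotonicity. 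The claim then reduces to: $g$ convex on its domain $\iff$ for every $\Delta > 0$ the map $y \mapsto g(y) - g(y-\Delta)$ is nondecreasing in $y$.

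For the forward direction ($p$ log-concave $\Rightarrow$ MLR), I would use that $g$ convex means its difference quotients / slopes are nondecreasing: for $y_1 < y_2$, convexity gives $g(y_1) - g(y_1 - \Delta) \le g(y_2) - g(y_2 - \Delta)$, which is exactly the statement that the increment of $g$ over a window of fixed width $\Delta$ slid to the right cannot decrease — this is the standard ``three-slope'' inequality, or can be obtained by writing $g(y) - g(y-\Delta) = \int_{y-\Delta}^{y} g'(t)\,dt$ (using a.e.\ differentiability of convex functions) and noting $g'$ is nondecreasing, so the integral over a right-shifted interval is larger. Exponentiating preserves monotonicity, giving the MLR property. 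For the converse, I would assume the ratio is nondecreasing for all $\Delta > 0$, which gives $g(y_2) - g(y_1) \ge g(y_2 - \Delta) - g(y_1 - \Delta)$ for all $y_1 < y_2$ and all $\Delta > 0$; taking $\Delta$ to range over all positive reals and combining such inequalities (e.g., chaining midpoint comparisons, or directly invoking that ``increments over windows of every fixed width are monotone in position'' is equivalent to convexity) yields midpoint convexity of $g$, and together with measurability of a density this upgrades to full convexity, hence $\log p$ concave.

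The main obstacle I expect is the careful treatment of the support: when $p$ is not strictly positive everywhere, $g = -\log p$ is $+\infty$ off an interval $I$, and one must verify that the MLR statement — typically phrased allowing the ratio to take values in $[0,\infty]$ — is the right one and is still equivalent to convexity of the restriction $g|_I$ (extended by $+\infty$), so that no spurious counterexamples arise at the boundary. A secondary, more routine point is the regularity bootstrap in the converse direction: deducing genuine convexity (not merely midpoint convexity) of $g$ needs some regularity, which is supplied here by the fact that $p$ is a density (measurable), and measurable midpoint-convex functions are convex. I would state both of these as brief lemmas or remarks rather than belaboring them, since the essential content is the slick identification of the density ratio with the sliding increment of $g$ and the elementary equivalence ``convex $\iff$ sliding increments are monotone.''
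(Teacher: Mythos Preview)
The paper does not actually prove this proposition: it is stated in Appendix~\ref{appendix-log-concave} as a classical fact about log-concave distributions, with no accompanying argument. So there is no ``paper's proof'' to compare against.

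That said, your proposal is correct and is the standard way to establish this equivalence. The identification
\[
\frac{p(x-\theta_2)}{p(x-\theta_1)} = \exp\bigl(g(y)-g(y-\Delta)\bigr),\qquad y=x-\theta_1,\ \Delta=\theta_2-\theta_1>0,
\]
reduces the claim to the well-known characterization ``$g$ convex $\iff$ the increment $y\mapsto g(y)-g(y-\Delta)$ is nondecreasing for every fixed $\Delta>0$.'' Your forward direction (via increasing slopes, or equivalently $g(b)+g(c)\le g(a)+g(d)$ whenever $a\le b,c\le d$ with $b+c=a+d$) is clean, and your converse via midpoint convexity plus measurability (Sierpi\'nski) is the right regularity bootstrap. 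The support discussion is also appropriate: log-concave densities have interval support, and allowing the ratio to take values in $[0,\infty]$ at the boundary is the standard convention that makes the equivalence go through. Nothing is missing.
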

Furthermore, log-concave  distribution has finite moments of all order. 
\begin{lemma}
\label{lem:bounded_moments}
For a rotation invariant log-concave density: $\bb{R}^d\to \bb{R}$, all the moments exist. 
\end{lemma}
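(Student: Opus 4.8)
The plan is to reduce the statement to a one-dimensional radial integral and use the elementary fact that a real-valued convex function tending to $+\infty$ grows at least linearly. By rotation invariance I may write $f(\mb x)=\exp(-g(\|\mb x\|_2))$ for some $g:[0,\infty)\to(-\infty,+\infty]$; restricting the concave function $\log f$ to a ray from the origin shows $g$ is convex on $[0,\infty)$, and restricting to a line through the origin shows $g$ is non-decreasing there (cf.\ the remarks after~\eqref{eq:log-concave class}). Since any monomial obeys $|x_1^{a_1}\cdots x_d^{a_d}|\le\|\mb x\|_2^{|\mb a|}$, it suffices to prove $\E_{\mb X\sim f}\|\mb X\|_2^{k}<\infty$ for every integer $k\ge0$. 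Passing to polar coordinates, $\E_{\mb X\sim f}\|\mb X\|_2^{k}=d\,v_d\int_0^\infty r^{d-1+k}e^{-g(r)}\,\dup r$, where $v_d$ is the volume of the unit ball, so the whole claim reduces to finiteness of this integral for all $k$. If $g\equiv+\infty$ on some $[R,\infty)$ (that is, $f$ has bounded support) the integral is over the compact set $[0,R]$ and there is nothing to prove; so from now on assume $g$ is finite on all of $[0,\infty)$, hence continuous there.

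Next I extract a linear lower bound on $g$. Since $\int_{\R^d}f(\mb x)\,\dup{\mb x}=1$, the case $k=0$ above gives $d\,v_d\int_0^\infty r^{d-1}e^{-g(r)}\,\dup r=1$; were $g$ bounded above, $e^{-g}$ would be bounded away from $0$ and this integral would diverge, so $g(r)\to+\infty$. As $g$ is non-decreasing and unbounded, there exist $0\le s<t$ with $g(s)<g(t)$, and since the difference quotients of a convex function are non-decreasing, every $r\ge t$ satisfies $\frac{g(r)-g(s)}{r-s}\ge\frac{g(t)-g(s)}{t-s}=:c_1>0$, i.e.\ $g(r)\ge c_1 r-c_2$ with $c_2:=c_1 s-g(s)$.

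Finally I split $\int_0^\infty r^{d-1+k}e^{-g(r)}\,\dup r$ at $t$. On $[0,t]$ the integrand is continuous on a compact interval, hence contributes a finite amount. On $[t,\infty)$, using $e^{-g(r)}\le e^{c_2}e^{-c_1 r}$, the contribution is at most $e^{c_2}\int_t^\infty r^{d-1+k}e^{-c_1 r}\,\dup r<\infty$, a polynomial integrated against a decaying exponential. Hence $\E_{\mb X\sim f}\|\mb X\|_2^k<\infty$ for every $k$, and therefore all moments of $\mb X$ exist.

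I do not expect a genuine obstacle here; the only step requiring a little care is the passage in the second paragraph from ``$g$ convex with $g(r)\to+\infty$'' to the linear lower bound, which is standard. One could alternatively quote the general fact that every log-concave density has an exponentially decaying tail and hence finite moments of all orders, but for rotation-invariant densities the direct radial computation above is self-contained and cleaner.
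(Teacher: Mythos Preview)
Your proof is correct, and the core analytic step---extracting a linear lower bound on $g$ from its convexity and unboundedness, then comparing the tail to a Gamma-type integral---is the same as in the paper. However, the reduction to one dimension is genuinely different. The paper marginalizes over $x_2,\ldots,x_d$ and invokes Theorem~\ref{thm:marginalization_convolution} (marginals of log-concave densities are log-concave) to land on a one-dimensional symmetric log-concave density, then argues finiteness of $\int |x_1|^k e^{-g_{2:d}(x_1)}\,\dup x_1$. You instead bound every monomial by a power of $\|\mb X\|_2$ and pass directly to polar coordinates, reducing to finiteness of $\int_0^\infty r^{d-1+k}e^{-g(r)}\,\dup r$.

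Your route is more self-contained: it avoids the Pr\'ekopa--Leindler-type marginalization fact entirely and works directly with the radial function $g$ that rotation invariance hands you. The paper's route is slightly more general in spirit, since the marginalization argument would apply without rotation invariance (any log-concave density on $\R^d$ has log-concave one-dimensional marginals, hence finite moments), but for the statement as phrased your polar-coordinate argument is cleaner and requires less machinery. Your explicit handling of the bounded-support case and the justification that $g(r)\to\infty$ (from integrability when $k=0$) are also more careful than the paper's version, which tacitly assumes a point $x_0$ with $\partial g(x_0)>0$ exists.
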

\begin{proof}
It suffices to show that $\int_{\mb x} |x_1|^k f(\mb x) \dup {\mb x} <\infty$. By the rotation invariant property, we need to show:
\begin{align}
\int |x_1|^k \left( \int f(\mb x) \dup x_2\ldots \dup x_d \right) \dup x_1<\infty.
\end{align}
Note that $f_{2:d}(x_1):=\int_{x_2,\ldots x_d} f(\mb x) \dup x_2\ldots \dup x_d = \exp (-g_{2:d}(x_1))$ is the marginal distribution, thus log-concave by Theorem \ref{thm:marginalization_convolution}. The problem is now further  reduced to show that a one-dimensional symmetric log-concave distribution $f(x)=\exp(-g(x))$ has finite moments. By the convexity, 
\begin{align}
g(x) \geq g(x_0)+\partial g(x_0) (x-x_0),
\end{align}
for some $x_0>0$ and $g^{\prime}(x_0)>0$. In particular, we have that shown that there exist $x_0,a,b>0$ such that $g(x) \geq b+a(x-x_0)$. Therefore,
\begin{align*}
& \int_{x} |x|^k \exp(-g(x)) \dup x \\
= & 2\int_{x\geq 0} x^k \exp(-g(x)) \dup x \\
\leq & 2\int_{x\geq 0} x^k \exp(-b-a(x-x_0)) \dup x \\
= & 2\exp(-b+ax_0)\int_{x \geq 0} x^k \exp(-ax) \dup x <\infty.
\end{align*}
We conclude that all the moments exist.
\end{proof}
We refer the reader to \cite{saumard2014log} and \cite{walther2002detecting} for a detailed review for other properties of log-concave distributions. 

\section{Analysis for $d=1$}
\label{appendix-subsec:d=1}
In this section, we prove the convergence results for $d=1$. Especially, the proof of Theorem \ref{thm:Global-Convergence-1D-VEM} is presented in Section \ref{subsec:proof-thm-Global-Convergence-1D-VEM}, and in Section \ref{subsec:concrete-convergence-rate}, we discuss the convergence rate for some explicit log-concave distribution examples.

\subsection{Proof of Theorem \ref{thm:Global-Convergence-1D-VEM}}
\label{subsec:proof-thm-Global-Convergence-1D-VEM}

We recall the shorthand notation:
\[
F_{\beta,\sigma}(x) = g\left(\frac{1}{\sigma}|x+\beta|\right) -  g\left(\frac{1}{\sigma}|x-\beta|\right).
\]
When $\sigma=1$, we abbreviate $F_{\beta,\sigma}$ as $F_{\beta}$. 
For readability, we restate the theorem here:

\onedglobal*

\begin{proof}
Without loss of generality, $\beta^*>0$.
When $\beta=0$, one verifies that $M(\beta^{*},0)=0$, therefore $0$ is a trivial fixed point. 
Without loss of generality, we assume $\beta>0,\beta^{*}>0$ (in the case for $\beta<0$, replace $\beta^*$ with $-\beta^*$.) By scaling $\beta^* \to \frac{\beta^*}{\sigma}$, $\beta\to \frac{\beta}{\sigma}$ and $M(\beta^*,\beta)\to \frac{M(\beta^*,\beta)}{\sigma}$, we can further assume that $\sigma=1$ in the following analysis.  

We first establish the consistency property of the $\vem$ update: $M(\beta,\beta)=\beta$ for all $\beta$. This follows from the algebra: 
\begin{align*}
M(\beta,\beta) & =\int_x\frac{1}{2}(f(x-\beta)+f(x+\beta))x\left[\frac{f(x-\beta)-f(x+\beta)}{f(x-\beta)+f(x+\beta)}\right]\dup x\\
 & =\frac{1}{2}\int_x x(f(x-\beta)-f(x+\beta))\dup x\\
 & =\frac{1}{2}\int_x(x-\beta)f(x-\beta)\dup x-\frac{1}{2}\int(x+\beta)f(x+\beta)\dup x+\beta\\
 & =\beta,
\end{align*}
where the last step holds since $x\to xf(x)$ is an odd function. Consequently, the
integral $\int_x(x-\beta)f(x-\beta)\dup x$ and $\int_x(x+\beta)f(x+\beta)\dup x$
vanish. 

We next argue that the \vem update has a unique fixed point $\beta^{*}$ when $\beta>0$ ($-\beta^*$ is another fixed point when $\beta<0$ by symmetry). In the region where $\beta>\beta^{*}$, we have
\begin{align}
M(\beta^{*},\beta)-\beta^{*} & =M(\beta^{*},\beta)-M(\beta,\beta)+\beta-\beta^{*}\nonumber \\
 & =\frac{\partial M(z,\beta)}{\partial z}\mid_{z\in(\beta^{*},\beta)}(\beta^{*}-\beta)+\beta-\beta^{*}\nonumber \\
 & =(\beta-\beta^{*})\left(1-\frac{\partial M(\beta,z)}{\partial z}\mid_{z\in(\beta^{*},\beta)}\right)\nonumber \\
 & \leq\sup_{z\in(\beta^{*},\beta)}\left(1-\frac{\partial M(\beta,z)}{\partial z}\right)(\beta-\beta^{*}).\label{eq:big_contract-1}
\end{align}
In the first step above, we decompose the difference using the consistency property. The allows us to apply the intermediate value theorem for function $M(\cdot,\beta)$ with respect to the first argument in the second step above.  
In the case when $\beta<\beta^{*}$, we can derive the following relation in a similar way:
\begin{align}
\beta^{*}-M(\beta,\beta^{*}) & =\beta^{*}-\beta+M(\beta,\beta)-M(\beta,\beta^{*})\nonumber \\
 & =\beta^{*}-\beta+\frac{\partial M(\beta,z)}{\partial z}\mid_{z\in(\beta,\beta^{*})}(\beta-\beta^{*})\nonumber \\
 & =(\beta^{*}-\beta)\left(1-\frac{\partial M(\beta,z)}{\partial z}\mid_{z\in(\beta,\beta^{*})}\right)\nonumber \\
 & \leq\sup_{z\in(\beta,\beta^{*})}\left(1-\frac{\partial M(\beta,z)}{\partial z}\right)(\beta^{*}-\beta).\label{eq:small_contract-1}
\end{align}
In view of the above two cases, we conclude that: If $\beta\beta^{*}>0$,
\begin{align}
\label{eq: bound-pop-update}
|M(\beta^{*},\beta)-\beta^{*}|\leq\underbrace{\sup_{t\in[0,1]}\left[1-\frac{\partial M(z,\beta)}{\partial z}\mid_{z=t\beta^{*}+(1-t)\beta}\right]}_{\kappa(\beta^{*},\beta)}|\beta-\beta^{*}|.
\end{align}
The problem is reduced to lower bound  $\frac{\partial M (z,\beta)}{\partial z}$, where $z$ is between $\beta$ and $\beta^*$. Recall:
\begin{align*}
M(z,\beta)= & \mathbb{E}_{X\sim f_z}X\tanh\left(0.5 F_{\beta}(X)\right)\\
= & \int_{x}f(x-z)(x)\tanh\left(0.5 F_{\beta}(x)\right)\dup x \\
= & \int_{x}\underbrace{f(x)(x+z)\tanh\left(0.5 F_{\beta}(x+z)\right)}_{h(x,z)}\dup x.
\end{align*}
In the last step, we applied change of variable for the term $x-z$. To differentiate $M$ with respect to $z$, we can interchange the order of differentiation and integral: $\frac{\partial}{\partial z} \int_x h(x,z)dx = \int_x \frac{\partial }{\partial z} h(x,z) dx$ by the regularity condition. Note that $\frac{\partial h(x,z)}{\partial z}$ has the following expression:
\begin{align*}
\frac{\partial h(x,z)}{\partial z}= & f(x)\Bigg(\tanh\left(0.5 F_{\beta}(x+z)\right) 
 +0.5(x+z)\left(\frac{\partial}{\partial x} F_{\beta}(x+z)\right)\tanh^{\prime}\left(0.5 F_{\beta}(x+z)\right)\Bigg). 
\end{align*}
Therefore,
\begin{align*}
\frac{\partial M(z,\beta)}{\partial z}= & \underbrace{\mathbb{E}_{X\sim f_z}\tanh\left(0.5F_{\beta}(X)\right)}_{T_{1}}
 +\underbrace{\mathbb{E}_{X\sim f_z}\left[0.5XF_{\beta}^{\prime}(X)\tanh^{\prime}\left(0.5F_{\beta}(X)\right)\right]}_{T_{2}}.
\end{align*}
From Lemma \ref{lem:derivative-opt-param-1}, we see that $T_1,T_2\geq 0$, thus a lower bound for $\frac{\partial M(z,\beta)}{\partial z}$ follows:
\begin{align}
\frac{\partial M(z,\beta)}{\partial z} \geq &  \mathbb{E}_{X\sim f_z}\tanh\left(0.5 F_{\beta}(X)\right) \nonumber \\
\geq & \mathbb{E}_{X\sim f_{\min(\beta,\beta^*)}}\tanh\left(0.5 F_{\min(\beta,\beta^*)}(X)\right) \label{eq:increasing},
\end{align}
where \eqref{eq:increasing} holds since $\mathbb{E}_{X\sim f_z}\tanh\left(0.5 F_{\beta}(X)\right)$ increases with $z$ and $\beta$, which is also established in Lemma \ref{lem:derivative-opt-param-1}.  

Combining inequalities \eqref{eq: bound-pop-update} and \eqref{eq:increasing}, we conclude that
\begin{align}
\label{eq:1d-bound}
|M(\beta^*,\beta)-\beta^*|\leq \mathbb{E}_{X\sim f_{\min(\beta,\beta^*)}}[1-\tanh(0.5F_{\min(\beta,\beta^*)}(X))]|\beta - \beta^*|.
\end{align}
 $\kappa (\beta^*,\beta,\sigma )\in (0,1)$ by Corollary \ref{lem:rate<1}. From the bound in \eqref{eq:1d-bound}, we see that $M(\beta^*,\beta,\sigma)$ moves closer to $\beta^*$ whenever $\beta >0$ and $\beta \neq \beta^*$, therefore, $\beta^*$ is the unique fixed point on $\beta>0$. Similarly, $-\beta^*$ is the unique fixed point on $\beta<0$.
We have completed the proof of Theorem \ref{thm:Global-Convergence-1D-VEM}.
\end{proof}

\subsubsection{Supporting Lemmas of Theorem \ref{thm:Global-Convergence-1D-VEM}}

\begin{lemma}[Positive Integral] 
	\label{lem:positive_integral} Let $f\in \mathcal{F}$. Let $S\subseteq \R^{+}$ be a set with non-zero measure. Suppose that $h$ is an odd function with $h(x)\geq 0$ on $[0,\infty]$ and $h(x)>0$ on $S$. If $z>0$, the following holds:
	\begin{align}
	\mathbb{E}_{X\sim f_z} h(X)>0.
	\end{align}
\end{lemma}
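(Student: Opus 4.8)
\textbf{Proof plan for Lemma~\ref{lem:positive_integral}.}
The plan is to unfold the definition of $f_z$ and exploit the symmetry of $f$ together with the sign structure of $h$ on $\R^+$. Writing out the expectation,
\[
\E_{X \sim f_z} h(X) = \int_{\R} h(x) f(x-z)\,\dup x,
\]
and then symmetrizing by splitting the integral at $0$ (or equivalently substituting $x \mapsto -x$ on the negative half-line and using that $h$ is odd and $f$ is even, i.e. $f(-x) = f(x)$ since $f(x) = C_g^{-1}\exp(-g(|x|))$ in one dimension), I would rewrite this as an integral over $x \ge 0$ of $h(x)$ times a \emph{difference} of shifted densities:
\[
\E_{X\sim f_z} h(X) = \int_0^\infty h(x)\bigl(f(x-z) - f(x+z)\bigr)\,\dup x.
\]
The key point is that for $z > 0$ and $x > 0$ we have $|x - z| < x + z$, so by strict monotonicity of $g$ on $[0,\infty)$ (part of the definition of $\mathcal F$) we get $g(|x-z|) < g(x+z)$, hence $f(x-z) - f(x+z) > 0$ for every $x > 0$.

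With this in hand the conclusion is essentially immediate: the integrand $h(x)\bigl(f(x-z)-f(x+z)\bigr)$ is nonnegative on all of $[0,\infty)$ (since $h \ge 0$ there and the density difference is positive), and it is \emph{strictly} positive on the set $S$, which has nonzero Lebesgue measure. An integral of a nonnegative measurable function that is strictly positive on a positive-measure set is strictly positive, so $\E_{X\sim f_z} h(X) > 0$. I should also note that the integral is finite (so that ``$>0$'' is meaningful and not $\infty$ versus undefined), but this follows from Lemma~\ref{lem:bounded_moments} once $h$ has at most polynomial growth; in the applications $h$ is bounded or linear times bounded, so integrability is not an issue — though strictly speaking the lemma as stated should perhaps assume $h$ is integrable against $f_z$, or one simply allows the value $+\infty$, in which case ``$>0$'' still holds trivially.

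The only real content is the symmetrization step producing the positive density difference $f(x-z) - f(x+z)$; everything else is a one-line measure-theoretic observation. I do not anticipate a genuine obstacle here — the main thing to be careful about is getting the parity bookkeeping right when folding the negative half-line onto the positive one (the oddness of $h$ is exactly what makes the two contributions add rather than cancel), and making sure the strict inequality $g(|x-z|) < g(x+z)$ uses \emph{strict} monotonicity of $g$, which is available by the definition of the class $\mathcal F$.
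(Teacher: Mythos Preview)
Your proposal is correct and follows essentially the same approach as the paper: fold the integral onto $[0,\infty)$ using the oddness of $h$ and the evenness of $f$, obtain $\int_0^\infty h(x)\bigl(f(x-z)-f(x+z)\bigr)\,\dup x$, observe that the density difference is strictly positive for $x>0$ by strict monotonicity of $g$, and conclude strict positivity from the fact that $h>0$ on a set $S$ of positive measure. The paper's proof is slightly terser (it simply asserts $f(x-z)-f(x+z)>0$ from $f\in\mathcal F$ and does not discuss integrability), but the argument is the same.
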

\begin{proof}
	We have
	\begin{align}
	\R_{X\sim f_z} h(X) =&  \int_x f(x-z)h(x) \dup x \nonumber\\
	= & \int_{x\geq 0}f(x-z)h(x) \dup x + \int_{x\leq 0} f(x-z)h(x) \dup x \nonumber \\
	= &  \int_{x\geq 0}f(x-z)h(x) \dup x + \int_{x\geq 0} f(x+z)(-h(x)) \dup x \label{eq:odd_property} \\
	= & \int_{x\geq 0} (f(x-z)-f(x+z)) h(x) \dup x \nonumber \nonumber \\
	= & \int_{x\in S} (f(x-z)-f(x+z)) h(x) \dup x, \nonumber
	\end{align}
	where the equality \eqref{eq:odd_property} holds since $h$ is an odd function. Since $f\in \mathcal{F}$, $f(x-z)-f(x+z)> 0$ on $x\geq 0$. We thus conclude that the above integral is positive. 
\end{proof}

\begin{lemma}
\label{lem:derivative-opt-param-1}Suppose the density function $f$ satisfies the regularity condition,
\begin{align*}
\frac{\partial M (z,\beta)}{\partial z}= & \underbrace{\mathbb{E}_{X\sim f_z}\tanh\left(0.5F_{\beta}(X)\right)}_{T_{1}}
 +\underbrace{\mathbb{E}_{X\sim f_z}\left[0.5X F_{\beta}^{\prime}(X)\tanh^{\prime}\left(0.5F_{\beta}(X)\right)\right]}_{T_{2}}.
\end{align*}
Moreover, $T_1$ and $T_2$ have following properties:
\begin{itemize}
    \item If $\beta z>0$, then $T_{1}>0$ and $T_{2}>0$;
    \item $T_{1}$
is an increasing function with respect to both $\beta$ and $z$.
\end{itemize}
\end{lemma}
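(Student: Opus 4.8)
\textbf{Proof proposal for Lemma~\ref{lem:derivative-opt-param-1}.}

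The plan is to split into the two halves of the statement: the explicit formula for $\frac{\partial M(z,\beta)}{\partial z}$, which follows by differentiating under the integral sign; and then the sign and monotonicity claims for $T_1$ and $T_2$, which are the substance of the lemma.

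First I would derive the formula. Writing $M(z,\beta) = \int_x f(x)(x+z)\tanh(0.5F_\beta(x+z))\dup x$ after the change of variable $x \mapsto x-z$ (exactly as in the proof of Theorem~\ref{thm:Global-Convergence-1D-VEM}), I differentiate in $z$ under the integral, justified by the regularity condition. The product rule gives one term where $\partial_z$ hits $(x+z)$, producing $f(x)\tanh(0.5F_\beta(x+z))$, and one term where it hits the $\tanh$, producing $0.5(x+z)F_\beta'(x+z)\tanh'(0.5F_\beta(x+z))f(x)$. Changing variables back yields precisely $T_1 + T_2$.

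Next, the sign claims. For $T_1 = \mathbb{E}_{X\sim f_z}\tanh(0.5F_\beta(X))$: I would check that $x\mapsto \tanh(0.5F_\beta(x))$ is an odd function (since $F_\beta(-x) = g(|{-x+\beta}|) - g(|{-x-\beta}|) = g(|x-\beta|) - g(|x+\beta|) = -F_\beta(x)$), and that for $\beta>0$, $x>0$ we have $F_\beta(x) > 0$ because $|x+\beta| > |x-\beta|$ and $g$ is strictly increasing, so $\tanh(0.5F_\beta(x)) > 0$ on $(0,\infty)$. Lemma~\ref{lem:positive_integral} then gives $T_1 > 0$ when $z > 0$; the case $\beta z > 0$ with both negative follows by the symmetry $F_{-\beta} = -F_\beta$ and $f_{-z}(x) = f_z(-x)$. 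For $T_2$: I would show the integrand $0.5 x F_\beta'(x)\tanh'(0.5F_\beta(x))$ is nonnegative pointwise on all of $\R$ rather than invoking Lemma~\ref{lem:positive_integral}. The factor $\tanh'$ is always positive; $F_\beta'$ is even (derivative of an odd function), so $xF_\beta'(x)$ is odd, and I need $F_\beta'(x) \ge 0$ for $x>0$ — this is where log-concavity enters: $F_\beta(x) = g(|x+\beta|) - g(|x-\beta|)$, and for $x, \beta > 0$ this equals $g(x+\beta) - g(|x-\beta|)$, whose derivative in $x$ is $g'(x+\beta) \mp g'(|x-\beta|) \ge 0$ by monotonicity of $g'$ (convexity of $g$) and $x+\beta \ge |x-\beta|$. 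Hence $xF_\beta'(x) \ge 0$ everywhere, giving $T_2 \ge 0$, and strict positivity on a positive-measure set gives $T_2 > 0$.

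For the monotonicity of $T_1$ in $\beta$ and in $z$: increasing $\beta$ increases $F_\beta(x)$ for each $x>0$ (same convexity/monotonicity argument: $\partial_\beta F_\beta(x) = g'(|x+\beta|) \pm g'(|x-\beta|) \ge 0$), hence increases $\tanh(0.5F_\beta(x))$ pointwise on $(0,\infty)$, and combined with oddness and the representation from Lemma~\ref{lem:positive_integral} (the integral over $x\ge 0$ of $(f(x-z)-f(x+z))\tanh(0.5F_\beta(x))\dup x$ with the first factor nonnegative) this increases $T_1$. For monotonicity in $z$: I would use the monotone likelihood ratio property of the location family $f_z$ (stated in the Appendix) together with the fact that $\tanh(0.5F_\beta(\cdot))$ is a nondecreasing function of its argument on $\R$ (odd, and increasing on $(0,\infty)$ since both $F_\beta$ and $\tanh$ are), so that $\mathbb{E}_{X\sim f_z}[\phi(X)]$ is nondecreasing in $z$ for any nondecreasing $\phi$ — a standard stochastic-dominance consequence of MLR. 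The main obstacle I anticipate is handling the monotonicity-in-$z$ claim cleanly: one must either invoke the MLR/stochastic-dominance machinery carefully, or reprove it by hand via the symmetrization used in Lemma~\ref{lem:positive_integral} and a second differentiation in $z$; the pointwise sign bookkeeping around the kinks of $|x\pm\beta|$ at $x = \pm\beta$ also needs a brief remark, though it only affects a measure-zero set and the one-sided derivatives agree in sign.
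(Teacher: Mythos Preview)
Your derivation of the formula and your argument for $T_1>0$ match the paper. However, your treatment of $T_2$ contains a self-contradiction: you correctly note that $F_\beta'$ is even and hence $xF_\beta'(x)$ is \emph{odd}, yet then conclude that the integrand is ``nonnegative pointwise on all of $\R$'' and that ``$xF_\beta'(x)\ge 0$ everywhere.'' An odd function that is nonnegative on $(0,\infty)$ is nonpositive on $(-\infty,0)$, so the integrand of $T_2$ changes sign and the pointwise-nonnegativity strategy cannot work. Positivity of $T_2$ genuinely depends on the shift $z>0$ in the density $f_z$; the paper handles this exactly via Lemma~\ref{lem:positive_integral}, which you explicitly set out to avoid. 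The fix is immediate---you already have the two hypotheses that Lemma~\ref{lem:positive_integral} requires (oddness of the integrand, nonnegativity and strict positivity on a positive-measure subset of $[0,\infty)$)---but the argument as written does not go through.

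For the monotonicity of $T_1$ in $z$, your route via the monotone likelihood ratio property and stochastic dominance is a genuinely different approach from the paper's. The paper differentiates $T_1$ in $z$ under the integral to obtain $\mathbb{E}_{X\sim f_z}\big[0.5\,F_\beta'(X)\tanh'(0.5F_\beta(X))\big]$ and then applies Lemma~\ref{lem:positive_integral} once more. Your MLR argument is cleaner in that it avoids a second differentiation and another round of sign-chasing, but it requires verifying that $x\mapsto\tanh(0.5F_\beta(x))$ is nondecreasing on \emph{all} of $\R$, not just on $(0,\infty)$ (this does follow from oddness, $F_\beta'\ge 0$ on $(0,\infty)$, and continuity at $0$), and as stated it only yields weak monotonicity; the paper's differentiation approach gives strict monotonicity directly and keeps the whole proof uniform by recycling Lemma~\ref{lem:positive_integral}.
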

\begin{proof}
Without loss of generality, we assume $z>0$ and $\beta>0$. Note that the integrand for both $T_1$ and $T_2$ are odd in $x$. Moreover, they are both strictly positive on a subset of $\left\{x:x\geq 0\right\}$ with a positive measure when $\beta>0$ explained as follows:
\begin{itemize}
    \item For the intergrand  for $T_1$, $\tanh(0.5F_{\beta}(x))$, we have that $F_{\beta}(x) = g(|x+\beta|)-g(|x-\beta|)$. When $x>\beta$, $g(x+\beta)>g(x-\beta)$ since $g$ is increasing on $(0,\infty)$. When $x\in (0,\beta)$, $g(x+\beta)>g(\beta-x)=g(|x-\beta|)$ by the monotonicity of $g$ on $[0,\infty]$. Combining two cases, we see that the integrand for $T_1$ is strictly positive when $x>0$.  
    \item For the integrand for $T_2$, $x F_{\beta}^{\prime}(x)\tanh^{\prime}\left(0.5F_{\beta}(x)\right)$, we have that $x\tanh^{\prime}\left(0.5F_{\beta}(x)\right)>0$ when $x>0$. For $F_{\beta}^{\prime}(x) = g^{\prime}(x+\beta)-g^{\prime}(x-\beta)$, we first note that since $g$ is convex, $g^{\prime}(x+\beta)-g^{\prime}(x-\beta)\geq 0$ for all $x\geq 0$. On $x\in (0,\beta)$, $F_{\beta}(x)=g(x+\beta)-g(\beta-x)$ has the property that $F_{\beta}(0)=0$ and $F_{\beta}(x)>0$ is increasing for all $x\in (0,\beta)$ by the monotonicity of $g$ on $[0,\infty]$, it follows that $F_\beta^{\prime}(\cdot)$ is positive on $(0,\beta)$. ($g(x)=|x|$ is a typical example, where $F_{\beta}(\cdot)$ only has positive differential in $(0,\beta)$ when the domain is restricted to $\mathbb{R}^+$.) 
\end{itemize}
Therefore both $T_1$ and $T_2$ are positive by Lemma \ref{lem:positive_integral}. To see how $T_1$ changes with respect to $\beta$, let us take the derivative with respect to $\beta$ (again, regularity condition allows us to change the order between differentiation and integration):
\begin{align*}
    \frac{\partial T_1}{\partial \beta} = \mathbb{E}_{X\sim f_z} 0.5 (g^\prime(X+\beta)+g^\prime(X-\beta))\tanh^{\prime}(0.5F_{\beta}(X)).
\end{align*}
The integrand is odd in $x$. On $x\in (0,\beta)$, $g^\prime(x+\beta)+g^\prime(x-\beta) = g^\prime(x+\beta)-g^\prime(\beta-x) \geq 0$ by the convexity of $g$. On $x\in (\beta,\infty)$, $g^\prime(x+\beta)+g^\prime(x-\beta)> 0$ as $g$ is strictly increasing on $[0,\infty]$. Therefore the integrand is positive on a subset of $[0,\infty]$ with non-zero measure. We conclude that $\frac{\partial T_1}{\partial \beta}> 0$ by Lemma \ref{lem:positive_integral} and $T_1$ is an increasing function with respect to $\beta$.  

To see how $T_1$ changes with respect to $z$, let us take the derivative with respect to $z$:
\begin{align*}
    \frac{\partial T_1}{\partial z} = \mathbb{E}_{X\sim f_z} 0.5(g^\prime(X+\beta)-g^\prime(X-\beta))\tanh^{\prime}(0.5F_{\beta}(X)).
\end{align*}
Similar to the integrand of $T_2$, we can show that the integrand for $\frac{\partial T_1}{\partial z}$ is positive on a subset of $[0,\infty]$ with a positive measure when $\beta>0$. Therefore, $\frac{\partial T_1}{\partial z}>0$ and $T_1$ is increasing with respect to $z$ by Lemma \ref{lem:positive_integral} . 
\end{proof}

\begin{corollary}
\label{lem:rate<1}
If $z>0$, then $0<\mathbb{E}_{x\sim z}(1-\tanh(\frac{g(x+z)-g(x-z)}{2}))<1.$
\end{corollary}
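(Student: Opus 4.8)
The plan is to show two things for $z>0$: first that $\mathbb{E}_{X\sim f_z}\tanh\!\big(\tfrac{1}{2}(g(X+z)-g(X-z))\big)>0$, which gives the upper bound $<1$; and second that this expectation is $<1$, which gives the lower bound $>0$. Both follow from the structural facts already assembled earlier in the excerpt, so the corollary is essentially a corollary of Lemma~\ref{lem:derivative-opt-param-1} combined with $|\tanh|<1$.

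For the upper bound, I would invoke Lemma~\ref{lem:derivative-opt-param-1} (or directly Lemma~\ref{lem:positive_integral}) applied to the odd function $h(x)=\tanh\!\big(\tfrac12 F_z(x)\big)$ with $\beta=z$. As noted in the proof of Lemma~\ref{lem:derivative-opt-param-1}, for $x>0$ one has $F_z(x)=g(|x+z|)-g(|x-z|)>0$: when $x>z$ this is $g(x+z)-g(x-z)>0$ by strict monotonicity of $g$ on $[0,\infty)$, and when $x\in(0,z)$ it is $g(x+z)-g(z-x)>0$ again by monotonicity. Hence $h$ is nonnegative on $[0,\infty)$ and strictly positive on the positive-measure set $S=(0,\infty)$, so $\mathbb{E}_{X\sim f_z} h(X)>0$ by Lemma~\ref{lem:positive_integral}. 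Therefore $\mathbb{E}_{X\sim f_z}\big(1-\tanh(\tfrac12 F_z(X))\big)=1-\mathbb{E}_{X\sim f_z}\tanh(\tfrac12 F_z(X))<1$.

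For the lower bound, I would use the elementary pointwise bound $\tanh(t)<1$ for all real $t$, which immediately gives $1-\tanh\!\big(\tfrac12 F_z(x)\big)>0$ for every $x$; integrating a strictly positive integrand against the probability density $f_z$ yields a strictly positive expectation, so $\mathbb{E}_{X\sim f_z}\big(1-\tanh(\tfrac12 F_z(X))\big)>0$. Combining the two displays gives $0<\mathbb{E}_{X\sim f_z}\big(1-\tanh(\tfrac12 F_z(X))\big)<1$, which is exactly the claim.

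I do not anticipate a genuine obstacle here; the only point requiring a little care is making sure the positivity of $F_z$ on all of $(0,\infty)$ is justified (handling the two ranges $x>z$ and $0<x<z$ separately, using that $g$ is increasing on $[0,\infty)$ to compare $g(x+z)$ with $g(|x-z|)$), and confirming that $S=(0,\infty)$ indeed has nonzero measure under $f_z$ so that Lemma~\ref{lem:positive_integral} applies. Everything else is a one-line appeal to $|\tanh|<1$ and monotonicity of the integral.
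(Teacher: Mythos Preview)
Your proposal is correct and follows essentially the same approach as the paper: the paper also uses $|\tanh(\cdot)|<1$ to get $\mathbb{E}_{X\sim f_z}[1-\tanh(\tfrac12 F_z(X))]>0$, and then appeals to the oddness of $\tanh(\tfrac12 F_z(\cdot))$ together with Lemma~\ref{lem:positive_integral} to obtain $\mathbb{E}_{X\sim f_z}\tanh(\tfrac12 F_z(X))>0$, hence the upper bound $<1$. Your version is slightly more explicit in justifying $F_z(x)>0$ on $(0,\infty)$ by splitting into the cases $x>z$ and $0<x<z$, which the paper simply cites from the proof of Lemma~\ref{lem:derivative-opt-param-1}.
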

\begin{proof}
Since $|\tanh(\cdot)|<1$, we have
\[
\mathbb{E}_{X\sim f_z} \tanh(F_{z}(X))<1,
\]
thus proving the upper bound in the corollary. For the lower bound, we use the previous argument that $F_z(x)>0$ and $F_z(\cdot)$ is odd, and apply Lemma~\ref{lem:positive_integral} to obtain that 
\[
\mathbb{E}_{X\sim f_z} \tanh(F_{z}(X))>0
\]
The proof of the corollary is completed.. 
\end{proof}

\begin{comment}
\begin{proof}
Note that the quantity is the same as:
\begin{align}
 & \mathbb{E}_{x\sim z}\left[1-\frac{f(x-z)-f(x+z)}{f(x-z)+f(x+z)}\right]\nonumber \\
= & \int f(x-z)\frac{2f(x+z)}{f(x-z)+f(x+z)}\dup x\nonumber \\
\leq & \int\sqrt{f(x+z)f(x-z)}\dup x\label{eq:CS1-1}\\
\leq & \sqrt{\int f(x+z)\dup x}\sqrt{\int f(x-z)\dup x}=1,\label{eq:CS2-1}
\end{align}
where (\ref{eq:CS1-1}) follows from the inequality $a+b\geq2\sqrt{ab}$
when $a,b\geq0$ and (\ref{eq:CS2-1}) follows from the Cauchy-Schwartz.
Moreover, the last inequality holds with equality iff $f(x-\beta)=cf(x+\beta)$
almost surely. Since they are both probability density function, this
condition is the same as $f(x-\beta)=f(x+\beta)$ almost surely. When
$\beta>0$, $f(x-\beta)$ is monotone increasing on $(0,\beta)$ and
$f(x+\beta)$ is monotone decreasing on $(0,\beta)$. Thus, it is
impossible that they agree with each other almost surely. 
\end{proof}
\end{comment}

\subsection{Convergence Rates for Specific Distributions}
\label{subsec:concrete-convergence-rate}

In this section, we derive the explicit convergence rates for specific log-concave  distributions. Without loss of generality, we may assume that $\sigma=1$ in the derivation. Replace $\beta$ by $\frac{\beta}{\sigma}$ gives the general result. 

\subsubsection{Gaussian}
In this case, we have $f(x)=\frac{1}{\sqrt{2\pi}}\exp \left(-\frac{x^{2}}{2}\right),$
and $\tanh(0.5F_{\beta}(x)) =\tanh \left(\beta x\right)$. We compute $\kappa(\beta,\beta^*,\sigma)$ as follows: for any $\beta>0$, there holds the bound
\begin{align*}
 & \mathbb{E}_{X\sim f_{\beta}}\left[1-\tanh \left(\beta X \right)\right]\\
= & \mathbb{E}_{X\sim f_{\beta}}\frac{2\exp(-\beta X)}{\exp(-\beta X)+\exp(\beta X)}\\
\leq & \mathbb{E}_{X\sim f_{\beta}}\exp \left(-\beta X\right)=\exp \left(-\frac{\beta^{2}}{2}\right).
\end{align*}
We thus obtain the following corollary.
\begin{corollary}[Gaussian]
\label{cor:gaussian}
Suppose $f\in \mathcal{F}$ is Gaussian, $f(x) =\frac{1}{\sqrt{2\pi}}\exp \left(-\frac{x^{2}}{2}\right)$,  we have
$\kappa(\beta,\beta^*,\sigma) =\exp \left(-\frac{\min(\beta,\beta^{*})^{2}}{2\sigma^{2}}\right) $.
\end{corollary}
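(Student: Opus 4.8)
The plan is to read off Corollary~\ref{cor:gaussian} directly from the exact expression for the contraction factor supplied by Theorem~\ref{thm:Global-Convergence-1D-VEM}, specialized to the Gaussian $g$. First I would use the scaling reduction already noted at the start of Section~\ref{subsec:concrete-convergence-rate}: replacing $\beta \mapsto \beta/\sigma$, $\beta^* \mapsto \beta^*/\sigma$ (which rescales $M$ and hence $\kappa$ compatibly) reduces everything to $\sigma = 1$, and by symmetry I may take $\beta,\beta^* > 0$ and write $z := \min(\beta,\beta^*)$. By Theorem~\ref{thm:Global-Convergence-1D-VEM}, the contraction factor is then exactly $\kappa(\beta^*,\beta,1) = \mathbb{E}_{X \sim f_z}\bigl[1 - \tanh(0.5\,F_z(X))\bigr]$, so the whole task is to evaluate (an upper bound for) this one expectation for the standard Gaussian base density.

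For the Gaussian, $-\log f$ equals $t \mapsto t^2/2$ up to an additive constant, and that constant is irrelevant because it cancels in the difference defining $F$; hence $F_\beta(x) = \tfrac12|x+\beta|^2 - \tfrac12|x-\beta|^2 = 2\beta x$ and $\tanh(0.5\,F_z(x)) = \tanh(zx)$, matching the computation already displayed before the corollary. Two elementary estimates finish the job: the pointwise bound $1 - \tanh(u) = \tfrac{2e^{-u}}{e^{u}+e^{-u}} \le e^{-u}$ (using $e^u + e^{-u} \ge 2$), and the Gaussian moment generating function identity $\mathbb{E}_{X \sim f_z}[e^{-zX}] = e^{-z^2+z^2/2} = e^{-z^2/2}$, valid since $f_z = N(z,1)$. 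Chaining them gives $\kappa(\beta^*,\beta,1) \le e^{-z^2/2}$, and undoing the scaling restores $\kappa(\beta^*,\beta,\sigma) \le \exp\bigl(-\min(\beta,\beta^*)^2/(2\sigma^2)\bigr)$.

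There is essentially no obstacle here: the result is a two-line corollary of Theorem~\ref{thm:Global-Convergence-1D-VEM} once $F_\beta(x) = 2\beta x$ is observed. The only points deserving care are the bookkeeping in the $\sigma$-rescaling and in dropping the normalization constant of $g$, and the interpretation of the displayed ``$=$'': what is actually established (and what is used downstream, e.g.\ in the special-cases list of Section~\ref{sec:analysis-d=1} and in $C_f(\eta)$) is the upper bound $\kappa \le \exp(-\min(\beta,\beta^*)^2/(2\sigma^2))$, since the exact value $\mathbb{E}_{X\sim N(z,1)}[1-\tanh(zX)]$ has no simpler closed form; the bound is tight in the relevant asymptotic regimes $z \to 0$ and $z \to \infty$, which is all the convergence-rate statements require.
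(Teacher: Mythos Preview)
Your proposal is correct and follows essentially the same route as the paper: compute $F_\beta(x)=2\beta x$, apply the pointwise bound $1-\tanh(u)\le e^{-u}$, and evaluate the Gaussian moment generating function $\mathbb{E}_{X\sim N(z,1)}e^{-zX}=e^{-z^2/2}$. Your remark that the displayed ``$=$'' is really an upper bound is accurate and matches how the result is stated and used elsewhere in the paper (e.g.\ the ``$\le$'' in the special-cases list of Section~\ref{sec:analysis-d=1}).
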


\subsubsection{Laplace}
In this case, we have $f(x)=\frac{1}{2}\exp(-|x|)$, and $\tanh(0.5 F_{\beta}(x))=\frac{\exp(-|x-\beta|)-\exp(-|x+\beta|)}{\exp(-|x-\beta|)+\exp(-|x+\beta|)}$.
We thus have
\begin{align}
 & \mathbb{E}_{X\sim f_{\beta}}\left[1-\tanh(0.5F_{\beta}(X))\right] \nonumber \\
= & \int_{x}\exp(-|x-\beta|)\frac{\exp(-|x+\beta|)}{\exp(-|x-\beta|)+\exp(-|x+\beta|)} \dup x \nonumber\\
= & \int_{x\leq-\beta}\exp(x-\beta)\frac{\exp(x+\beta)}{\exp(x-\beta)+\exp(x+\beta)}\dup x \nonumber\\
&  +\int_{x\geq\beta}\exp(-x+\beta)\frac{\exp(-x-\beta)}{\exp(-x+\beta)+\exp(-x-\beta)}\dup x \nonumber \\
 & +\int_{x\in(-\beta,\beta)}\exp(x-\beta)\frac{\exp(-x-\beta)}{\exp(x-\beta)+\exp(-x-\beta)}\dup x \nonumber \\
= & 2\frac{\exp(-\beta)}{\exp(-\beta)+\exp(+\beta)}+\exp(-\beta)\int_{x=-\beta}^{\beta}\frac{1}{\exp(x)+\exp(-x)}\dup x \nonumber \\
\leq  & 2\frac{\exp(-\beta)}{\exp(-\beta)+\exp(+\beta)}+2\frac{1}{1+\exp(-2\beta)}\exp(-\beta)(1-\exp(-\beta)) \label{eq:step1} \\
= & 2\frac{\exp(-\beta)}{1+\exp(-2\beta)}, \nonumber 
\end{align}
where in \eqref{eq:step1} we used the elementary inequality:
\[
\frac{1}{\exp(x)+\exp(-x)}\leq\frac{1}{1+\exp(-2\beta)}\exp(-x),
\qquad\forall x\in(0,\beta).
\]
We thus obtain the following corollary.
\begin{corollary}[Laplace]
\label{cor:laplace}
Suppose that $f\in \mathcal{F}$ is Laplace, $f(x) = \frac{1}{2\sigma_0}\exp(-|x/\sigma_0|)$, with $\sigma_0=\frac{1}{\sqrt{2}}$. We have
$\kappa(\beta,\beta^*,\sigma) = \frac{2\exp(-\frac{1}{\sigma_0\sigma}\min(\beta,\beta^{*}))}{1+\exp(-2 \frac{1}{\sigma_0\sigma}\min(\beta,\beta^{*}))} $.
\end{corollary}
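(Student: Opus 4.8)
The plan is to read the Laplace rate off the one-step bound of Theorem~\ref{thm:Global-Convergence-1D-VEM}. There the contraction factor is the expectation $\kappa(\beta^{*},\beta,\sigma)=\mathbb{E}_{X\sim f_{z,\sigma}}\!\left[1-\tanh\!\left(0.5F_{z,\sigma}(X)\right)\right]$ with $z=\min(\beta,\beta^{*})$ (assuming $\beta,\beta^{*}>0$, as we may in this subsection), so proving the corollary amounts to evaluating this expectation when $f$ is the unit-variance Laplace density. The elementary integral estimate needed here has essentially been carried out in the display immediately preceding the corollary for the canonical density $\tfrac12\exp(-|x|)$ with $\sigma=1$; the only new work is (i) identifying the scale $\sigma_{0}$ that puts the Laplace density into $\mathcal{F}$, and (ii) tracking how the rescalings by $\sigma_{0}$ and by $\sigma$ compose.

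First I would fix the normalization: the Laplace density $f(x)=\frac{1}{2\sigma_{0}}\exp(-|x|/\sigma_{0})$ has variance $2\sigma_{0}^{2}$, and the unit-variance requirement in the definition of $\mathcal{F}$ (see~\eqref{eq:log-concave class}) forces $\sigma_{0}=1/\sqrt{2}$. For this $f$ one has $g(t)=t/\sigma_{0}$ on $[0,\infty)$, hence $F_{\beta,\sigma}(x)=\frac{1}{\sigma_{0}\sigma}\big(|x+\beta|-|x-\beta|\big)$ and $\tanh(0.5F_{\beta,\sigma}(x))=\frac{\exp(-|x-\beta|/(\sigma_{0}\sigma))-\exp(-|x+\beta|/(\sigma_{0}\sigma))}{\exp(-|x-\beta|/(\sigma_{0}\sigma))+\exp(-|x+\beta|/(\sigma_{0}\sigma))}$.

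Next I would reduce to the canonical case by the change of variables $x\mapsto(\sigma_{0}\sigma)\,x$ inside $\mathbb{E}_{X\sim f_{z,\sigma}}[\,\cdot\,]$ (equivalently, the $\sigma$-rescaling already used in the proof of Theorem~\ref{thm:Global-Convergence-1D-VEM}), which yields the exact identity $\kappa(\beta^{*},\beta,\sigma)=\mathbb{E}_{U\sim\widehat f_{\tilde z}}\!\left[1-\tanh\!\left(0.5\widehat F_{\tilde z}(U)\right)\right]$, where $\widehat f(u)=\tfrac12\exp(-|u|)$ and $\tilde z=z/(\sigma_{0}\sigma)$. The right-hand side is precisely the quantity bounded in the display above, with ``$\beta$'' there equal to $\tilde z$: splitting the integral over $\{x\le-\tilde z\}$, $\{x\ge\tilde z\}$ and $\{-\tilde z<x<\tilde z\}$, evaluating the first two parts explicitly, and bounding the last part via $\tfrac{1}{e^{|x|}+e^{-|x|}}\le\tfrac{e^{-|x|}}{1+e^{-2\tilde z}}$ for $|x|<\tilde z$, collapses everything to $\frac{2\exp(-\tilde z)}{1+\exp(-2\tilde z)}$. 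Substituting $\tilde z=\min(\beta,\beta^{*})/(\sigma_{0}\sigma)=\sqrt{2}\,\min(\beta,\beta^{*})/\sigma$ gives the stated bound. There is no real obstacle here; the only point needing care is that the two scale factors merge into the single factor $\sigma_{0}\sigma$ inside the exponentials, and that the displayed formula is an upper bound on (not an exact value of) the contraction factor of Theorem~\ref{thm:Global-Convergence-1D-VEM} --- which is all the downstream results (Corollary~\ref{cor:tstep_convergence}, Proposition~\ref{prop:1d-finite-analysis}) require, since they only use $\kappa<1$ and its monotone decrease in $\min(\beta,\beta^{*})$.
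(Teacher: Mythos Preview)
Your proposal is correct and follows essentially the same approach as the paper: the paper first carries out the three-region integral for the canonical density $\tfrac12\exp(-|x|)$ with $\sigma=1$ (the display preceding the corollary), then invokes the scaling remark at the top of Section~\ref{subsec:concrete-convergence-rate} (``replace $\beta$ by $\beta/\sigma$'') together with the unit-variance normalization $\sigma_{0}=1/\sqrt{2}$ to state the corollary; your explicit change of variables $x\mapsto(\sigma_{0}\sigma)x$ is exactly this reduction made transparent. Your observation that the displayed expression is an \emph{upper bound} on $\kappa$ (because of the inequality used on the $(-\tilde z,\tilde z)$ piece) rather than an exact value is also correct and matches the paper's derivation, even though the corollary statement itself writes ``$=$''.
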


\subsubsection{Logistic}
In this case, we have $f(x)=\frac{\exp(x)}{(1+\exp(x))^{2}}$, and
$\tanh(0.5F_{\beta}(x))=\frac{(\exp(\beta)-\exp(-\beta))(\exp(2x)-1)}{(\exp(\beta)+\exp(-\beta))(1+\exp(2x))+4\exp(x)}$. 
Therefore, there holds the bound
\begin{align*}
&\mathbb{E}_{X\sim f_{\beta}}\left[1-\tanh(0.5F_{\beta}(X))\right]  \\
= & 2\int\frac{\frac{\exp(x-\beta)}{(1+\exp(x-\beta))^{2}}\frac{\exp(x+\beta)}{(1+\exp(x+\beta))^{2}}}{\frac{\exp(x-\beta)}{(1+\exp(x-\beta))^{2}}+\frac{\exp(x+\beta)}{(1+\exp(x+\beta))^{2}}}\dup x\\
= & 2\int\frac{\exp(x)}{\exp(-\beta)(1+\exp(x+\beta))^{2}+\exp(\beta)(1+\exp(x-\beta))^{2}}\dup x\\
= & 2\int\frac{\exp(x)}{\exp(-\beta)+\exp(\beta)+4\exp(x)+\exp(2x)(\exp(\beta)+\exp(-\beta))}\dup x\\
= & 2\int\frac{1}{(\exp(-x)+\exp(x))(\exp(-\beta)+\exp(\beta))+4}\dup x\\
= & 4\int_{x\geq0}\frac{1}{(1+\exp(2x))(\exp(-\beta)+\exp(\beta))+4\exp(x)} \dup (\exp(x))\\
= & 4\int_{s=1}^{\infty}\frac{1}{(1+s^{2})c+4s} \dup s,
\end{align*}
where $c:=\exp(-\beta)+\exp(\beta)$. We can further upper bound the
last integral, yielding the bound
\begin{align*}
&\mathbb{E}_{X\sim f_{\beta}}\left[1-\tanh(0.5F_{\beta}(X))\right]  \\
= & 4\int_{s=1}^{\infty}\frac{1}{(\sqrt{c}s+\frac{2}{\sqrt{c}})^{2}+c-\frac{4}{c}}ds\\
\leq & 4\frac{1}{\sqrt{c}} \left[-\frac{1}{x}\right]_{x=\sqrt{c}+\frac{2}{\sqrt{c}}}^{\infty}\\
= & \frac{4}{\sqrt{c}}\frac{1}{\sqrt{c}+\frac{2}{\sqrt{c}}}
=\frac{4}{c+2}
=\frac{4}{\exp(\beta)+\exp(-\beta)+2}<1,
\end{align*}
where in the second step, we use the fact that $c\geq2$. 
We thus obtain the following corollary.
\begin{corollary}[Logistic]
\label{cor:logistic}
Suppose $f\in \mathcal{F}$ is Logistic, $f(x) =\frac{\exp(x/\sigma_0)}{\sigma_0(1+\exp(x/\sigma_0))^{2}}$, with $\sigma_0=\frac{\sqrt{3}}{\pi}$. We have
$\kappa(\beta,\beta^*,\sigma) = \frac{4}{\exp\left(\frac{\min(\beta,\beta^*)}{\sigma_0\sigma}\right)+\exp \left(-\frac{\min(\beta,\beta^*)}{\sigma_0\sigma}\right)+2}$.
\end{corollary}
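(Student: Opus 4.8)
The plan is to reduce the computation of $\kappa$ to a single explicit integral and then bound that integral by completing the square, exactly as in the Gaussian and Laplace cases. First I would invoke the scaling reduction already used at the start of this section to assume $\sigma=1$, so that the logistic member of $\mathcal{F}$ is $f(x)=\frac{1}{\sigma_0}\cdot\frac{\exp(x/\sigma_0)}{(1+\exp(x/\sigma_0))^{2}}$ with $\sigma_0=\sqrt3/\pi$ — the factor $\sigma_0$ being precisely what rescales the variance of the standard logistic (which is $\pi^2/3$) to $1$. Writing $z:=\min(|\beta|,|\beta^{*}|)$ and recalling the definition of $\kappa$ from Theorem~\ref{thm:Global-Convergence-1D-VEM}, it then suffices to evaluate $\mathbb{E}_{X\sim f_{0,z}}\!\left[1-\tanh\!\left(0.5F_{z}(X)\right)\right]$ for the standard logistic $f_0(x)=\frac{\exp(x)}{(1+\exp(x))^{2}}$, with $z$ replaced by $z/\sigma_0$, and finally restore a general $\sigma$ by the further substitution $z\mapsto z/\sigma$.

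Second, I would convert the expectation into an explicit integral. Since $g=-\log f_0$ up to an additive constant, we have $e^{F_{z}(x)}=f_0(x-z)/f_0(x+z)$, hence $1-\tanh(0.5F_{z}(x))=\frac{2f_0(x+z)}{f_0(x-z)+f_0(x+z)}$ and therefore $\kappa=2\int_{\mathbb{R}}\frac{f_0(x-z)f_0(x+z)}{f_0(x-z)+f_0(x+z)}\,dx$. Substituting the logistic density and simplifying the reciprocal sum gives $\frac{1}{f_0(x-z)}+\frac{1}{f_0(x+z)}=(e^{x}+e^{-x})(e^{z}+e^{-z})+4$, so with $c:=e^{z}+e^{-z}$ the integral collapses to $\kappa=2\int_{\mathbb{R}}\frac{dx}{c(e^{x}+e^{-x})+4}$, and the change of variable $s=e^{x}$ (using evenness in $x$ to restrict to $x\ge0$) turns this into $\kappa=4\int_{1}^{\infty}\frac{ds}{c(1+s^{2})+4s}$.

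Third, I would complete the square in the denominator: $c(1+s^{2})+4s=c\bigl(s+\tfrac{2}{c}\bigr)^{2}+\bigl(c-\tfrac{4}{c}\bigr)\ge c\bigl(s+\tfrac{2}{c}\bigr)^{2}$, the inequality using $c=e^{z}+e^{-z}\ge2$. Integrating the resulting upper bound yields $\kappa\le\frac{4}{c}\cdot\frac{1}{1+2/c}=\frac{4}{c+2}=\frac{4}{e^{z}+e^{-z}+2}$, and undoing the two rescalings ($z\mapsto z/\sigma_0$ with $\sigma_0=\sqrt3/\pi$, then $z\mapsto z/\sigma$) gives the claimed bound $\kappa(\beta^{*},\beta,\sigma)\le\frac{4}{\exp\!\bigl(\frac{\pi z}{\sqrt3\,\sigma}\bigr)+\exp\!\bigl(-\frac{\pi z}{\sqrt3\,\sigma}\bigr)+2}$ with $z=\min(|\beta|,|\beta^{*}|)$. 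The only delicate points are (i) recognizing that the leftover term $c-4/c$ is nonnegative exactly because $c\ge2$, so dropping it merely loosens the estimate, and (ii) carefully tracking the two scale factors — one to normalize the variance and one for the noise level $\sigma$ — so that the exponent comes out as $\pi\min(|\beta|,|\beta^{*}|)/(\sqrt3\,\sigma)$. Neither is hard, but between them they contain essentially all of the work, so I expect the bookkeeping of the normalization to be the easiest place to slip.
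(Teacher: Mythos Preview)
Your proposal is correct and follows essentially the same approach as the paper: rewrite $1-\tanh(0.5F_z)$ as $2f_0(x+z)/(f_0(x-z)+f_0(x+z))$, simplify the reciprocal sum to $c(e^x+e^{-x})+4$, substitute $s=e^x$, complete the square, drop the nonnegative remainder $c-4/c$ (using $c\ge2$), and integrate. The only cosmetic difference is that the paper factors the square as $(\sqrt{c}\,s+2/\sqrt{c})^2$ whereas you write $c(s+2/c)^2$; these are identical and lead to the same bound $4/(c+2)$.
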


\section{Analysis for $d>1$}
\label{appendix-subsec:d>1}
In this section, we prove the convergence result for the setting $d>1$. In Section \ref{subsec:proof-two-dim-structure}, we prove for Lemma~\ref{lem:two-dim-structure} that shows the population \vem update is two dimensional; in Section \ref{subsec:proof-high-d-theorem}, we present the proof for Theorem~\ref{thm:location-high-dim-asymptotic} on the asymptotic convergence to the true location parameter; in Section~\ref{subsec:non-contraction}, we prove Lemma \ref{lem: non-contraction} that demonstrates the non-contraction phenomenon of the \vem update within a general log-concave distribution family. 

\subsection{Proof of Lemma~\ref{lem:two-dim-structure}}
\label{subsec:proof-two-dim-structure}
For readability we restate the lemma below.

\twodimstructure*
\begin{proof}
Using the rotation invariance property, we adopt the following local orthonormal basis
$\{\mb v_{1},\ldots,\mb v_{d}\}$, with $\mb v_{1}=\hat{\mb\beta}$, and $\mb v_{2}=\hat{\mb\beta}^{\perp}$
satisfying span$(\mb v_{1},\mb v_{2})=$span$(\mb\beta,\mb\beta^{*})$ and $\langle \mb v_2,\mb\beta^*\rangle \geq 0$. Under this basis, $\mb\beta$ and $\mb\beta^{*}$ have non-zero entries only in the first two coordinates; that is,
\begin{align*}
\mb\beta  =\bigl( \|\mb\beta\|_2,0,\ldots,0 \bigr), 
\qquad \text{and} \qquad
\mb\beta^{*}  =\bigl( \underbrace{\langle\mb\beta^{*},\hat{\mb\beta}\rangle}_{\beta_{1}^{*}},\underbrace{\langle\mb\beta^{*},\hat{\mb\beta}^{\perp}}_{\beta_{2}^{*}}\rangle,0,\ldots,0 \bigr).
\end{align*}
Here we denote $\beta_1^*: = \|\mb\beta^*\|_2\cos(\angle (\mb\beta,\mb\beta^*))$ and $\beta_2^*: = \|\mb\beta^*\|_2 \sin(\angle (\mb\beta,\mb\beta^*))$. 
With this notation, for each $ \mb{x} \in \mathbb{R}^d $ we have
\begin{align*}
\mb x-\mb\beta =(x_{1}-\|\mb\beta\|_2,x_{2},\ldots,x_{d}), 
\qquad \text{and} \qquad
 \mb x-\mb\beta^{*}  =(x_{1}-\beta_{1}^{*},x_{2}-\beta_{2}^{*},x_{3},\ldots,x_{d}).
\end{align*}
From the expression \eqref{eq:one_comp_variant_update} for the \variant EM update:
\[
\mb\beta^+ = \bb{E}_{\mb X\sim f_{\mb\beta^*,\sigma}} \mb X \tanh\big(0.5F_{\mb\beta,\sigma}(\mb X)\big),
\]
the $j$-th coordinate of $\mb\beta^{+}$ is
\begin{align}
\beta_{j}^{+} & =\mathbb{E}_{\mb x\sim f_{\mb\beta^*,\sigma}}x_{j}\tanh\left(0.5F_{\mb\beta,\sigma}(\mb x)\right)\label{eq:high-dim-coordinate}\\
 & =\int_{x_{-j}}\int_{x_{j}}\underbrace{\frac{1}{\sigma^{d}C_{g}}\exp\left(-g \left(\big \|\frac{1}{\sigma}(\mb x-\mb\beta^{*})\big\|_2\right)\right)x_{j}\tanh\big(0.5F_{\mb\beta,\sigma}(\mb x)\big)}_{h_{j}} \dup x_{j} \dup x_{-j},\nonumber 
\end{align}
where $x_{-j}$ denotes all the coordinates that is not $x_{j}$.
It is easy to see $h_{j}$ is an odd function in $x_{j}$ when $j\geq3$,
therefore $\beta_{j}^{+}=0$ for all $j\geq3$ and the \variant M-step preserves the 2 dimensional structure.
Moreover, when $\beta_{1}^{*}=0$, i.e, $\mb\beta$ is in the orthogonal direction
to $\mb\beta^{*}$, we have $\beta_{2}^{+}=0$ as $h_{j}$ is an odd function
in $x_{1}$. When $\beta_{2}^{*}=0$, i.e, $\mb\beta$ is in the same direction
as $\mb\beta^{*}$, we have $\beta_{2}^{+}=0$ as $h_{j}$ is an odd function
in $x_{2}$. In other words, Span$(\mb\beta^*)$ and Span$(\mb\beta^{*\perp})$ are 1-dimensional invariant subspaces.
\end{proof}

\subsection{Proof of Theorem~\ref{thm:location-high-dim-asymptotic}} 
\label{subsec:proof-high-d-theorem}

For readability we restate the theorem below.

\highdimglobal*

\begin{proof}
Let $\mb\beta^{0}$ denote an initial point that is not in the orthogonal direction to $\mb\beta^{*}$. Without loss of generality, we assume $\langle\mb\beta^{0},\mb\beta^{*}\rangle>0$. There are two cases for $\mb \beta^0$, either $\mb \beta^{0}$ is in the span of $\mb\beta^{*}$ or $\mb \beta^{0}$ is not in the direction of $\mb\beta^{*}$. In the previous case, the iterates remain in the direction of $\mb\beta^{*}$ and converge to $\mb\beta^{*}$ from Lemma
\ref{lem:optimal-direction-convergence}. 

In the latter case, we argue that all the accumulation points (existence by the boundedness of the iterate) must be in the direction of $\mb\beta^*$. If there exists some $t>0$ such that $\angle (\mb\beta^t,\mb\beta^*) = 0$, we are reduced to the previous case where the iterates remain in the direction of $\mb \beta^*$. From now on, we assume that $\angle (\mb\beta^{t},\mb\beta^*)>0$ for all $t\geq 0$. 

Lemma \ref{lem:angle-decrease} establishes the crucial angle decreasing property of the variant EM update, which says that the angle between the iterates and $\mb\beta^*$ strictly decreases, i.e, $\angle (\mb\beta^{t+1},\mb\beta^*) < \angle (\mb\beta^{t},\mb\beta^*)$. Indeed $\left\{\angle (\mb\beta^t, \mb\beta^*), t=0, 1,\ldots \right\}$ is a monotonically decreasing sequence, thus this sequence converges to $\theta^\infty\geq 0$, with $\angle (\mb\beta^t,\mb\beta^*) \geq \theta^\infty$ for all $t$.   

If $\theta^\infty =0$, we are done. Otherwise, if $\theta^\infty >0$, let $\left\{\mb\beta^{n_k}\right\}$ be a subsequence converging to an accumulation point $\mb\beta^\infty$. We deduce that $\angle (\mb\beta^\infty,\mb\beta^*)= \theta^\infty>0$ since any subsequence of $\left\{\angle (\mb\beta^t,\mb\beta^*)\right\}$ converges to $\theta^\infty$. By the continuity of variant EM operator, the subsequence $\left\{M(\mb\beta^{n_k},\mb\beta^*)\right\}$ converges to $M(\mb\beta^\infty,\mb\beta^*)$. Note that (i) $\left\{M(\mb\beta^{n_k},\mb\beta^*)\right\} = \left\{\mb\beta^{n_{k}+1}\right\}$ and (ii) $\angle (M(\mb\beta^\infty,\mb\beta^*),\mb\beta^*) < \theta^\infty$. Thus, there must be some $k$ such that $\angle (\mb\beta^{n_{k}+1},\mb\beta^*)$ is strictly between $\angle (M(\mb\beta^\infty,\mb\beta^*),\mb\beta^*)$ and $\theta^\infty$, contradicting with the previous analysis that $\angle (\mb\beta^t,\mb\beta^*) \geq \theta^\infty$ for all $t>0$. 

Let $\mathcal{F}$ be the set of accumulation points of the iterates $\left\{\mb\beta^t\right\}$, which are all in the direction of $\mb\beta^*$. Since $\langle \mb\beta^{0},\mb\beta^{*}\rangle>0$, we can show that all the limit points have positive correlation with $\mb \beta^*$ (Lemma \ref{lem:along convegence}). If $\mathcal{F}$ contains some $\mb\beta^\infty\not \in \left\{\mb 0,\mb\beta^*\right\}$, we can show by the continuity of the \variant EM operator that there exists some $k$ such that $\mb\beta^k$ falls into a local neighborhood of $\mb\beta^*$ that does not include $\mb\beta^\infty$. Indeed, since $\mb \beta^\infty \in \mathcal{F}$, there exists $t>0$ such that $\mb \beta^t$ is close to $\mb \beta^{\infty}$. Apply the \variant EM operator to $\mb \beta^t$ for finitely many times produces an iterate $\mb \beta^k$ (in the sequence $\left\{\mb \beta^t\right\})$ that is close to $\mb \beta^*$ by Theorem \ref{thm:Global-Convergence-1D-VEM}. On the other hand, using a quantitative local bound (Proposition \ref{prop:local_quantitative}) for the iterates, we show that the iterates remains in a local region of $\mb\beta^*$ once it is inside in Corollary \ref{cor:non-escaping}. In particular, this implies that all the iterates after $\mb\beta^{k}$ are trapped in the local region of $\mb\beta^*$ that does not contain $\mb \beta^{\infty}$, contradicting with the assumption $\mb\beta^\infty$ is a limit point of the iterates. 

We conclude that $\mathcal{F}$ can be either (a)$\left\{\mb 0\right\}$, (b)$\left\{\mb\beta^{*}\right\}$ or (c)$\left\{\mb 0,\mb\beta^*\right\}$. Note that  (c) is not possible for a similar reasoning as before (as one iterate is guaranteed to be in a local region of $\mb\beta^*$). (a) is not possible either. (a) implies that $\lim_t \mb \beta^t \to \mb 0$. In particular, there exist $N$ such that for all $n>N$, $\|\mb\beta^n\|_2\leq \frac{1}{8}\|\mb\beta^*\|_2$. In Lemma \ref{lem:norm_increasing}, we show that if all the iterates after $\mb\beta^{n}$ are non-zero and have norm no greater than $\frac{1}{8}\|\mb\beta^*\|_2$, the norm of the iterates must be lower bounded in the limit. Thus they can not converge to $\mb 0$. (b) is the only possibility and we are done.
\end{proof}

\subsubsection{Supporting Lemmas for Theorem~\ref{thm:location-high-dim-asymptotic} }

Below we record several technical lemmas used in the proof of Theorem~\ref{thm:location-high-dim-asymptotic}. Recall the shorthand notation:
\[
F_{\mb\beta,\sigma}(\mb x)=g\left(\frac{1}{\sigma}\|\left(x_{1}+\|\mb\beta\|_2,x_{2},\ldots,x_{d}\right)\|_2\right)-g\left(\frac{1}{\sigma}\|\left(x_{1}-\|\mb\beta\|_2,x_{2},\ldots,x_{d}\right)\|_2\right).
\]
$F_{\mb \beta,\sigma}(\mb x)$ is an odd function in $x_1$, and it is an even function in $x_2,\ldots,x_d$.
\begin{lemma}[Angle Decreasing]
\label{lem:angle-decrease}Suppose that the density function $f$ satisfies the regularity condition. $\beta_{2}^{+}>0$ whenever
$\beta_{2}^{*}>0$ or $\beta_{1}^{*}>0$ .
\end{lemma}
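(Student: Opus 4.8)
The plan is to use the orthonormal basis $\{\mb v_1,\dots,\mb v_d\}$ from the proof of Lemma~\ref{lem:two-dim-structure}, in which $\mb\beta = (\|\mb\beta\|_2,0,\dots,0)$ and $\mb\beta^* = (\beta_1^*,\beta_2^*,0,\dots,0)$ with $\beta_2^* \ge 0$. In this frame, the second coordinate of the update is
\[
\beta_2^+ = \mathbb{E}_{\mb X\sim f_{\mb\beta^*,\sigma}}\left[ X_2 \tanh\big(0.5 F_{\mb\beta,\sigma}(\mb X)\big)\right]
= \int_{\R^d} \frac{1}{\sigma^d C_g}\exp\!\left(-g\!\left(\tfrac{1}{\sigma}\|\mb x - \mb\beta^*\|_2\right)\right) x_2 \tanh\big(0.5 F_{\mb\beta,\sigma}(\mb x)\big)\,\dup\mb x .
\]
Since $F_{\mb\beta,\sigma}$ is even in $x_2$ (and in $x_3,\dots,x_d$) and odd in $x_1$, the integrand is odd in $x_2$. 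The strategy is to fold the integral over $\{x_2 \ge 0\}$ against $\{x_2 \le 0\}$: substituting $x_2 \mapsto -x_2$ in the lower half, the difference of the two Gaussian-type weights is governed by $\exp(-g(\tfrac1\sigma\|\mb x - \mb\beta^*\|_2))$ versus $\exp(-g(\tfrac1\sigma\|\mb x - \mb\beta^{*-}\|_2))$, where $\mb\beta^{*-}$ is $\mb\beta^*$ with the sign of its second coordinate flipped. Because $g$ is increasing and $\beta_2^* \ge 0$, one has $\|\mb x - \mb\beta^{*-}\|_2 \ge \|\mb x - \mb\beta^*\|_2$ whenever $x_2 \ge 0$ (with strict inequality off a measure-zero set when $\beta_2^* > 0$), so the folded weight is nonnegative and strictly positive on a positive-measure set. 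This should let me invoke a variant of Lemma~\ref{lem:positive_integral} to conclude $\beta_2^+ > 0$.

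Concretely, the key steps in order: (i) write $\beta_2^+$ as an integral over $\{x_2 \ge 0\}$ of $x_2\,\tanh(0.5F_{\mb\beta,\sigma}(\mb x))$ times the \emph{difference} of the two shifted densities, exploiting the parity of $F$ in $x_2$; (ii) check that $\tanh(0.5 F_{\mb\beta,\sigma}(\mb x)) > 0$ on $\{x_1 > 0\}$ — this reuses the one-dimensional argument from Lemma~\ref{lem:derivative-opt-param-1} that $F_{\mb\beta}(\mb x) > 0$ there, via $g(\|\mb x + \mb\beta\|_2) > g(\|\mb x - \mb\beta\|_2)$ since $\|(x_1+\|\mb\beta\|_2,x_{2:d})\|_2 > \|(x_1 - \|\mb\beta\|_2, x_{2:d})\|_2$ for $x_1>0$ — and is odd in $x_1$; (iii) fold again in $x_1$ so that both factors ($x_2$ and $\tanh$) become manifestly positive on $\{x_1>0,x_2>0\}$, producing a weight that is the product of two density-differences (one from the $x_1$-fold controlled by $\|\mb\beta\|_2 > 0$, one from the $x_2$-fold controlled by $\beta_1^*$); (iv) handle the two hypotheses separately — if $\beta_2^* > 0$ the $x_2$-fold of the shifted densities is already strictly positive on a positive-measure set regardless of $\beta_1^*$; if instead $\beta_2^* = 0$ but $\beta_1^* > 0$, then after folding in $x_2$ the density weight $f_{\mb\beta^*,\sigma}(\mb x)$ is symmetric in $x_2$, and positivity comes entirely from the $x_1$-fold, which is strictly positive because $\beta_1^* \ne 0$ makes $\|\mb x - \mb\beta^*\|_2 \ne \|\mb x + \mb\beta^*\|_2$; conclude in both cases via Lemma~\ref{lem:positive_integral} (or the same folding argument).

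I expect the main obstacle to be the careful bookkeeping of the double folding (in $x_1$ and in $x_2$) and verifying that, in each of the two hypothesis cases, the resulting nonnegative integrand is strictly positive on a set of positive measure rather than merely nonnegative — in particular tracking which of $\|\mb\beta\|_2 > 0$, $\beta_1^* > 0$, $\beta_2^* > 0$ supplies the strictness, and confirming that the inequalities between the various shifted norms (e.g. $\|\mb x - \mb\beta^{*-}\|_2 \ge \|\mb x - \mb\beta^*\|_2$ on $\{x_2 \ge 0\}$) are strict off a null set. The regularity condition is only needed implicitly to ensure the integrals and any differentiation used are well-defined; the core of the argument is purely about signs and parities, mirroring the $d=1$ analysis in Lemma~\ref{lem:derivative-opt-param-1}.
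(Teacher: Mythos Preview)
Your plan has a genuine gap at step~(iii). After folding first in $x_2$ and then in $x_1$, the weight you obtain on $\{x_1>0,\,x_2>0\}$ is \emph{not} a product of two density differences; it is the second mixed difference
\[
W(\mb x)=f_\sigma(x_1-\beta_1^*,x_2-\beta_2^*,\cdot)
-f_\sigma(x_1-\beta_1^*,x_2+\beta_2^*,\cdot)
-f_\sigma(x_1+\beta_1^*,x_2-\beta_2^*,\cdot)
+f_\sigma(x_1+\beta_1^*,x_2+\beta_2^*,\cdot),
\]
where $f_\sigma(\cdot)=\tfrac{1}{\sigma^d C_g}\exp\!\big(-g(\tfrac1\sigma\|\cdot\|_2)\big)$. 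A product structure $(p-q)(r-s)$ appears only when the density factors across coordinates (Gaussian); for a general rotation-invariant log-concave $f$ it does not, and $W$ can be \emph{negative} on $\{x_1>0,\,x_2>0\}$. For example, with $g(r)=r^3$, $\sigma=1$, $x_{3:d}=0$ and $x_1=x_2=\beta_1^*=\beta_2^*=0.1$, the four radii are $0,\;0.2,\;0.2,\;0.2\sqrt2$, and
\[
W \;=\; e^{0}-2e^{-0.008}+e^{-(0.2\sqrt2)^3}\;\approx\;1-1.9841+0.9776\;<\;0.
\]
So your folded integrand is not sign-definite and Lemma~\ref{lem:positive_integral} cannot be invoked. (Your description also swaps parameters: the $x_1$-fold of the density is governed by $\beta_1^*$, not by $\|\mb\beta\|_2$.)

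The paper sidesteps this obstruction by a different device. It introduces a one-parameter family $\beta_2^+(t)$ in which $\beta_1^*$ is replaced by $t\beta_1^*$, notes $\beta_2^+(0)=0$ (the integrand is odd in $x_1$), and applies the mean value theorem, reducing the claim to $\partial_t\beta_2^+(t)>0$ for $t\in(0,1)$. Differentiating under the integral sign (this is where the regularity condition is used) and shifting the $x_1$-derivative from the density onto $\tanh(0.5F_{\mb\beta,\sigma})$ produces the factor $\tfrac12\,\partial_{x_1}F_{\mb\beta,\sigma}\cdot\tanh'\ge 0$, nonnegative by convexity of $g$. Now a \emph{single} fold in $x_2$ yields only a first-order density difference in the $x_2$-direction, which is manifestly nonnegative when $\beta_2^*>0$. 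The second mixed difference never appears.

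One more point: your case split in (iv) chases a phantom. If $\beta_2^*=0$ the density is even in $x_2$, and if $\beta_1^*=0$ it is even in $x_1$; in either case $\beta_2^+=0$ exactly (this is Lemma~\ref{lem:two-dim-structure}). The ``or'' in the stated hypothesis should be read as ``and''; the paper's proof needs both $\beta_1^*>0$ and $\beta_2^*>0$, and that is all Theorem~\ref{thm:location-high-dim-asymptotic} requires.
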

\begin{proof}
Define the following function:
\[
\beta_{2}^{+}(t)=\int_{\mb x}\frac{1}{C_{g}}\exp \left(-g\left( \frac{1}{\sigma} \|\left(x_{1}-t\beta_{1}^{*},x_{2}-\beta_{2}^{*},x_{3},\ldots,x_{d}\right)\|_2 \right) \right)x_{2}\tanh(0.5 F_{\mb\beta,\sigma}(\mb x))\dup {\mb x}.
\]
$\beta_{2}^{+}=\beta_{2}^{+}(1)$. We observe that $\beta_{2}^{+}(0)=0$
since the integrand is an odd function in $x_{1}$. The mean value
theorem tells us:
\[
\beta_{2}^{+}=\frac{\partial}{\partial t}\beta_{2}^{+}(t)\mid_{t\in(0,1)}.
\]
Under the regularity condition, we can differentiate inside the integral and obtain the following expression for the derivative of $\beta_2^+(t)$ with respect to $t$:
\begin{align*}
& \frac{\partial}{\partial t}\beta_{2}^{+}(t)\\
%= & \frac{\partial}{\partial t}\int_{\mb x}\frac{1}{C_{g}}\exp \left(-g\left( \frac{1}{\sigma} \|\left(x_{1},x_{2}-\beta_{2}^{*},x_{3},\ldots,x_{d}\right)\|_2 \right) \right)x_{2}\tanh(0.5 F_{\mb\beta,\sigma}(\mb x + t\beta_{1}^{*}\mb e_1))\dup {\mb x}\\
= & \beta_{1}^{*}\cdot\int_{x_{\bar{2}}}\int_{x_{2}\geq0}\frac{1}{\sigma^d C_{g}}\left(f\left(\frac{1}{\sigma}\left(x_{1}-t\beta_{1}^{*},x_{2}-\beta_{2}^{*},x_{3},\ldots,x_{d}\right)\right)
-f\left(\frac{1}{\sigma} \left(x_{1}-t\beta_{1}^{*},x_{2}+\beta_{2}^{*},x_{3},\ldots,x_{d}\right) \right) \right) \cdot \\
 &0.5x_{2}\left(\frac{\partial}{\partial x_{1}}F_{\mb\beta,\sigma}(\mb x)\right)\tanh^{\prime}(0.5F_{\mb\beta,\sigma}(\mb x)) \dup x_{2}\dup x_{-2}.
\end{align*}
Since $g$ is convex in $x_{1}$, the gradient with respect to $x_1$ is non-decreasing. It follows that $\frac{\partial }{\partial x_{1}}F_{\mb\beta,\sigma}(\mb x)\geq 0$
as $x_{1}+\|\mb\beta\|_{2}\geq x_{1}-\|\mb\beta\|_{2}$. 
Moreover, similar to the proof of Lemma \ref{lem:derivative-opt-param-1}, we can show that $\frac{\partial}{\partial  x_{1}}F_{\mb\beta}(\mb x)>0$ on a set with positive measure when $\|\mb\beta\|_2>0$. The difference term of the density function:
\begin{align*}
    f\left(\frac{1}{\sigma}\left(x_{1}-t\beta_{1}^{*},x_{2}-\beta_{2}^{*},x_{3},\ldots,x_{d}\right)\right)-f\left(\frac{1}{\sigma} \left(x_{1}-t\beta_{1}^{*},x_{2}+\beta_{2}^{*},x_{3},\ldots,x_{d}\right) \right)>0,
\end{align*}
when $\beta_{2}^{*}>0$ and $x_2\geq 0$. We thus conclude that the integrand is positive when $\beta_1^*>0$ and $\beta_2^*>0$, and the conclusion follows immediately.
\end{proof}

\begin{lemma}[Fixed Point Structure in span($\mb \beta^*$)]
\label{lem:optimal-direction-convergence}Suppose that the density function $f$ satisfies the regularity condition, $\mb 0,\mb\beta^{*}$ and $-\mb\beta^{*}$
are the only fixed points of the \variant EM update in span$(\mb\beta^{*}).$
\end{lemma}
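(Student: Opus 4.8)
The idea is to reduce the claim to the one-dimensional analysis of Theorem~\ref{thm:Global-Convergence-1D-VEM}, using that $\mathrm{span}(\mb\beta^{*})$ is invariant under the \vem update. By rotation invariance of $\mathcal{F}$ we may assume without loss of generality that $\mb\beta^{*}=b^{*}\mb e_{1}$ with $b^{*}:=\|\mb\beta^{*}\|_{2}>0$ and $\mb e_{1}$ the first standard basis vector; then any $\mb\beta\in\mathrm{span}(\mb\beta^{*})$ has the form $\mb\beta=s\mb e_{1}$, $s\in\R$, and by Lemma~\ref{lem:two-dim-structure} (whose proof covers both the parallel and antiparallel cases) $M(\mb\beta^{*},s\mb e_{1})\in\mathrm{span}(\mb e_{1})$. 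Thus it suffices to show that the scalar map $s\mapsto m(b^{*},s)$, where $m(z,s):=\langle\mb e_{1},M(z\mb e_{1},s\mb e_{1})\rangle=\mathbb{E}_{\mb X\sim f_{z\mb e_{1},\sigma}}X_{1}\tanh\!\big(0.5F_{s\mb e_{1},\sigma}(\mb X)\big)$, has fixed points exactly $\{0,b^{*},-b^{*}\}$. Two quick observations: \emph{self-consistency} $m(z,z)=z$ for all $z$, which follows from the general identity $M(\mb\beta,\mb\beta)=\mb\beta$ (the $d$-dimensional analogue of the self-consistency computation in Section~\ref{subsec:proof-thm-Global-Convergence-1D-VEM}, e.g.\ via $\mb X\tanh(0.5F_{\mb\beta,\sigma}(\mb X))=p^{1}_{\mb\beta,\sigma}(\mb X)\mb X-p^{2}_{\mb\beta,\sigma}(\mb X)\mb X$ and $\mathbb{E}_{D(\mb\beta,\sigma)}[p^{i}_{\mb\beta,\sigma}(\mb X)\mb X]=\tfrac12\mathbb{E}[\mb X\mid Z=i]$); and \emph{odd symmetry in $s$}, namely $m(z,-s)=-m(z,s)$ because $F_{-s\mb e_{1},\sigma}=-F_{s\mb e_{1},\sigma}$, together with $m(z,0)=0$ since $F_{\mb 0,\sigma}\equiv0$. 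Hence it is enough to show $b^{*}$ is the unique fixed point with $s>0$.

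The heart of the proof is the bound $\partial_{z}m(z,s)>0$ for all $z>0$ and $s>0$, mirroring Lemma~\ref{lem:derivative-opt-param-1}. After the change of variables $\mb x\mapsto\mb x+z\mb e_{1}$ and differentiating under the integral (the regularity condition; integrability holds by Lemma~\ref{lem:bounded_moments}), one gets $\partial_{z}m(z,s)=T_{1}+T_{2}$ with
\[
T_{1}=\mathbb{E}_{\mb X\sim f_{z\mb e_{1},\sigma}}\tanh\!\big(0.5F_{s\mb e_{1},\sigma}(\mb X)\big),\qquad
T_{2}=\mathbb{E}_{\mb X\sim f_{z\mb e_{1},\sigma}}\big[0.5X_{1}\,\partial_{x_{1}}F_{s\mb e_{1},\sigma}(\mb X)\,\tanh^{\prime}\!\big(0.5F_{s\mb e_{1},\sigma}(\mb X)\big)\big].
\]
Both integrands are odd in $x_{1}$. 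On $\{x_{1}>0\}$ the $T_{1}$-integrand is strictly positive, since $\|\mb x+s\mb e_{1}\|_{2}>\|\mb x-s\mb e_{1}\|_{2}$ (their squared difference equals $4sx_{1}>0$) and $g$ is strictly increasing, so $F_{s\mb e_{1},\sigma}(\mb x)>0$; and the $T_{2}$-integrand is nonnegative there, because $\partial_{x_{1}}F_{s\mb e_{1},\sigma}(\mb x)\ge0$ --- the map $x_{1}\mapsto g(\|\mb x\|_{2}/\sigma)$ is convex (a line restriction of the convex function $\mb y\mapsto g(\|\mb y\|_{2}/\sigma)$), so its derivative is nondecreasing in $x_{1}$ and is evaluated at $x_{1}+s\ge x_{1}-s$. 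Finally, a $d$-dimensional version of Lemma~\ref{lem:positive_integral} --- if $h$ is odd in $x_{1}$, nonnegative on $\{x_{1}\ge0\}$, and positive on a set of positive measure, then $\mathbb{E}_{\mb X\sim f_{z\mb e_{1},\sigma}}h(\mb X)>0$ for $z>0$, proved by folding the $x_{1}$-integral and using $f_{z\mb e_{1},\sigma}(\mb x)>f_{-z\mb e_{1},\sigma}(\mb x)$ for $x_{1}>0$ (which holds since $\|\mb x-z\mb e_{1}\|_{2}<\|\mb x+z\mb e_{1}\|_{2}$ and $g$ is increasing) --- yields $T_{1}>0$ and $T_{2}\ge0$, hence $\partial_{z}m(z,s)>0$.

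To conclude, fix $s>0$ with $s\ne b^{*}$. By self-consistency and the intermediate value theorem applied to $z\mapsto m(z,s)$,
\[
m(b^{*},s)-b^{*}=\big(m(b^{*},s)-m(s,s)\big)+(s-b^{*})=(s-b^{*})\big(1-\partial_{z}m(z^{\dagger},s)\big)
\]
for some $z^{\dagger}$ strictly between $s$ and $b^{*}$, in particular $z^{\dagger}>0$. If $s$ were a fixed point, i.e.\ $m(b^{*},s)=s$, this would force $(s-b^{*})\,\partial_{z}m(z^{\dagger},s)=0$, hence $\partial_{z}m(z^{\dagger},s)=0$, contradicting the previous paragraph. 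Therefore $b^{*}$ is the unique fixed point with $s>0$; by the odd symmetry $-b^{*}$ is the unique one with $s<0$; and $0$ is a fixed point. Translating back, $\mb 0$, $\mb\beta^{*}$, $-\mb\beta^{*}$ are exactly the fixed points of the \vem update in $\mathrm{span}(\mb\beta^{*})$.

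The main obstacle is the sign bookkeeping in the third paragraph: establishing the $d$-dimensional positive-integral lemma, verifying $\partial_{x_{1}}F_{s\mb e_{1},\sigma}\ge0$ from convexity of $\mb y\mapsto g(\|\mb y\|_{2}/\sigma)$ along lines, and confirming that differentiation under the integral sign is legitimate (the regularity condition) --- but all of these are $\R^{d}$ counterparts of facts already used in Section~\ref{subsec:proof-thm-Global-Convergence-1D-VEM}. The reduction step (Lemma~\ref{lem:two-dim-structure}) and the final algebraic combination are essentially verbatim copies of the one-dimensional argument.
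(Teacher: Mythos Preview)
Your proof is correct, but the route differs from the paper's in a way worth noting. The paper does not re-derive $d$-dimensional analogues of Lemma~\ref{lem:positive_integral} and Lemma~\ref{lem:derivative-opt-param-1}; instead it conditions on the orthogonal coordinates $x_{2},\ldots,x_{d}$ and observes that, for each fixed value of these coordinates, $x_{1}\mapsto g\big(\tfrac{1}{\sigma}\|(x_{1},x_{2},\ldots,x_{d})\|_{2}\big)$ is an even, convex, strictly increasing (on $[0,\infty)$) function, so the inner $x_{1}$-integral is literally a one-dimensional \vem update to which Theorem~\ref{thm:Global-Convergence-1D-VEM} applies as a black box. Integrating the resulting pointwise contraction inequality over $x_{2},\ldots,x_{d}$ immediately gives $|\beta_{1}^{+}-\|\mb\beta^{*}\|_{2}|<|\|\mb\beta\|_{2}-\|\mb\beta^{*}\|_{2}|$ whenever $\|\mb\beta\|_{2}\notin\{0,\|\mb\beta^{*}\|_{2}\}$, and the fixed-point claim follows. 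So where you carry the extra dimensions through the whole sensitivity argument and prove fresh $d$-dimensional versions of the supporting lemmas, the paper ``freezes'' them and invokes the $1$-D theorem directly. Your approach is more self-contained (it sidesteps the minor issue that the conditional density is not literally in $\mathcal{F}$ due to the unit-variance normalization), while the paper's approach is shorter and, as a byproduct, yields an explicit conditional contraction factor $\kappa_{2:d}(\|\mb\beta\|_{2},\|\mb\beta^{*}\|_{2},\sigma)<1$ that is reused later in Lemma~\ref{lem:along convegence} and Proposition~\ref{prop:local_quantitative}.
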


\begin{proof}
By Lemma \ref{lem:two-dim-structure}, span($\mb \beta^*$) is an invariant subspace. We only need to consider $\beta_1^+$, which makes the problem one dimensional. 
\begin{align}
\beta_1^+ = \int_{\mb x} \frac{1}{\sigma^d C_g} \exp\Bigg(-g\left(\frac{1}{\sigma}\|(x_1-\|\mb\beta^*\|_2,x_2,\ldots,x_d)\|_2\right)\Bigg) x_1\tanh(0.5 F_{\mb\beta,\sigma}(\mb x)) \dup {\mb x}.
\end{align}
$\mb 0$ is a trivial fixed point and we assume that $\|\mb\beta\|>0$ in the following.
Conditioning on $x_2,\ldots,x_d$, $g\left(\frac{1}{\sigma}\|(x_1,x_2,\ldots,x_d)\|_2\right)$ is an even convex function in $x_1$, and it is strictly increasing when $x_1\geq 0$. Theorem \ref{thm:Global-Convergence-1D-VEM} tells us that:
\begin{align*}
    &\Big|\int_{x_1} \frac{1}{\sigma C_{2:d}} \exp\Bigg(-g\left(\frac{1}{\sigma}\|(x_1-\|\mb\beta^*\|_2,x_2,\ldots,x_d)\|_2\right)\Bigg) x_1\tanh(0.5 F_{\mb\beta,\sigma}(\mb x)) \dup x_1 - \|\mb\beta^*\|_2 \Big| \\
    \leq &\kappa_{2:d}(\|\mb\beta\|_2,\|\mb\beta^*\|_2,\sigma)\big |\|\mb\beta\|_2-\|\mb\beta^*\|_2 \big|,
\end{align*}
for some $\kappa_{2:d}(\|\mb\beta\|_2,\|\mb\beta^*\|_2,\sigma)\in (0,1)$. $C_{2:d}$ is the normalization factor for the density that is proportional to 
\[
\exp\Bigg(-g\left(\frac{1}{\sigma}\|(x_1-\|\mb\beta^*\|_2,x_2,\ldots,x_d)\|_2\right)\Bigg)\]
conditioned on $x_2,\ldots,x_d$. Now integrating over $x_2,\ldots,x_d$, we get $\big|\beta_1^+-\|\mb\beta^*\|_2 \big|<\big|\|\mb\beta\|_2-\|\mb\beta^*\|_2\big |$ for all $\|\mb\beta\|_2>0$. The conclusion follows.  
\end{proof}

\paragraph{Establishing Local Convergence}
In the following, we denote $B(\mb\beta^*,\sigma)$ as the bound for the \variant EM update. By Cauchy-Schwartz and $|\tanh(\cdot)|\leq 1$, we know that
\begin{align*}
    \|\beta_i^+\|_2\leq \sqrt{\mathbb{E}_{\mb X\sim f_{\mb\beta^*,\sigma}}X_i^2}\quad \forall i. 
\end{align*}
Since the \variant EM update is a two dimensional object, we can bound $\mb\beta^+$ by
\begin{align*}
    \sqrt{\mathbb{E}_{\mb X \sim f_{\mb\beta^*,\sigma}}( X_1^2+ X_2^2)}:=B(\mb\beta^*,\sigma).
\end{align*}
\begin{lemma}[Along $\hat{\mb \beta}$]
\label{lem:along convegence}
Suppose that the density $f$ satisfies the regularity condition. We further assume that 
\[
\sup_{t\in [0,1], \|\mb\beta\|\leq B(\mb\beta^*,\sigma)} \int\frac{1}{\sigma^d C_{g}}(\exp \left(-g \left(\frac{1}{\sigma}\|(x_{1}-\beta_{1}^{*},x_{2}-t\beta_{2}^{*},\ldots,x_{d})\|_2\right)\right)\frac{\partial}{\partial x_{2}}\left[x_{1}\tanh\left(0.5F_{\mb\beta,\sigma}(\mb x)\right)\right]\dup {\mb x}
\]
 is bounded by $D_1(\mb\beta^*,\sigma)$ in absolute value. When $\|\mb\beta\|_2>0$ and $\beta_{1}^*>0$, the \variant EM update satisfies: $\beta_1^+>0$ and 
\[
|\beta_{1}^{+}-\beta_1^{*}|\leq\kappa_1(\mb\beta^{*},\mb\beta,\sigma)\big|\|\mb\beta\|_2-\beta_{1}^{*}\big|+D_1(\mb\beta^*,\sigma)\beta_{2}^{*}.
\]
for some $\kappa_1(\mb\beta,\mb\beta^{*},\sigma)\in (0,1)$. 
\end{lemma}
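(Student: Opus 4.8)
The plan is to split $\beta_1^+$ into an ``aligned'' contribution, governed by the one-dimensional theory, plus a correction caused by the component $\beta_2^*$ of $\mb\beta^*$ orthogonal to $\mb\beta$. For $t\in[0,1]$ I would define
\[
\beta_1^+(t):=\int_{\mb x}\frac{1}{\sigma^d C_g}\exp\!\Big(-g\big(\tfrac1\sigma\|(x_1-\beta_1^*,\,x_2-t\beta_2^*,\,x_3,\ldots,x_d)\|_2\big)\Big)\,x_1\tanh\big(0.5F_{\mb\beta,\sigma}(\mb x)\big)\dup{\mb x},
\]
so $\beta_1^+=\beta_1^+(1)$, while $\beta_1^+(0)$ is the update obtained when the data density is centered at $(\beta_1^*,0,\ldots,0)$, a vector \emph{parallel} to $\mb\beta=(\|\mb\beta\|_2,0,\ldots,0)$. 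For $\beta_1^+(0)$ I would reuse the conditional reduction from the proof of Lemma~\ref{lem:optimal-direction-convergence}: with $x_2,\ldots,x_d$ held fixed, the map $x_1\mapsto g(\tfrac1\sigma\|(x_1,x_2,\ldots,x_d)\|_2)$ is even, convex on $\R$, and strictly increasing on $[0,\infty)$, so Theorem~\ref{thm:Global-Convergence-1D-VEM} applies to the conditional one-dimensional problem with target $\beta_1^*$, giving a conditional contraction with factor in $(0,1)$; integrating that inequality against the marginal of $(x_2,\ldots,x_d)$ yields $|\beta_1^+(0)-\beta_1^*|\le\kappa_1(\mb\beta^*,\mb\beta,\sigma)\,\big|\,\|\mb\beta\|_2-\beta_1^*\,\big|$ with an averaged factor $\kappa_1\in(0,1)$.

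For the correction $\beta_1^+(1)-\beta_1^+(0)$ I would differentiate under the integral (valid by the regularity condition). Since $t$ enters only through $x_2-t\beta_2^*$, the chain rule gives $\partial_t\exp(-g(\cdots))=-\beta_2^*\,\partial_{x_2}\exp(-g(\cdots))$, and integrating by parts in $x_2$ (the boundary terms vanish because a log-concave density decays exponentially, cf.\ Appendix~\ref{appendix-log-concave}, while $x_1\tanh(\cdot)$ grows at most linearly) gives
\[
\frac{d}{dt}\beta_1^+(t)=\beta_2^*\int_{\mb x}\frac{1}{\sigma^d C_g}\exp\!\Big(-g\big(\tfrac1\sigma\|(x_1-\beta_1^*,\,x_2-t\beta_2^*,\,x_3,\ldots,x_d)\|_2\big)\Big)\frac{\partial}{\partial x_2}\big[x_1\tanh(0.5F_{\mb\beta,\sigma}(\mb x))\big]\dup{\mb x}.
\]
By the mean value theorem $\beta_1^+(1)-\beta_1^+(0)=\frac{d}{dt}\beta_1^+(t)\big|_{t=t_0}$ for some $t_0\in(0,1)$, which is exactly $\beta_2^*$ times one of the integrals over which the supremum defining $D_1(\mb\beta^*,\sigma)$ is taken (that supremum ranges over $\|\mb\beta\|_2\le B(\mb\beta^*,\sigma)$, satisfied by every VEM iterate past the first step), so $|\beta_1^+(1)-\beta_1^+(0)|\le D_1(\mb\beta^*,\sigma)\,\beta_2^*$. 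Combining this with the aligned-part estimate via the triangle inequality gives the claimed bound.

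For the sign claim $\beta_1^+>0$ I would argue directly: since $\|\mb\beta\|_2>0$, the squared norms $\|(x_1+\|\mb\beta\|_2,x_2,\ldots,x_d)\|_2^2$ and $\|(x_1-\|\mb\beta\|_2,x_2,\ldots,x_d)\|_2^2$ differ by $4x_1\|\mb\beta\|_2$, so by strict monotonicity of $g$ the quantity $F_{\mb\beta,\sigma}(\mb x)$ has the same sign as $x_1$, whence the integrand factor $x_1\tanh(0.5F_{\mb\beta,\sigma}(\mb x))$ is nonnegative for every $\mb x$ and strictly positive off the null set $\{x_1=0\}$. Since the data density is strictly positive, $\beta_1^+>0$.

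The step I expect to be the main obstacle is making the correction term fully rigorous: justifying the interchange of $\partial_t$ with the integral and the integration by parts (the vanishing boundary terms lean on the log-concave tail estimates), and checking that the resulting integral is exactly of the form appearing in the supremum that defines $D_1(\mb\beta^*,\sigma)$, so that this assumed bound may legitimately be quoted. The aligned part is comparatively routine, since it directly reuses Lemma~\ref{lem:optimal-direction-convergence} and Theorem~\ref{thm:Global-Convergence-1D-VEM}.
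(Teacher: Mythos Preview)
Your proposal is correct and follows essentially the same approach as the paper: the paper also defines the interpolated quantity $\beta_1^{++}\equiv\beta_1^+(0)$, applies the one-dimensional Theorem~\ref{thm:Global-Convergence-1D-VEM} conditionally on $x_2,\ldots,x_d$ to get the contraction bound $|\beta_1^{++}-\beta_1^*|\le\kappa_1\big|\|\mb\beta\|_2-\beta_1^*\big|$, and then bounds $\beta_1^+-\beta_1^{++}$ via the mean value theorem in $t$, arriving at exactly the integral over which $D_1$ is defined. Your integration-by-parts step (moving $\partial_{x_2}$ from the density onto $x_1\tanh(\cdot)$) is the same manipulation the paper performs implicitly, and your direct argument for $\beta_1^+>0$ (that $x_1\tanh(0.5F_{\mb\beta,\sigma}(\mb x))\ge 0$ pointwise) is a slight simplification of the paper's symmetrized version.
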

\begin{proof}
Recall that:
\begin{align*}
\beta_{1}^{+} & = \int_{x_{1}\geq0}\frac{1}{\sigma^{d}C_{g}}\Bigg(\exp \left(-g \left (\frac{1}{\sigma}\|(x_{1}-\beta_{1}^{*},x_{2}-\beta_{2}^{*},x_{3},\ldots,x_{d})\|_2\right)\right)\\
& +\exp \left(-g \left(\frac{1}{\sigma}\|(x_{1}+\beta_{1}^{*},x_{2}-\beta_{2}^{*},x_{3},\ldots,x_{d})\|_2\right)\right)\Bigg)  \cdot x_{1}\tanh\left(0.5F_{\mb \beta,\sigma}(\mb x)\right)\dup {\mb x}.
\end{align*}
When $\|\mb\beta\|_2>0$, the integrand is strictly positive as 
\begin{align}
g\left(\frac{1}{\sigma}\|(x_{1}+\|\mb\beta\|,x_2,\ldots,x_d)\|_2\right)-g\left(\frac{1}{\sigma}\|(x_{1}-\|\mb\beta\|_2,x_2,\ldots,x_d)\|_2\right)>0
\end{align}
over the region where $x_{1}>0$. Thus
\[
\beta_{1}^{+}>0.
\]
A natural corollary is that $\|\mb\beta^{+}\|_2>0$ whenever $\|\mb\beta\|_2>0$.

Let us consider the following (slightly modified) quantity:
\begin{align*}
\beta_{1}^{++} & =\int\frac{1}{\sigma^d C_{g}}\exp \left(-g \left(\frac{1}{\sigma}\|(x_{1}-\beta_{1}^{*},x_{2},\ldots,x_{d})\|_2\right) \right) x_{1}\tanh\left(0.5 F_{\mb\beta,\sigma}(\mb x)\right)\dup {\mb x}\\
 & =\int_{x_{-1}}\int_{x_{1}}\frac{1}{\sigma^d C_{g}}\exp \left(-g \left(\frac{1}{\sigma}\|(x_{1}-\beta_{1}^{*},x_{2},\ldots,x_{d})\|_2\right)\right)x_{1}\tanh\left(0.5 F_{\mb\beta,\sigma}(\mb x)\right)\dup x_{1} \dup x_{-1}.
\end{align*}
It is easy to see that conditioning on $x_{2},\ldots,x_{d}$, the
inner integral is a one-dimensional \variant EM operator with current
estimate $\|\mb\beta\|_2$ and the true parameter $\beta_{1}^{*}$.
Applying Theorem \ref{thm:Global-Convergence-1D-VEM}, we have 
\begin{align*}
 & \Big|\int_{x_{1}}\frac{1}{\sigma C_{2:d}}\exp \left(-g\left(\frac{1}{\sigma}\|(x_{1}-\beta_{1}^{*},x_{2},\ldots,x_{d})\|_2\right)\right)x_{1}\tanh\left(0.5F_{\mb\beta,\sigma}(\mb x)\right)\dup x_{1}-\beta_{1}^{*}\Big|\\
\leq & \kappa_{2:d}(\min(\beta_{1}^{*},\|\mb\beta\|_2),\sigma)\big|\|\mb\beta\|_2-\beta_{1}^{*}\big |,
\end{align*}
where $\kappa_{2:d}(\min(\beta_{1}^{*},\|\mb\beta\|_2),\sigma)<1$ is a contraction
factor depending on $x_{2},\ldots,x_{d}$. Integrating
over $x_{2},\ldots,x_{d}$, we obtain
\begin{align}
\label{eq:contract_part}
|\beta_{1}^{++}-\beta_{1}^{*}|\leq\kappa_1(\min(\beta_{1}^{*},\|\mb\beta\|_2),\sigma)\big|\|\mb\beta\|_2-\beta_{1}^{*}\big|.
\end{align}
Next we bound $\beta_{1}^{+}-\beta_{1}^{++}$. The regularity condition
allows us to change the order of differentiation and integral. 
\begin{align}
 & \beta_{1}^{+}-\beta_{1}^{++} \nonumber \\
= & \frac{\partial}{\partial t}\left[\int\frac{1}{\sigma^d C_{g}}(\exp\left(-g\left(\frac{1}{\sigma}\|(x_{1}-\beta_{1}^{*},x_{2}-t\beta_{2}^{*},\ldots,x_{d})\|_2\right)\right)x_{1}\tanh\left(0.5 F_{\mb\beta,\sigma}(\mb x)\right)\dup {\mb x}\right] \mid_{t\in(0,1)} \nonumber \\
= & \beta_{2}^{*}\left[\int\frac{1}{\sigma^d C_{g}}(\exp\left(-g\left(\frac{1}{\sigma}\|(x_{1}-\beta_{1}^{*},x_{2}-t\beta_{2}^{*},\ldots,x_{d})\|_2\right)\right)\frac{\partial}{\partial x_{2}}\left[x_{1}\tanh\left(0.5F_{\mb\beta,\sigma}(\mb x)\right)\right]\dup {\mb x}\right]\nonumber \\
\leq & D_{1}(\mb\beta^*,\sigma)\beta_{2}^{*} \label{eq:along_drift},
\end{align}
where in the last step, we used the assumption that the integral is uniformly bounded by $D_{1}(\mb\beta^*,\sigma)$, which only depends on $\mb\beta^*$ and $\sigma$ (This assumption is usually satisfied by the regularity condition.) 

Combining \eqref{eq:contract_part} and \eqref{eq:along_drift}, we can bound $\beta_1^+- \beta_1^*$ as follows:
\begin{align}
|\beta_1^+-\beta_1^*| = & |\beta_1^{++}-\beta_1^* +\beta_1^+- \beta_1^{++}| \nonumber \\
\leq &  |\beta_1^{++}-\beta_1^*|+|\beta_1^+- \beta_1^{++}| \nonumber \\
\leq & \kappa_1(\min(\beta_{1}^{*},\|\mb\beta\|),\sigma)\big|\|\mb\beta\|_2-\beta_{1}^{*}\big|+D_{1}(\mb\beta^*,\sigma)\beta_{2}^{*},
\end{align}
and the conclusion follows.
\end{proof}
\begin{lemma}[Orthogonal to $\mb \beta^*$]
\label{lem:orthogonal_convergence}
Assume that the density function $f$ satisfies the regularity condition. We further assume that  
\begin{align*}
\sup_{t\in [0,1], \|\mb\beta\|\in B(\mb\beta^*,\sigma)} \int\frac{1}{\sigma^d C_{g}}\exp\left(g\left(\frac{1}{\sigma}\|(x_{1}-\beta_{1}^{*},x_{2}-t\beta_{2}^{*},\ldots,x_{d})\|_2\right)\right) \Big|x_{2}\frac{\partial}{\partial x_{2}}\left[0.5 F_{\mb\beta,\sigma}(\mb x)\right]\Big| \dup {\mb x}
\end{align*}
is uniformly bounded by  $D_{2}(\mb\beta^*,\sigma )$. The following holds:
\[
|\beta_{2}^{*}-\beta_{2}^{+}|\leq\kappa_{2}(\mb\beta,\mb\beta^{*},\sigma)\beta_{2}^{*}+D_{2}(\mb\beta^*,\sigma )\beta_{2}^{*}
\]
for some $\kappa_{2}(\mb\beta,\mb\beta^{*},\sigma)\in(0,1)$.
\end{lemma}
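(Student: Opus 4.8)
The plan is to follow the template of the preceding lemma, Lemma~\ref{lem:along convegence}: introduce an intermediate update $\beta_2^{++}$ obtained by switching off the orthogonal component $\beta_2^*$ of the true parameter in the mixing density, observe that $\beta_2^{++}=0$ by symmetry, and then bound $\beta_2^+-\beta_2^{++}$ by a mean-value argument in an interpolation parameter, followed by an integration by parts in $x_2$ that moves one derivative off the log-concave density and onto the weight $x_2\tanh(0.5F_{\mb\beta,\sigma})$. The product rule then splits the outcome into a contractive term (which gives $\kappa_2$) and a drift term (controlled by $D_2(\mb\beta^*,\sigma)$).

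Concretely, in the basis of Lemma~\ref{lem:two-dim-structure} we have $\mb\beta=(\|\mb\beta\|_2,0,\ldots,0)$, $\mb\beta^*=(\beta_1^*,\beta_2^*,0,\ldots,0)$, and $\beta_2^+=\int f_{(\beta_1^*,\beta_2^*,0,\ldots),\sigma}(\mb x)\,x_2\tanh(0.5F_{\mb\beta,\sigma}(\mb x))\dup{\mb x}$. Write $\phi_t:=f_{(\beta_1^*,t\beta_2^*,0,\ldots),\sigma}$ and $\beta_2^+(t):=\int\phi_t(\mb x)\,x_2\tanh(0.5F_{\mb\beta,\sigma}(\mb x))\dup{\mb x}$, so $\beta_2^+=\beta_2^+(1)$. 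Since $F_{\mb\beta,\sigma}$ is even in $x_2$ and $\phi_0$ is even in $x_2$, the integrand of $\beta_2^+(0)$ is odd in $x_2$, so $\beta_2^{++}:=\beta_2^+(0)=0$ (this plays the role of $\beta_1^{++}$ in Lemma~\ref{lem:along convegence}). By the mean value theorem, $\beta_2^+=\tfrac{d}{dt}\beta_2^+(t)\big|_{t=\tau}$ for some $\tau\in(0,1)$; using the chain-rule identity $\tfrac{\partial}{\partial t}\phi_t=-\beta_2^*\,\tfrac{\partial}{\partial x_2}\phi_t$ and differentiating under the integral (the regularity condition) gives $\beta_2^+=-\beta_2^*\int(\partial_{x_2}\phi_\tau)\,x_2\tanh(0.5F_{\mb\beta,\sigma})\dup{\mb x}$.

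I would then integrate by parts in $x_2$; the boundary terms vanish because $\phi_\tau$ has log-concave, hence at worst exponential, tails while $x_2\tanh(\cdot)$ grows only linearly. With $\partial_{x_2}[x_2\tanh(0.5F_{\mb\beta,\sigma})]=\tanh(0.5F_{\mb\beta,\sigma})+x_2\tanh'(0.5F_{\mb\beta,\sigma})\,\partial_{x_2}(0.5F_{\mb\beta,\sigma})$ this yields
\[
\beta_2^+=\beta_2^*\left(\underbrace{\int\phi_\tau\tanh(0.5F_{\mb\beta,\sigma})\dup{\mb x}}_{A}\;+\;\underbrace{\int\phi_\tau\,x_2\tanh'(0.5F_{\mb\beta,\sigma})\,\partial_{x_2}(0.5F_{\mb\beta,\sigma})\dup{\mb x}}_{B}\right),
\]
so $\beta_2^*-\beta_2^+=\beta_2^*(1-A-B)$ and $|\beta_2^*-\beta_2^+|\le\beta_2^*|1-A|+\beta_2^*|B|$. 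For $B$: since $|\tanh'|\le1$, $|B|\le\int\phi_\tau\,|x_2\,\partial_{x_2}(0.5F_{\mb\beta,\sigma})|\dup{\mb x}$, which is dominated by the supremum over $t\in[0,1]$ appearing in the hypothesis, hence $|B|\le D_2(\mb\beta^*,\sigma)$. For $A$: clearly $A<1$ since $|\tanh|<1$ almost everywhere and $\phi_\tau$ integrates to one; and $A>0$ because, after centering the integral at $(\beta_1^*,\tau\beta_2^*,0,\ldots)$ and pairing $x_1$ with $-x_1$ (using that the centered density $\psi$ is even in $x_1$ and $F_{\mb\beta,\sigma}$ is odd in $x_1$), $A$ equals $\int_{x_1>0}\psi(\mb x)\big[\tanh(0.5F_{\mb\beta,\sigma}(x_1+\beta_1^*,x_2+\tau\beta_2^*,x_3,\ldots))-\tanh(0.5F_{\mb\beta,\sigma}(x_1-\beta_1^*,x_2+\tau\beta_2^*,x_3,\ldots))\big]\dup{\mb x}$, and the bracket is nonnegative since $F_{\mb\beta,\sigma}$ is nondecreasing in its first coordinate (as established inside the proof of Lemma~\ref{lem:angle-decrease}), and strictly positive on a set of positive measure once $\beta_1^*>0$ and $\|\mb\beta\|_2>0$. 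Setting $\kappa_2(\mb\beta,\mb\beta^*,\sigma):=\sup_{t\in[0,1]}\!\big(1-\int\phi_t\tanh(0.5F_{\mb\beta,\sigma})\dup{\mb x}\big)$, continuity in $t$ and compactness of $[0,1]$ give $\kappa_2\in(0,1)$, and combining the two estimates delivers $|\beta_2^*-\beta_2^+|\le\kappa_2\beta_2^*+D_2(\mb\beta^*,\sigma)\beta_2^*$.

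The main obstacle is the strict bound $A>0$, i.e.\ $\kappa_2<1$: this is precisely where the monotonicity of $F_{\mb\beta,\sigma}$ in the $x_1$-direction and the positive correlation $\beta_1^*=\langle\mb\beta^*,\hat{\mb\beta}\rangle>0$ (with $\|\mb\beta\|_2>0$) are needed — when $\beta_1^*=0$ one has $\beta_2^+=0$ by Lemma~\ref{lem:two-dim-structure}, so no constant strictly below $1$ can work, and the lemma is intended for the regime $\langle\mb\beta,\mb\beta^*\rangle>0$ that persists along the trajectory in Theorem~\ref{thm:location-high-dim-asymptotic}. The remaining technical points — legitimacy of differentiating under the integral, interchanging the $t$- and $x_2$-derivatives, and vanishing of the integration-by-parts boundary terms — follow from the log-concave tail decay and are subsumed in the regularity condition, exactly as in Lemma~\ref{lem:along convegence}.
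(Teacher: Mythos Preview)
Your proof is correct and follows essentially the same approach as the paper: introduce the interpolated map $t\mapsto\beta_2^+(t)$, observe $\beta_2^{++}=\beta_2^+(0)=0$, apply the mean value theorem, move the $t$-derivative onto the $x_2$-variable, and split the result into a contractive piece $A$ and a drift piece $B$ bounded by $D_2$. Two minor differences are worth noting: (i) you show $\beta_2^{++}=0$ via oddness in $x_2$, which is cleaner than the paper's route (symmetrize in $\mb x$, then use oddness in $x_1$); (ii) you correctly define $\kappa_2=\sup_{t}(1-A_t)$ and justify $\kappa_2\in(0,1)$ via compactness together with a pairing argument in $x_1$ that uses $\beta_1^*>0$ and the monotonicity of $F_{\mb\beta,\sigma}$ in its first coordinate --- the paper instead writes $\kappa_2=\sup_t A_t$, which appears to be a slip, since it is $1-A$ (not $A$) that controls $|\beta_2^*-\beta_2^+|$.
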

\begin{proof}
Recall that
\begin{align*}
\beta_{2}^{+}= & \int\frac{1}{\sigma^d C_{g}}\exp \left(-g\left(\frac{1}{\sigma}\|(x_{1}-\beta_{1}^{*},x_{2}-\beta_{2}^{*},\ldots,x_{d})\|_1\right)\right) \cdot x_{2}\tanh\left(0.5F_{\mb\beta,\sigma}(\mb x)\right)\dup {\mb x}.
\end{align*}
Consider the following quantity:
\begin{align*}
\beta_{2}^{++}:=  & \int\frac{1}{\sigma^d C_{g}}\exp \left(-g\left(\frac{1}{\sigma}\|(x_{1}-\beta_{1}^{*},x_{2},\ldots,x_{d})\|_2\right)\right)\cdot x_{2}\tanh\left(0.5 F_{\mb\beta,\sigma}(\mb x)\right)\dup {\mb x} \\
=& \int\frac{1}{2\sigma^dC_{g}}\Bigg(\exp\left(-g\left(\frac{1}{\sigma}\|(x_{1}-\beta_{1}^{*},x_{2},\ldots,x_{d})\|_2\right)\right)+\exp\left(-g\left(\frac{1}{\sigma}\|(x_{1}+\beta_{1}^{*},x_{2},\ldots,x_{d})\|_2\right)\right)\Bigg)\cdot\\
 & \cdot x_{2}\tanh\left(0.5 F_{\mb\beta,\sigma}(\mb x)\right)\dup {\mb x}=0.
\end{align*}
In the last step, we first use the fact that the integrand is even function in $\mb x$, and secondly the integrand is an odd function in $x_1$. Using the mean value theorem, we have:
\[
\beta_{2}^{+}=\frac{\partial}{\partial t}\int\frac{1}{\sigma^d C_{g}}\exp \left(-g\left(\frac{1}{\sigma}\|(x_{1}-\beta_{1}^{*},x_{2}-t\beta_{2}^{*},\ldots,x_{d})\|_2\right)\right)x_{2}\tanh\left(0.5 F_{\mb\beta,\sigma}(\mb x)\right) \dup {\mb x} \mid_{t\in[0,1]}.
\]
Under the regularity condition, we can interchange the order of differentiation and the integral:
\begin{align}
\beta_{2}^{+}= & \beta_{2}^{*}\cdot \int\frac{1}{\sigma^d C_{g}}\exp \left(-g \left(\frac{1}{\sigma}\|(x_{1}-\beta_{1}^{*},x_{2}-t\beta_{2}^{*},\ldots,x_{d})\|_2\right)\right)\tanh\left(0.5 F_{\mb\beta,\sigma}(\mb x)\right)\dup {\mb x} \nonumber\\
 & +\beta_{2}^{*}\cdot\int\frac{1}{\sigma^d C_{g}}\exp\left(-g\left(\frac{1}{\sigma}\|(x_{1}-\beta_{1}^{*},x_{2}-t\beta_{2}^{*},\ldots,x_{d})\|_2\right)\right)x_{2}\frac{\partial}{\partial x_{2}}\left[0.5 F_{\mb\beta,\sigma}(\mb x)\right]\dup {\mb x} \label{eq:orthog_decomp}.
\end{align}
Let us define:
\begin{align*}
\kappa_2 (\|\mb\beta\|,\mb\beta^*,\sigma): = \sup_{t\in (0,1)} \int\frac{1}{\sigma^d C_{g}}\exp \left(-g \left(\frac{1}{\sigma}\|(x_{1}-\beta_{1}^{*},x_{2}-t\beta_{2}^{*},\ldots,x_{d})\|_2\right)\right)\tanh\left(0.5 F_{\mb\beta,\sigma}(\mb x)\right)\dup {\mb x}.
\end{align*}
The first term for $\beta_2^+$ \eqref{eq:orthog_decomp} is bounded by $\kappa_{2}(\mb\beta,\mb\beta^{*},\sigma)\beta_{2}^{*}$ and the second term of $\beta_2^+$ \eqref{eq:orthog_decomp} is bounded by $D_{2}(\mb\beta^*,\sigma )\beta_{2}^{*}$ by the assumption. The conclusion follows.
\end{proof}

\begin{proposition}[Local Quantitative bound]
\label{prop:local_quantitative}
Suppose that the density function $f$ satisfies the regularity condition, and further assume that the conditions in Lemma \ref{lem:along convegence} and \ref{lem:orthogonal_convergence} holds. Then there exists $H(\mb\beta^*,\sigma)$ such that the following holds:
\begin{align}
\|\mb\beta^+ - \mb\beta^* \|_2^2 \leq \kappa(\mb\beta,\mb\beta^*,\sigma)^2 \|\mb\beta-\mb\beta^*\|_2^2 +H(\mb\beta^*,\sigma) \sin(\angle(\mb\beta,\mb\beta^*)).
\end{align}
where $\kappa(\mb\beta,\mb\beta^*,\sigma)<1$. 
\end{proposition}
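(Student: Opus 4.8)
The plan is to reduce the bound to the two one–dimensional estimates already in hand --- Lemma~\ref{lem:along convegence} for the component of $\mb\beta^+$ along $\hat{\mb\beta}$ and Lemma~\ref{lem:orthogonal_convergence} for the component orthogonal to $\mb\beta^*$ --- and then simply combine them. Write $\theta:=\angle(\mb\beta,\mb\beta^*)$. By Lemma~\ref{lem:two-dim-structure} the update is two dimensional, so in the orthonormal basis $\{\mb v_1=\hat{\mb\beta},\,\mb v_2=\hat{\mb\beta}^\perp,\dots\}$ used there we have $\mb\beta=(\|\mb\beta\|_2,0,\dots)$, $\mb\beta^*=(\beta_1^*,\beta_2^*,0,\dots)$ with $\beta_1^*=\|\mb\beta^*\|_2\cos\theta$ and $\beta_2^*=\|\mb\beta^*\|_2\sin\theta$, and $\mb\beta^+=(\beta_1^+,\beta_2^+,0,\dots)$. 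Hence
\[
\|\mb\beta^+-\mb\beta^*\|_2^2=(\beta_1^+-\beta_1^*)^2+(\beta_2^+-\beta_2^*)^2,\qquad (\|\mb\beta\|_2-\beta_1^*)^2\le\|\mb\beta-\mb\beta^*\|_2^2 .
\]

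Next, in the regime where the hypotheses of Lemmas~\ref{lem:along convegence} and~\ref{lem:orthogonal_convergence} apply (in particular $\|\mb\beta\|_2>0$ and $\beta_1^*>0$), those lemmas give, with the shorthand $\kappa_i\equiv\kappa_i(\mb\beta,\mb\beta^*,\sigma)\in(0,1)$ and $D_i\equiv D_i(\mb\beta^*,\sigma)$,
\[
|\beta_1^+-\beta_1^*|\le\kappa_1\big|\|\mb\beta\|_2-\beta_1^*\big|+D_1\|\mb\beta^*\|_2\sin\theta,\qquad |\beta_2^+-\beta_2^*|\le(\kappa_2+D_2)\|\mb\beta^*\|_2\sin\theta .
\]
I would then square both. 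The cross term produced by squaring the first one is handled by $2\kappa_1 D_1\big|\|\mb\beta\|_2-\beta_1^*\big|\,\|\mb\beta^*\|_2\sin\theta\le 2D_1 R\,\|\mb\beta^*\|_2\sin\theta$, where $R:=B(\mb\beta^*,\sigma)+\|\mb\beta^*\|_2$ uses $\kappa_1\le1$ and the boundedness of the \vem iterates $\|\mb\beta\|_2\le B(\mb\beta^*,\sigma)$ (the bound stated just before Lemma~\ref{lem:along convegence}). Using also $\sin^2\theta\le\sin\theta$ (as $\sin\theta\in[0,1]$), the sum of the two squared bounds becomes
\[
\|\mb\beta^+-\mb\beta^*\|_2^2\le\kappa_1^2\big(\|\mb\beta\|_2-\beta_1^*\big)^2+\underbrace{\Big[\big(D_1^2+(\kappa_2+D_2)^2\big)\|\mb\beta^*\|_2^2+2D_1R\|\mb\beta^*\|_2\Big]}_{H(\mb\beta^*,\sigma)}\sin\theta\le\kappa_1^2\|\mb\beta-\mb\beta^*\|_2^2+H(\mb\beta^*,\sigma)\sin\theta ,
\]
which is the claimed inequality with $\kappa(\mb\beta,\mb\beta^*,\sigma):=\kappa_1(\mb\beta,\mb\beta^*,\sigma)<1$ (replacing $\kappa_2$ by $1$ inside $H$ makes it depend only on $\mb\beta^*$ and $\sigma$).

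Once the two one–dimensional lemmas are available this is essentially bookkeeping; the points that need care are (i) that the ``drift'' contributions $D_1\beta_2^*$ and $D_2\beta_2^*$ are genuinely proportional to $\sin\theta$, which is immediate from $\beta_2^*=\|\mb\beta^*\|_2\sin\theta$; (ii) that the cross term from squaring the first bound can be absorbed into an $O(\sin\theta)$ quantity, which is exactly where the uniform bound $\|\mb\beta\|_2\le B(\mb\beta^*,\sigma)$ on the iterates is used; and (iii) that the retained contraction constant $\kappa=\kappa_1$ stays strictly below $1$, guaranteed by Lemma~\ref{lem:along convegence}. I expect (ii) --- ensuring the residual coefficient $H$ depends only on $\mb\beta^*$ and $\sigma$, not on the current iterate --- to be the main thing to get right; after that, no further computation is needed.
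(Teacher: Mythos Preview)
Your proposal is correct and follows essentially the same approach as the paper: decompose $\|\mb\beta^+-\mb\beta^*\|_2^2$ coordinate-wise in the two-dimensional basis, invoke Lemmas~\ref{lem:along convegence} and~\ref{lem:orthogonal_convergence}, square, and absorb all $\beta_2^*$-dependent terms into $H(\mb\beta^*,\sigma)\sin\theta$ using the boundedness $\|\mb\beta\|_2\le B(\mb\beta^*,\sigma)$. The only cosmetic difference is that the paper keeps the $\kappa_2^2(\beta_2^*)^2$ term on the contraction side by setting $\kappa=\max(\kappa_1,\kappa_2)$ (using $(\beta_2^*)^2=(\beta_2-\beta_2^*)^2\le\|\mb\beta-\mb\beta^*\|_2^2$), whereas you push it into $H$ via $\kappa_2\le1$ and take $\kappa=\kappa_1$; both choices are valid and yield the stated bound.
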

\begin{proof}
Since the \variant EM update is bounded, we can assume that $\|\mb\beta\|_2$ is bounded without loss of generality. Furthermore, $\|\mb\beta\|_2\leq B(\mb\beta^*,\sigma)$.
\begin{align*}
\|\mb\beta^+ - \mb\beta^* \|_2^2= & \|\beta_1^+ - \beta_1^* \|_2 + \|\beta_2^+ - \beta_2^* \|_2\\
\leq & (\kappa_1 |\|\mb\beta\|_2-\beta_1^*|+D_1 \beta_2^*)^2 + (\kappa_2 \beta_2^* + D_2\beta_2^*)^2 \\
= & \kappa_1^2 |\|\mb\beta\|_2-\beta_1^*|^2 + 2D_1\beta_2^*\kappa_1 |\|\mb\beta\|_2-\beta_1^*| + D_1^2(\beta_2^*)^2 +\\
& \kappa_2^2  (\beta_2^*)^2+ 2D_2\beta_2^*\kappa_2 \beta_2^* + D_2^2 (\beta_2^*)^2 \\
\leq &\kappa^2 \|\mb\beta - \mb\beta^* \|^2 + H(\mb\beta^*,\sigma) \sin(\angle(\mb\beta,\mb\beta^*)).
\end{align*}
In the last step, $\kappa = \max(\kappa_1,\kappa_2)$, and $H$ absorbs all the coefficient of $\beta_2^*$ in the cross term. It is easy to check that $H$ only depends on $\mb\beta^*$ and $\sigma$ as $\|\mb\beta\|_2$ is bounded by $B(\mb\beta^*,\sigma)$.
\end{proof}

\paragraph{Establishing Local Stable Region}
Define
\begin{align}
\kappa_\tau := \max_{\mb\beta \in N_{\mb\beta^*}(\tau)} \kappa(\mb\beta,\mb\beta^*,\sigma)
\end{align}
be the worst-case contraction factor in the $\tau$-neighborhood of $\mb\beta^*$, namely $N_{\mb\beta^*}(\tau) = \left\{\mb\beta: \|\mb\beta - \mb\beta^* \|_2 \leq \tau \|\mb\beta^*\|_2\right\}$.

\begin{corollary}[Local Stable Region]
\label{cor:non-escaping}
Let $D$ be a positive number satisfying $D\leq \frac{1}{2}\|\mb\beta^*\|_2$. Suppose that $\mb\beta^0 \in \bb{R}^d$ is such that (1) $\sin(\angle (\mb\beta^0,\mb\beta^*)) \leq \frac{D^2(1-\kappa_{0.5}^2)}{H(\mb\beta^*,\sigma)}$ (2) $\|\mb\beta^0-\mb\beta^*\|_2\leq D$, where $H(\mb\beta^*,\sigma)$ is defined in Corollary~\ref{prop:local_quantitative}. The following holds for the \variant EM update $\mb\beta^1$ starting at $\mb\beta^0$:
\begin{align*}
\|\mb\beta^1-\mb\beta^*\|_2\leq D,\quad \sin(\angle (\mb\beta^1,\mb\beta^*)) \leq \frac{D^2(1-\kappa_{0.5}^2)}{H(\mb\beta^*,\sigma)}
\end{align*}
\end{corollary}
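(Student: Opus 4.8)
The plan is to verify the two invariance conditions for $\mb\beta^1$ separately: the $\ell_2$ bound follows from the quantitative local estimate in Proposition~\ref{prop:local_quantitative} with the hypotheses plugged in, and the angle bound follows from the angle-decreasing property together with monotonicity of $\sin$. The whole argument is a direct substitution; there is no genuine obstacle, but a couple of consistency checks are needed.

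First I would observe that condition (2), $\|\mb\beta^0-\mb\beta^*\|_2\le D\le \tfrac12\|\mb\beta^*\|_2$, says precisely that $\mb\beta^0\in N_{\mb\beta^*}(0.5)$, so by the definition of the worst-case contraction factor, $\kappa(\mb\beta^0,\mb\beta^*,\sigma)\le \kappa_{0.5}<1$ (the strict inequality holds because $N_{\mb\beta^*}(0.5)$ is compact and excludes $\mb 0$, and $\kappa(\cdot)<1$ there). Applying Proposition~\ref{prop:local_quantitative} at $\mb\beta^0$ and then using conditions (2) and (1) gives
\[
\|\mb\beta^1 - \mb\beta^*\|_2^2
\le \kappa_{0.5}^2\,\|\mb\beta^0 - \mb\beta^*\|_2^2 + H(\mb\beta^*,\sigma)\sin\bigl(\angle(\mb\beta^0,\mb\beta^*)\bigr)
\le \kappa_{0.5}^2 D^2 + H(\mb\beta^*,\sigma)\cdot\frac{D^2(1-\kappa_{0.5}^2)}{H(\mb\beta^*,\sigma)}
= D^2 ,
\]
which is the first claimed bound.

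For the angle bound, I would first note that $\|\mb\beta^0-\mb\beta^*\|_2\le \tfrac12\|\mb\beta^*\|_2$ forces $\langle\mb\beta^0,\mb\beta^*\rangle>0$ and $\mb\beta^0\neq\mb 0$, hence $\angle(\mb\beta^0,\mb\beta^*)\in[0,\pi/2)$. If this angle is zero, Lemma~\ref{lem:two-dim-structure} keeps $\mb\beta^1$ in $\mathrm{span}(\mb\beta^*)$, so $\sin(\angle(\mb\beta^1,\mb\beta^*))=0$ and the second bound is immediate. Otherwise $\angle(\mb\beta^0,\mb\beta^*)\in(0,\pi/2)$, and the angle-decreasing property of Lemma~\ref{lem:angle-decrease} yields $\angle(\mb\beta^1,\mb\beta^*)\le\angle(\mb\beta^0,\mb\beta^*)<\pi/2$; since $\sin$ is nondecreasing on $[0,\pi/2]$, we conclude $\sin(\angle(\mb\beta^1,\mb\beta^*))\le\sin(\angle(\mb\beta^0,\mb\beta^*))\le \frac{D^2(1-\kappa_{0.5}^2)}{H(\mb\beta^*,\sigma)}$, as required. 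The only points that deserve care are (i) confirming $\mb\beta^0\in N_{\mb\beta^*}(0.5)$ so that $\kappa_{0.5}$ is an admissible contraction factor in Proposition~\ref{prop:local_quantitative}, and (ii) checking that both angles lie in $[0,\pi/2)$ so that Lemma~\ref{lem:angle-decrease} and monotonicity of $\sin$ can be combined, with the degenerate zero-angle case handled via Lemma~\ref{lem:two-dim-structure}.
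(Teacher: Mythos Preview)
Your proposal is correct and follows essentially the same approach as the paper: apply Proposition~\ref{prop:local_quantitative} together with $\kappa(\mb\beta^0,\mb\beta^*,\sigma)\le\kappa_{0.5}$ to get the $\ell_2$ bound, and invoke the angle-decreasing property for the $\sin$ bound. Your version is in fact a bit more careful than the paper's, explicitly checking $\mb\beta^0\in N_{\mb\beta^*}(0.5)$, handling the degenerate zero-angle case via Lemma~\ref{lem:two-dim-structure}, and justifying monotonicity of $\sin$ on $[0,\pi/2)$.
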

\begin{proof}
Applying Proposition \ref{prop:local_quantitative}, we have
\begin{align*}
\|\mb\beta^1 - \mb\beta^* \|^2 \leq  & \kappa(\mb\beta^0,\mb\beta^*,\sigma)^2 \|\mb\beta-\mb\beta^*\|_2^2 +H(\mb\beta^*,\sigma) \sin(\angle(\mb\beta,\mb\beta^*)) \\
\leq & \kappa_{0.5}^2 D^2 +H(\mb\beta^*,\sigma)  \frac{D^2(1-\kappa_{0.5}^2)}{H(\mb\beta^*,\sigma)} = D^2.
\end{align*}
On the other hand, by the angle decreasing property, $\sin(\angle (\mb \beta^1,\mb\beta^*))\leq \sin(\angle (\mb \beta,\mb\beta^*))\leq \frac{D^2(1-\kappa_{0.5}^2)}{H(\mb\beta^*,\sigma)}$, thus the corollary is proved.
\end{proof}

\paragraph{Establishing Norm Incerasing Region}

\begin{lemma}
\label{lem:norm_increasing}
There exists $\theta>0$ such that $\|\mb \beta^+\|_2>\|\mb \beta\|_2$ whenever $\mb \beta\in N_0(\frac{1}{8}\|\mb \beta^*\|_2)$ and $\angle (\mb \beta^*, \mb \beta) \leq \theta$. 
\end{lemma}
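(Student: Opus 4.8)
The plan is to reduce everything to a one–dimensional expansion estimate near the origin, to treat the rest of $N_0(\tfrac18\|\mb\beta^*\|_2)$ by compactness, and to glue. We may assume $\mb\beta\neq\mb0$ (the inequality is vacuous at $\mb0$, where $M(\mb\beta^*,\mb0)=\mb0$) and, by the rescaling used in Theorem~\ref{thm:Global-Convergence-1D-VEM}, that $\sigma=1$. By Lemma~\ref{lem:two-dim-structure} the update stays in $\mathrm{span}(\mb\beta,\mb\beta^*)$, so write $r=\|\mb\beta\|_2$ and $\phi=\angle(\mb\beta,\mb\beta^*)$; it suffices to find $\theta>0$ with $\|M(\mb\beta^*,\mb\beta)\|_2>r$ whenever $0<r\le\tfrac18\|\mb\beta^*\|_2$ and $\phi\le\theta$. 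I would split this into $r\le r_0$ (small $r$, treated by linearization) and $r\in[r_0,\tfrac18\|\mb\beta^*\|_2]$ (treated by continuity and compactness), and take $\theta$ to be the minimum of the two thresholds produced.

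For $r$ small, since $M(\mb\beta^*,\mb0)=\mb0$ I would differentiate the one–component form \eqref{eq:one_comp_variant_update} at $\mb\beta=\mb0$, which is legitimate under the regularity condition (after differentiation the integrand is dominated by a multiple of $\|\mb X\|_2\,g'(\|\mb X\|_2)$, which has finite expectation for $f\in\mathcal F$). Using $\tanh'(0)=1$ and $\nabla_{\mb\beta}F_{\mb\beta,1}(\mb X)\big|_{\mb\beta=\mb0}=\tfrac{2g'(\|\mb X\|_2)}{\|\mb X\|_2}\mb X$, this yields $M(\mb\beta^*,\mb\beta)=A\mb\beta+o(\|\mb\beta\|_2)$ with $A=\mathbb{E}_{\mb X\sim f_{\mb\beta^*,1}}\!\big[\tfrac{g'(\|\mb X\|_2)}{\|\mb X\|_2}\,\mb X\mb X^{\top}\big]$. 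Because $f_{\mb\beta^*,1}$ is invariant under rotations fixing $\mb\beta^*$, $A$ commutes with all such rotations, so $\hat{\mb\beta}^*$ is an eigenvector; call the eigenvalue $a$. If $a>1$, then $\langle M(\mb\beta^*,\mb\beta),\hat{\mb\beta}^*\rangle=ar\cos\phi+o(r)$, so fixing $\theta_1$ with $a\cos\theta_1>1$ gives $\|M(\mb\beta^*,\mb\beta)\|_2\ge\langle M(\mb\beta^*,\mb\beta),\hat{\mb\beta}^*\rangle>r$ for all sufficiently small $r$ and all $\phi\le\theta_1$.

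Everything therefore hinges on showing $a>1$, which I regard as the main obstacle. The route I would take: condition $\mb X\sim f_{\mb\beta^*,1}$ on its component orthogonal to $\mb\beta^*$ (of norm $\rho$), set $X_\parallel=\langle\mb X,\hat{\mb\beta}^*\rangle$, and note $\|\mb X\|_2=\sqrt{X_\parallel^2+\rho^2}$; a short computation (using $\gamma_\rho'(t)/t=g'(\sqrt{t^2+\rho^2})/\sqrt{t^2+\rho^2}$ with $\gamma_\rho(t):=g(\sqrt{t^2+\rho^2})$) gives $a=\mathbb{E}\big[|X_\parallel|\gamma_\rho'(|X_\parallel|)\big]$, where conditionally $X_\parallel$ has the symmetric log-concave density $q_\rho\propto e^{-\gamma_\rho(|\cdot-\mu|)}$ shifted by $\mu:=\|\mb\beta^*\|_2>0$. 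For any normalized symmetric log-concave $q\propto e^{-\gamma(|\cdot|)}$, integration by parts gives $\mathbb{E}_{X\sim q}[|X|\gamma'(|X|)]=1$; running the same manipulation on the shifted density, with $w(x):=q(x-\mu)/q(x)$, gives $\mathbb{E}_{X\sim q(\cdot-\mu)}[|X|\gamma'(|X|)]=1+\mathbb{E}_{X\sim q}[X w'(X)]=1+\int G(x)w'(x)\,dx$, where $G(x):=\int_x^\infty t\,q(t)\,dt$. Now $G>0$ everywhere since $q$ is symmetric with zero mean; $w$ is nondecreasing by the monotone likelihood ratio property of log-concave densities, so $w'\ge0$; and $w'\not\equiv0$ because $\mu>0$. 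Hence the last integral is strictly positive, so applying this with $q=q_\rho,\gamma=\gamma_\rho$ shows each conditional expectation exceeds $1$ and therefore $a>1$. The merit of this route is that it sidesteps the naive termwise comparison $\tfrac12\big(\psi(|y+\mu|)+\psi(|y-\mu|)\big)\ge\psi(|y|)$ for $\psi(t)=t\gamma'(t)$, which in fact fails for potentials heavier than quadratic (e.g.\ $\psi$ is not convex for the logistic $g$).

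Finally, for $r\in[r_0,\tfrac18\|\mb\beta^*\|_2]$ I would argue by compactness: the map $\mb\beta\mapsto\|M(\mb\beta^*,\mb\beta)\|_2-\|\mb\beta\|_2$ is continuous (dominated convergence, since $|\tanh|\le1$ and $\|\mb X\|_2$ is integrable), and on the compact arc $\{r\hat{\mb\beta}^*:r_0\le r\le\tfrac18\|\mb\beta^*\|_2\}$ it is strictly positive by Lemma~\ref{lem:optimal-direction-convergence} and its proof (which, via Theorem~\ref{thm:Global-Convergence-1D-VEM} applied conditionally on the coordinates orthogonal to $\mb\beta^*$, gives $\|M(\mb\beta^*,r\hat{\mb\beta}^*)\|_2>r$ for $0<r<\|\mb\beta^*\|_2$). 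Hence it is bounded below by some $2\delta>0$ on that arc, and by uniform continuity there is $\theta_2>0$ with $\|M(\mb\beta^*,\mb\beta)\|_2-\|\mb\beta\|_2\ge\delta$ once $r\in[r_0,\tfrac18\|\mb\beta^*\|_2]$ and $\phi\le\theta_2$. Setting $\theta=\min(\theta_1,\theta_2)$ finishes the argument. Beyond establishing $a>1$, the remaining ingredients — the differentiation under the integral in the linearization and the compactness step — should be routine under the regularity condition.
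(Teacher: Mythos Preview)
Your overall plan—linearize at $\mb\beta=\mb0$ to handle small $r$, then invoke compactness and the one-dimensional strict increase along $\hat{\mb\beta}^*$ for $r\in[r_0,\tfrac18\|\mb\beta^*\|_2]$—is a sensible route and differs from the paper's. The paper does not linearize: it introduces the auxiliary $\beta_1^{++}$ (the first coordinate of the update with $\beta_2^*$ set to $0$), appeals to the one-dimensional theory to get $\beta_1^{++}>\|\mb\beta\|_2$ for \emph{all} $0<\|\mb\beta\|_2<\|\mb\beta^*\|_2\cos(\angle)$, and then argues that the region $\{\beta_1^+>\|\mb\beta\|_2\}$ approximates $\{\beta_1^{++}>\|\mb\beta\|_2\}$ as the angle shrinks. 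Your compactness half matches this in spirit; the divergence is entirely in how the origin is handled.

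There is, however, a genuine gap in your proof of $a>1$. The displayed identity
\[
\mathbb{E}_{X\sim q}[Xw'(X)] \;=\; \int G(x)\,w'(x)\,dx,\qquad G(x):=\int_x^\infty t\,q(t)\,dt,
\]
is false. Take $q$ standard Gaussian and $\mu>0$: then $w(x)=e^{\mu x-\mu^2/2}$, so $w'=\mu w$, and $G(x)=q(x)$. The left side is $\mu\int x\,q(x-\mu)\,dx=\mu^2$, while the right side is $\mu\int q(x)w(x)\,dx=\mu$. (What integration by parts actually gives, using $G'=-xq$ and $G(\pm\infty)=0$, is $\int G(x)w'(x)\,dx=\int x\,q(x)\,w(x)\,dx=\mathbb{E}_{q(\cdot-\mu)}[X]=\mu$, not $\mathbb{E}_q[Xw'(X)]$.) Consequently the positivity argument ``$G>0$, $w'\ge0$, $w'\not\equiv0$'' does not touch the quantity $\int x\,w'(x)\,q(x)\,dx$ you need, and the linearization step does not go through as written. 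Since your entire small-$r$ regime rests on $a>1$, this is the step that has to be repaired; the paper's $\beta_1^{++}$ device avoids having to prove this eigenvalue inequality directly by importing the strict one-dimensional increase for every positive $\|\mb\beta\|_2$ and perturbing in the angle.
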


\begin{proof}
Let us recall the expression for $\beta_1^+$ and $\beta_1^{++}$ (defined in Lemma \ref{lem:along convegence}):
\begin{align*}
\beta_1^+ = & \int_{x_{-1}}\int_{x_{1}}\frac{1}{\sigma^d C_{g}}\exp \left(-g \left(\frac{1}{\sigma}\|(x_{1}-\|\mb \beta^{*}\|_2\cos(\angle (\mb \beta^*, \mb \beta)),x_{2}-\|\mb \beta^{*}\|_2\sin(\angle (\mb \beta^*, \mb \beta)),\ldots,x_{d})\|_2\right)\right)\\
& \cdot x_{1}\tanh\left(0.5 F_{\mb\beta,\sigma}(\mb x)\right)\dup x_{1} \dup x_{-1},
\end{align*}
and
\begin{align*}
\beta_1^{++} = & \int_{x_{-1}}\int_{x_{1}}\frac{1}{\sigma^d C_{g}}\exp \left(-g \left(\frac{1}{\sigma}\|(x_{1}-\|\mb \beta^{*}\|_2\cos(\angle (\mb \beta^*, \mb \beta)),x_{2},\ldots,x_{d})\|_2\right)\right)\\
& \cdot x_{1}\tanh\left(0.5 F_{\mb\beta,\sigma}(\mb x)\right)\dup x_{1} \dup x_{-1}.
\end{align*}

We view $\beta_1^{++}$ as a modified iterate compared to $\beta_1^{+}$, and their discrepancy becomes smaller and smaller as $\angle (\mb \beta^*, \mb \beta)$ goes to $0$ ($\beta_1^{++}=\beta_1^+$ when $\angle (\mb \beta^*, \mb \beta)=0$). For the modified iterate $\beta_1^{++}$, it has a fixed point $\|\mb \beta^{*}\|_2\cos(\angle (\mb \beta^*, \mb \beta))$ along the $\hat{\mb \beta}$ direction. Moreover, $\beta_1^{++}$ has the following two properties inherited from the structure of an one dimensional update:
\begin{itemize}
    \item $\beta_1^{++}$ is increasing in $\|\mb \beta\|_2$;
    \item $\beta_1^{++}>\|\mb \beta\|_2$ whenever $\|\mb \beta\|_2<\|\mb \beta^{*}\|_2\cos(\angle (\mb \beta^*, \mb \beta)) $.
    \begin{figure}[h]
        \centering
        \includegraphics[scale = 0.4, trim = 200 200 200 200, clip]{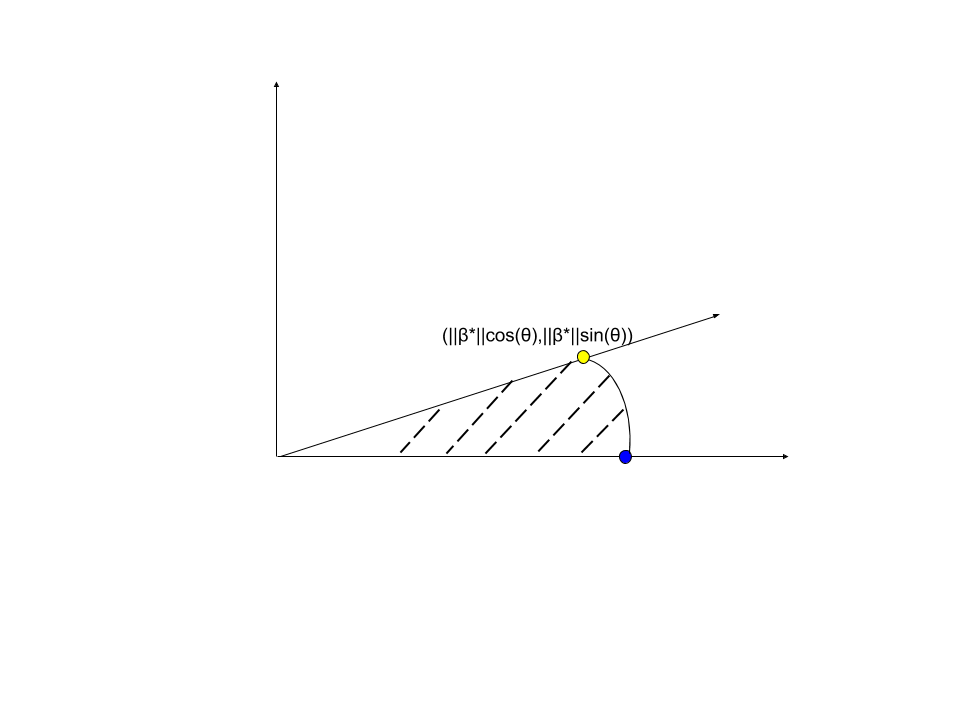}
        \caption{The shaded region $S(\theta)$ is where $\beta_1^{++}>\|\mb \beta\|_2$.}
        \label{fig:norm_increasing_region}
    \end{figure}
\end{itemize} 
In Figure \ref{fig:norm_increasing_region}, we illustrate the norm increasing region for $\beta_1^{++}$:
\[
S(\theta): = \left\{\mb \beta: \beta_1^{++}>\|\mb \beta\|_2, \angle (\mb \beta^*, \mb \beta) \leq \theta\right\}.
\]

By the continuity of the \variant EM update, it is easy to see that the norm increasing region for $\beta_1^+$:
\[
\tilde{S}(\theta): = \left\{\mb \beta: \beta_1^{+}>\|\mb \beta\|_2, \angle (\mb \beta^*, \mb \beta) \leq \theta\right\}
\]
is close to $S(\theta)$ when $\theta$ is small (note that $S(\theta) = \tilde{S}(\theta)$ when $\theta = 0$). Since for some $\Theta_0>0$, $N_0(1/8\|\mb \beta^*\|_2)\cap \left\{\mb \beta: \angle (\mb \beta^*, \mb \beta) \leq \theta \right\}\subseteq S(\Theta_0)$, i.e, $(\Theta_0)$ contains a (bounded) cone-shape region. We conclude that for sufficiently small $\theta$, $N_0(1/8\|\mb \beta^*\|_2)\cap \left\{\mb \beta: \angle (\mb \beta^*, \mb \beta) \leq \theta \right\}\subseteq \tilde{S}(\theta)$. In plain words, $\tilde{S}(\theta)$ contains all those $\mb \beta$, whose angle with $\mb \beta^*$ is less than $\theta$ and whose norm is less than $\frac{1}{8}\|\mb \beta^*\|_2$.
\end{proof}

\begin{proposition}
Suppose ${\mb \beta^t}\subseteq N_0(\frac{1}{8}\|\mb \beta^*\|_2)$ is a sequence of \variant EM iterates and $\mb \beta^t \neq 0$ for all $t$, then it is impossible that $\lim_t \mb \beta^t = \mb 0$.   
\end{proposition}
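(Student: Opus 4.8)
The plan is to argue by contradiction: assume $\lim_t \mb\beta^t = \mb 0$ and show this is incompatible with the local norm-increasing behaviour of \vem near $\mb 0$. Two ingredients, both already established, are needed. The first is the angle-decreasing property of Lemma~\ref{lem:angle-decrease}: as long as an iterate is nonzero and $\angle(\mb\beta^t,\mb\beta^*)\in(0,\pi/2)$, the next iterate has strictly smaller angle to $\mb\beta^*$. The second is Lemma~\ref{lem:norm_increasing}, which supplies a threshold $\theta>0$ such that $\|M(\mb\beta^*,\mb\beta)\|_2>\|\mb\beta\|_2$ whenever $\mb\beta\in N_0(\tfrac{1}{8}\|\mb\beta^*\|_2)$ and $\angle(\mb\beta,\mb\beta^*)\le\theta$.

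First I would reduce to the non-degenerate configuration. Replacing $\mb\beta^*$ by $-\mb\beta^*$ if necessary, assume $\angle(\mb\beta^0,\mb\beta^*)\in(0,\pi/2)$; the parallel case is handled by Lemma~\ref{lem:optimal-direction-convergence} (the iterates then converge to $\pm\mb\beta^*$, not $\mb 0$), and in the orthogonal case the iterates remain on a fixed line through $\mb 0$ by Lemma~\ref{lem:two-dim-structure}. Under this reduction, Lemma~\ref{lem:angle-decrease} makes the sequence $\psi_t:=\angle(\mb\beta^t,\mb\beta^*)$ strictly decreasing, hence convergent to some $\psi_\infty\in[0,\pi/2)$. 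The crux of the argument is to show $\psi_\infty=0$, i.e.\ that the angles eventually drop below $\theta$. Suppose instead $\psi_\infty>0$. The unit vectors $\mb\beta^t/\|\mb\beta^t\|_2$ live on the compact unit sphere with $\angle(\mb\beta^t/\|\mb\beta^t\|_2,\mb\beta^*)=\psi_t\in[\psi_\infty,\psi_0]\subset(0,\pi/2)$, so along a subsequence they converge to a unit vector $\mb u$ with $\angle(\mb u,\mb\beta^*)=\psi_\infty$, while $\mb\beta^{t_k}\to\mb 0$ with direction tending to $\mb u$. Using the continuity (and first-order behaviour at $\mb 0$) of $M(\mb\beta^*,\cdot)$ together with the strict decrease $\angle\big(M(\mb\beta^*,\varepsilon\mb u),\mb\beta^*\big)<\psi_\infty$ for $\varepsilon\downarrow 0$, one pushes $\angle(\mb\beta^{t_k+1},\mb\beta^*)$ strictly below $\psi_\infty$ for some $k$, contradicting monotonicity of $\{\psi_t\}$. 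This is the same compactness/continuity device used to locate the accumulation points in the proof of Theorem~\ref{thm:location-high-dim-asymptotic}, and it is the step I expect to demand the most care, since the angle decrease degenerates as the iterates shrink toward $\mb 0$ and one must argue that it nonetheless cannot stall at a positive value.

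Once $\psi_\infty=0$ is in place, choose $T$ with $\psi_t\le\theta$ for all $t\ge T$. By hypothesis every $\mb\beta^t$ lies in $N_0(\tfrac{1}{8}\|\mb\beta^*\|_2)$ and is nonzero, so Lemma~\ref{lem:norm_increasing} gives $\|\mb\beta^{t+1}\|_2>\|\mb\beta^t\|_2$ for all $t\ge T$. Therefore $\|\mb\beta^t\|_2\ge\|\mb\beta^T\|_2>0$ for every $t\ge T$, and the sequence $\{\mb\beta^t\}$ cannot converge to $\mb 0$ — the desired contradiction. The only genuinely nontrivial part of the write-up is the middle paragraph; the rest is bookkeeping built directly on Lemmas~\ref{lem:angle-decrease} and~\ref{lem:norm_increasing}.
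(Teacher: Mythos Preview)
Your overall strategy matches the paper's: once the iterates lie inside the cone $\angle(\mb\beta^t,\mb\beta^*)\le\theta$ supplied by Lemma~\ref{lem:norm_increasing}, that lemma forces $\|\mb\beta^{t+1}\|_2>\|\mb\beta^t\|_2\ge\|\mb\beta^T\|_2>0$ and the contradiction with $\mb\beta^t\to\mb 0$ follows. The paper's proof is in fact just those two sentences; it writes ``by the angle decreasing property of the iterates, there exist $\theta>0$ and $T$ such that $\angle(\mb\beta^*,\mb\beta^t)\le\theta$ for all $t\ge T$'' and then invokes Lemma~\ref{lem:norm_increasing}. The reason it can be so brief is that this proposition is called \emph{inside} the proof of Theorem~\ref{thm:location-high-dim-asymptotic}, at a stage where the limiting angle $\theta^\infty=0$ has already been established via the compactness/continuity argument; the paper is importing that conclusion rather than reproving it.

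Your middle paragraph attempts to reprove $\psi_\infty=0$ from scratch under the hypothesis $\mb\beta^t\to\mb 0$, which is more than the paper does, and your sketch has a genuine soft spot. Extracting directions $\mb\beta^{t_k}/\|\mb\beta^{t_k}\|_2\to\mb u$ and passing to the limit in the strict inequality $\angle(M(\mb\beta^*,\varepsilon\mb u),\mb\beta^*)<\psi_\infty$ only yields $\angle(J\mb u,\mb\beta^*)\le\psi_\infty$ for the Jacobian $J=D_{\mb\beta}M(\mb\beta^*,\mb 0)$, not a strict inequality---strict inequalities do not survive limits, and this is exactly the degeneration you flag. To close the gap you would have to compute $J$ and verify the angle decrease at the linearized level directly (essentially the calculation behind Lemma~\ref{lem:angle-decrease} specialized at $\|\mb\beta\|_2\to 0^+$). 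That is doable but is work the paper sidesteps by citing the $\theta^\infty=0$ conclusion from the surrounding proof. If you want the proposition to stand on its own, that Jacobian step is what you must add; otherwise, taking the angle limit from Theorem~\ref{thm:location-high-dim-asymptotic} as given puts you exactly on the paper's short path.
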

\begin{proof}
We argue by contradiction. By the angle decreasing property of the iterates, there exists $\theta>0$ and $T$, such that $\angle (\mb \beta^*, \mb \beta^t) \leq \theta$ for all $t\geq T$. By Lemma \ref{lem:norm_increasing}, we know that $\|\mb \beta^{t+1}\|_2> \|\mb \beta^{t}\|_2\geq \mb \|\mb \beta^{T}\|_2>0$ for all $t\geq T$. Thus, the norm of the iterates is lower bounded by a positive number and it is impossible for the iterates to converge to $\mb 0$. 
\end{proof}

\subsection{Proof of Lemma~\ref{lem: non-contraction}}
\label{subsec:non-contraction}

For readability we restate the lemma below.

\noncontractelltwo*

\begin{proof}
$\beta_1^+$ is an increasing function in $\|\mb \beta\|_2$ by Lemma \ref{lem:increasing_wrt_beta}.
Let us understand the derivative of $\frac{\partial\beta_{1}^{+}}{\partial\|\mb\beta\|_{2}}\mid_{\|\mb\beta\|_{2}=0}:=D_{1}(\|\mb\beta^{*}\|_{2},f)$ when $\angle\mb\beta,\mb\beta^{*}=\frac{\pi}{2}$. The expression is the following:
\begin{align}
 D_{1}(\|\mb\beta^{*}\|_{2},f) & =\int_{\mb x}\frac{1}{C_{g}}\exp \left(-g \left(\frac{1}{\sigma}\|(x_{1},x_{2}-\|\mb\beta^{*}\|,\ldots,x_{d})\|_2\right)\right)\frac{x_{1}}{\sigma}\frac{\partial }{\partial x_1}g \left(\frac{1}{\sigma}\|(x_{1},\ldots,x_{d})\|_2\right) \dup {\mb x}\nonumber \\
 & =\int_{\mb x}\frac{1}{C_{g}}\exp (-g(\|(x_{1},x_{2}-\|\mb\beta^{*}\|,\ldots,x_{d})\|_2)x_{1} \frac{\partial }{\partial x_1}g(\|(x_{1},\ldots,x_{d})\|_2)\dup {\mb x}.\label{eq:derivative-at-boundary}
\end{align}
Note that when $\|\mb\beta^{*}\|=0$, $D_{1}=1$. Let us further take the derivative with respect to $\|\mb\beta^{*}\|_{2}$:
\begin{align}
\frac{d}{d\|\mb\beta^{*}\|_{2}}D_{1}(\|\mb\beta^{*}\|_{2},f)= & \mathbb{E}_{\mb X\sim f_{\mb\beta^{*}}}X_{1} \frac{\partial ^2}{\partial X_1 \partial X_2}g(\|(X_{1},\ldots,X_{d})\|_2).
\end{align}

Here $f_{\mb\beta^{*}}(\mb x)=\frac{1}{C_g}\exp(-g(\|(x_1,x_2-\beta_2^*,x_3,\ldots,x_d)\|_2))$ (slightly different from the previous sections). In the special case where $g(\mb x) = c\|\mb x\|_2^r$ for some $r\geq 1$ and $c>0$, 
\begin{align}
\frac{d}{d\|\mb\beta^{*}\|_{2}}D_{1}(\|\mb\beta^{*}\|_{2},f) = \mathbb{E}_{\mb X\sim f_{\mb\beta^{*}}} r(r-2)X_1^2X_2\|\mb X\|_2^{\frac{r}{2}-2}.
\end{align}
Since the integrand is an odd function in $X_2$, Lemma \ref{lem:positive_integral} tells us that the above derivative is positive when $r\geq 2$. When $r<2$, the above derivative is negative. We deduce that when $r<2$, $ D_{1}(\|\mb\beta^{*}\|_{2},f)<1$, and when $r>2$, $ D_{1}(\|\mb\beta^{*}\|_{2},f)>1$. In particular, when $r>2$, there is a positive fixed point $\tilde{\mb\beta}$ (i.e., $\tilde{\mb \beta}_1^{+} = \|\tilde{\mb \beta}\|$) for the \variant EM operator in the orthogonal axis. When $\mb\beta$ is in the orthogonal direction to $\mb\beta^*$ with $\|\mb\beta\|_2<\|\tilde{\mb\beta}\|_2$, $\|\mb\beta^+-\tilde{\mb\beta}\|_2 < \|\mb\beta-\tilde{\mb\beta}\|_2$. In particular, this means that $\|\mb\beta^+-\mb\beta^*\|_2>\|\mb\beta-\mb\beta^*\|_2$.
We have completed the proof of Lemma~\ref{lem: non-contraction}
\end{proof}

\begin{lemma} 
\label{lem:increasing_wrt_beta}
Under the regularity condition, $\beta_1^{+}$ is a strictly increasing function of $\|\mb\beta\|_2$. 
\end{lemma}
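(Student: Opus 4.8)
The plan is to reduce the statement to a pointwise-in-$\mb x$ monotonicity fact and then integrate. Write $b:=\|\mb\beta\|_2$ and, in the local basis of Lemma~\ref{lem:two-dim-structure}, regard $F_{\mb\beta,\sigma}$ as a function of $b$ alone: for $\mb x\in\R^d$ put $\rho^2:=x_2^2+\cdots+x_d^2$ and $r_\pm(b):=\sqrt{(x_1\pm b)^2+\rho^2}$, so that $F_b(\mb x):=F_{\mb\beta,\sigma}(\mb x)=g\big(r_+(b)/\sigma\big)-g\big(r_-(b)/\sigma\big)$ and $\beta_1^+=\mathbb{E}_{\mb X\sim f_{\mb\beta^*,\sigma}}\big[X_1\tanh(0.5F_b(\mb X))\big]$. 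First I would record two easy facts. Since $x_1\ge0$ forces $r_+(b)\ge r_-(b)$ and $g$ is increasing, $F_b(\mb x)\ge0$ on $\{x_1\ge0\}$; and since $F_b$ is odd in $x_1$, the map $\mb x\mapsto x_1\tanh(0.5F_b(\mb x))$ is \emph{even} in $x_1$, so any monotonicity in $b$ established on $\{x_1\ge0\}$ automatically holds on all of $\R^d$. Thus the target becomes: for each fixed $\mb x$ with $x_1\ge0$, the scalar map $b\mapsto F_b(\mb x)$ is nondecreasing on $[0,\infty)$, with positive derivative whenever $x_1>b$.

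To establish that I would differentiate in $b$ (using $r_+'(b)=\tfrac{x_1+b}{r_+}$, $r_-'(b)=\tfrac{b-x_1}{r_-}$, and the a.e.\ defined derivative $g'$ of the convex increasing $g$), obtaining for $b\ne x_1$
\[
\frac{d}{db}F_b(\mb x)=\frac1\sigma\Big[\psi_\sigma(x_1+b)+\operatorname{sgn}(x_1-b)\,\psi_\sigma(|x_1-b|)\Big],\qquad \psi_\sigma(s):=\frac{s}{\sqrt{s^2+\rho^2}}\;g'\!\Big(\frac{\sqrt{s^2+\rho^2}}{\sigma}\Big).
\]
The key step is to observe that $\psi_\sigma$ is the product of two \emph{nonnegative, nondecreasing} functions of $s\ge0$: the factor $s/\sqrt{s^2+\rho^2}=(1+\rho^2/s^2)^{-1/2}$ is nondecreasing, and $g'(\sqrt{s^2+\rho^2}/\sigma)$ is the composition of the nondecreasing $g'$ (by convexity of $g$) with the nondecreasing map $s\mapsto\sqrt{s^2+\rho^2}/\sigma$; hence $\psi_\sigma$ is nondecreasing on $[0,\infty)$. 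Given this, when $b\le x_1$ both bracketed terms are $\ge0$, and when $b>x_1$ the bracket is $\psi_\sigma(x_1+b)-\psi_\sigma(b-x_1)\ge0$ since $0\le b-x_1\le x_1+b$. Together with continuity of $b\mapsto F_b(\mb x)$ (and the fact that $g$ is differentiable off a countable set), this yields the desired pointwise monotonicity; moreover the inequality is strict on $\{x_1>b\}$ because there $\psi_\sigma(x_1+b)>0$ — the argument of $g'$ is positive and $g'>0$ on $(0,\infty)$ by strict monotonicity of $g$.

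Finally I would conclude: fix $0\le b_1<b_2$. For $x_1\ge0$ the pointwise monotonicity gives $0\le F_{b_1}(\mb x)\le F_{b_2}(\mb x)$, hence $x_1\tanh(0.5F_{b_1}(\mb x))\le x_1\tanh(0.5F_{b_2}(\mb x))$ since $\tanh$ is increasing and $x_1\ge0$; by the evenness in $x_1$ this inequality between the two integrands holds for all $\mb x$, and it is strict on the positive-measure set $\{x_1>b_2\}$ (where $x_1>b$ throughout $b\in[b_1,b_2]$, so $F_{b_2}(\mb x)>F_{b_1}(\mb x)$ strictly). Integrating against $f_{\mb\beta^*,\sigma}$, which is positive wherever $g$ is finite, then gives $\beta_1^+(b_1)<\beta_1^+(b_2)$, i.e.\ $\beta_1^+$ is strictly increasing in $\|\mb\beta\|_2$. (Alternatively one can differentiate $\beta_1^+$ in $b$ under the integral sign using the regularity condition and note that the integrand $0.5X_1\,\tfrac{\partial}{\partial b}F_b(\mb X)\,\tanh'(0.5F_b(\mb X))$ is everywhere nonnegative — being even in $X_1$ and, on $\{X_1\ge0\}$, a product of $X_1\ge0$, $\tfrac{\partial}{\partial b}F_b\ge0$, and $\tanh'>0$ — and strictly positive on a set of positive measure.) The part I expect to be the main obstacle is precisely the pointwise monotonicity of $b\mapsto F_b(\mb x)$ once $b$ exceeds $x_1$, where $r_-$ begins to grow and the sign of the derivative is no longer obvious; isolating the single-variable function $\psi_\sigma$ and recognizing it as a product of manifestly nonnegative nondecreasing factors is what makes that step go through. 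A minor nuisance is that $g$ is only convex (not $C^1$), which is handled by working with its a.e.\ derivative, or equivalently the representation $F_b(\mb x)=\int_{r_-(b)/\sigma}^{r_+(b)/\sigma}g'(t)\,\dup t$.
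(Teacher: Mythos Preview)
Your proof is correct. It is close in spirit to the paper's argument but organized differently and, in one respect, more careful.

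The paper differentiates $\beta_1^+$ with respect to $\|\mb\beta\|_2$ under the integral sign (invoking the regularity condition), writes the resulting integrand, and asserts in one line that it is even in $x_1$ and strictly positive on $\{x_1>0\}$. Your primary argument avoids differentiating $\beta_1^+$ altogether: you establish the \emph{pointwise} monotonicity of $b\mapsto F_b(\mb x)$ on $\{x_1\ge0\}$, then compare the integrands at $b_1<b_2$ directly and integrate. The key ingredient you isolate --- the single-variable function $\psi_\sigma(s)=\tfrac{s}{\sqrt{s^2+\rho^2}}\,g'(\sqrt{s^2+\rho^2}/\sigma)$ being nondecreasing as a product of two nonnegative nondecreasing factors --- is precisely what justifies the step the paper leaves implicit, namely why the derivative of $F_b$ in $b$ remains nonnegative in the regime $b>x_1$ where $r_-$ is growing. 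Your route buys two things: it does not require interchanging differentiation and integration for the main argument, and it makes the delicate case $b>x_1$ fully explicit. The paper's route is shorter but relies on the reader supplying exactly the $\psi_\sigma$-type observation you wrote out. The alternative you sketch in your final parenthetical is essentially the paper's proof.
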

\begin{proof}
Note that $\beta_{1}^{+}$ is a function of $\|\mb\beta\|_{2}$ and $\|\mb\beta^{*}\|_{2}$.
We are interested in how $\beta_{1}^{+}$ will change with respect
to $\|\mb\beta\|_{2}$. Under the regularity condition, we can take the derivative with respect to $\|\mb\beta\|_{2}$, which gives
\begin{align}
& \frac{\partial\beta_{1}^{+}}{\partial\|\mb\beta\|_{2}} \nonumber \\ 
=& \int_x \Bigg(0.5\frac{1}{\sigma^dC_g}f\left(\frac{1}{\sigma}(x_{1}-\beta_{1}^{*},x_{2}-\beta_{2}^{*},x_{3},\ldots,x_{d})\right)+0.5\frac{1}{\sigma^dC_g}f\left(\frac{1}{\sigma}(x_{1}+\beta_{1}^{*},x_{2}-\beta_{2}^{*},x_{3},\ldots,x_{d})\right)\Bigg)\cdot \nonumber \\
 & \frac{x_{1}}{2\sigma}\left(g^{\prime}\left(\frac{1}{\sigma}(x_{1}-\|\mb\beta\|,x_{2},\ldots,x_{d})\right)+g^{\prime} \left(\frac{1}{\sigma} (x_{1}+\|\mb\beta\|,x_{2},\ldots,x_{d}) \right)\right )\cdot\tanh^{\prime}(0.5F_{\mb\beta,\sigma}(\mb x)) \dup x \label{eq:derivative_beta}.
\end{align}
We note that the integrand is an even function in $x_{1}$, and it is strictly positive when $x_{1}>0$. Therefore, we have  $\frac{\partial\beta_{1}^{+}}{\partial\|\mb\beta\|_{2}}>0$, hence
$\beta_{1}^{+}$ is an increasing function in $\|\mb\beta\|_2$. 
\end{proof}

\section{Regularity Condition}
\label{appendix-subsec:regularity}
The regularity condition is a technical condition that makes changing the order of differentiation and integration valid. Formally, let us first recall the measure theory statement of Leibniz's integral rule 
\begin{proposition}[Theorem 16.8 of \cite{billingsley2008probability}]
Let $S$ be an open subset of  $\mathbb{R}$, and $\Omega$ be a measure space. Suppose $f:S\times \Omega \to \mathbb{R}$ satisfies the following conditions:
(1) $f(s,\omega)$ is a Lebesgue-integrable function of $\omega$ for each $s\in S$; (2) For almost all $\omega \in \Omega$, the derivative $f_s$ exists for all $s\in S$; (3) There is an integrable function $\theta:\Omega\to \mathbb{R}$ such that $|f_s(s,\omega)|\leq \theta(\omega)$ for all $s\in S$ and almost every $\omega \in \Omega$. It follows that:
\begin{align}
\frac{\dup{} }{\dup {s}} \int_{\Omega} f(s,\omega) \dup \omega = \int_{\Omega} f_s(s,\omega) \dup \omega.
\end{align} 
\end{proposition}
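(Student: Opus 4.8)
The plan is to deduce this from the dominated convergence theorem together with the sequential characterization of the derivative. Set $F(s) := \int_{\Omega} f(s,\omega)\,\dup{\omega}$, which is well-defined and finite on $S$ by hypothesis (1). Fix an arbitrary $s_0 \in S$; since $S$ is open, it suffices to show that $\frac{F(s_0+h_n)-F(s_0)}{h_n} \to \int_{\Omega} f_s(s_0,\omega)\,\dup{\omega}$ for every sequence of nonzero reals $h_n \to 0$ (for $n$ large, $s_0 + h_n \in S$), since this is exactly the assertion that $F$ is differentiable at $s_0$ with the stated derivative.

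First I would introduce the difference quotients $g_n(\omega) := \frac{f(s_0+h_n,\omega) - f(s_0,\omega)}{h_n}$, so that, by linearity and the finiteness from (1), $\int_{\Omega} g_n\,\dup{\omega} = \frac{F(s_0+h_n)-F(s_0)}{h_n}$. By hypothesis (2) there is a null set $N$ off which $s \mapsto f(s,\omega)$ is differentiable on all of $S$; for $\omega \notin N$ we have $g_n(\omega) \to f_s(s_0,\omega)$ directly from the definition of the derivative. For such $\omega$, the mean value theorem applied to $s \mapsto f(s,\omega)$ on the segment joining $s_0$ and $s_0+h_n$ (contained in $S$ once $n$ is large) gives a point $\xi_{n,\omega}$ with $g_n(\omega) = f_s(\xi_{n,\omega},\omega)$, so hypothesis (3) yields the $n$-independent bound $|g_n(\omega)| \le \theta(\omega)$ with $\theta$ integrable.

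The dominated convergence theorem then applies to $(g_n)$, giving $\int_{\Omega} g_n\,\dup{\omega} \to \int_{\Omega} f_s(s_0,\omega)\,\dup{\omega}$; combined with the identity from the previous paragraph, this is the desired convergence. As $s_0$ was arbitrary, the conclusion holds on all of $S$.

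The only real obstacle is securing the uniform, $n$-independent domination of the difference quotients, and this is exactly where hypotheses (2) and (3) are used together: (2) must hold with a single null set valid for all $s$ so that the mean value theorem can be applied on the whole segment, and (3) then converts the derivative bound into a bound on $g_n$. Without both, one has only pointwise convergence $g_n \to f_s(s_0,\cdot)$ and cannot interchange limit and integral. A minor point is that differentiability of $F$ at $s_0$ is a two-sided limit over $h \to 0$, so one must check the limit along every sequence $h_n \to 0$ rather than a particular one; the sequential characterization of limits handles this at no extra cost.
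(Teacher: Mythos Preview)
Your argument is correct and is essentially the standard proof of this classical result: the mean value theorem converts the pointwise derivative bound from hypothesis (3) into a uniform bound on the difference quotients, and dominated convergence then permits the interchange of limit and integral. Note, however, that the paper does not actually prove this proposition; it is stated with a citation to Billingsley's textbook (Theorem~16.8) and used as a black box when verifying the regularity condition. So there is no ``paper's own proof'' to compare against---your write-up simply supplies the textbook argument that the paper omits.
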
 
In the above proposition, $S$ is the parameter space, and $\omega$ is the random variable. 
Recall the \variant EM update function:
\begin{align*}
M(\mb \beta^*,\mb \beta)= & \bb{E}_{\mb X\sim f_{\mb \beta^*,\sigma}} \mb X \tanh(0.5 F_{\mb \beta,\sigma}(\mb X)),\\
F_{\mb \beta,\sigma}(\mb X) = & g\left(\frac{1}{\sigma}\|X+\mb \beta\|_2 \right)-g\left(\frac{1}{\sigma}\|X-\mb \beta\|_2 \right).
\end{align*}
Using the rotation invariant property of the distribution, we adopt a local orthogonal basis as in Lemma~\ref{lem:two-dim-structure}. The above two functions are equivalent to the following:
\begin{align*}
M(\mb \beta^*,\mb \beta) = & \int_{\mb x}\frac{1}{\sigma^d C_g}\exp\left(-g\left(\|(x_1-\beta_1^*,x_2 - \beta_2^*,x_3,\ldots,x_d )\|_2\right)\right)\mb x \tanh(0.5 F_{\mb \beta,\sigma}(\mb x)) \dup {\mb x}, \\
F_{\mb \beta,\sigma}(\mb x) = & g\left(\frac{1}{\sigma}\|(x_1+\|\mb \beta\|_2,x_2,\ldots,x_d)\|_2 \right)-g\left(\frac{1}{\sigma}\|(x_1-\|\mb \beta\|_2,x_2,\ldots,x_d)\|_2 \right).
\end{align*}
Indeed, from the above representation for $M(\mb \beta^*,\mb \beta)$, we can think of the $M$ function as a function in three variables: $\beta_1^*, \beta_2^*$ and $\|\mb \beta\|_2$. Define, for each $i=1,\ldots,d$,
\begin{align}
&M_i(z_1,z_2,z_2) \nonumber \\
: =  & \int_{\mb x}\frac{1}{\sigma^d C_g}\exp\left(-g\left(\frac{1}{\sigma}\|(x_1-z_1,x_2 - z_2,x_3,\ldots,x_d )\|_2\right)\right)\cdot \nonumber\\
& x_i \tanh\left(0.5 \left(g\left(\frac{1}{\sigma}\|(x_1+z_3,x_2,\ldots,x_d)\|_2 \right)-g\left(\frac{1}{\sigma}\|(x_1-z_3,x_2,\ldots,x_d)\|_2 \right)\right)\right) \dup {\mb x}.   
\end{align}
The regularity condition for $f\in \mathcal{F}$ ensures that for each $i=1,\ldots,d$, $j=1,2,3$, the following holds: 
\begin{align*}
    &\frac{\partial M(z_1,z_2,z_3)}{\partial z_j}\\
    = & \int_{\mb x} \frac{1}{\sigma^d C_g} \frac{\partial }{\partial z_j}\Big[\exp\left(-g\left(\|(x_1-z_1,x_2 - z_2,x_3,\ldots,x_d )\|_2\right)\right)\cdot \\
    & x_i \tanh\left(0.5 \left(g\left(\frac{1}{\sigma}\|(x_1+z_3,x_2,\ldots,x_d)\|_2 \right)-g\left(\frac{1}{\sigma}\|(x_1-z_3,x_2,\ldots,x_d)\|_2 \right)\right)\right) \Big]\dup {\mb x}. 
\end{align*}
In other words, we can differentiate the \variant EM update with the parameter by putting the differentiation operator inside the integral. Note that the main method for analyzing the \variant EM update is the sensitivity analysis, in which we regularly differentiate $M(z_1,z_2,z_3)$ with one of the parameters. 

In view of Leibniz's rule, it suffices to ensure that 3 conditions are satisfied:
\begin{itemize}
    \item For each $z_1,z_2$ and $z_3$, the integrand of $M_i(\cdot)$ is dominated by $|x_i|$ since the $\tanh(\cdot)$ function is uniformly bounded by $1$. $\bb{E}_{\mb X\sim f_{\mb \beta^*,\sigma}}|x_i|<\infty$ for all $i$ when $f$ is a log-concave density.
    \item $g$ is a convex function on $\mathbb{R}^+$, therefore, it is differentiable on $\mathbb{R}^+$ except on a measure $0$ set. $\|\cdot\|_2$ is differentiable except at the origin. Thus, by the composition rule, we infer that the integrand is differentiable with respect to $z_j$ ($j=1,2,3$) for almost all $\mb x\in \bb{R}^d$. 
    \item When we differentiate $M(z_1,z_2,z_3)$ with $z_j$, the parameter space is bounded. We differentiate $M_1$ with respect to $z_1$ in Lemma \ref{lem:along convegence}, and its value is taken between $\min(\|\mb \beta\|_2,\beta_1^*)$ and $\max(\|\mb \beta\|_2,\beta_1^*)$; We differentiate $M_2$ with respect to $z_1$ in Lemma \ref{lem:angle-decrease}, and its value is taken between $0$ and $\beta_1^*$; We differentiate $M_1$ with respect to $z_2$ in Lemma~\ref{lem:along convegence}, and its value is taken between $0$ and $\beta_2^*$; We differentiate $M_2$ with respect to $z_2$ in Lemma \ref{lem:orthogonal_convergence} and its value is between $0$ and $\beta_2^*$; We differentiate $M_1$ with respect to $z_3$ and its value is between $0$ and $\|\mb \beta\|_2$. Since $\|\mb \beta\|_2$ is bounded by a function of $\beta^*$ and $\sigma$, the above parameter space $S_j(\mb \beta^*,\sigma)$ for $z_j$ ($j=1,2,3$) is all bounded. Therefore,it suffices to ensure the integrability of the derivative (with respect to $z_j$). 
\end{itemize}
In order to verify the regularity condition for a log concave distribution, one needs to ensure the derivative of integrand in $M_i$ ($i=1,2$) with respect to $j$ ($j=1,2,3$) is integrable over the parameter space $S_j(\mb \beta^*,\sigma)$. It suffices to show the following quantity is finite: 
\begin{align}
    &\int_{\mb x} \frac{1}{\sigma^d C_g} \sup_{z_j \in S_j(\mb \beta^*,\sigma)}\Bigg|\frac{\partial }{\partial z_j}\Big[\exp\left(-g\left(\|(x_1-z_1,x_2 - z_2,x_3,\ldots,x_d )\|_2\right)\right)\cdot \nonumber \\
    & x_i \tanh\left(0.5 \left(g\left(\frac{1}{\sigma}\|(x_1+z_3,x_2,\ldots,x_d)\|_2 \right)-g\left(\frac{1}{\sigma}\|(x_1-z_3,x_2,\ldots,x_d)\|_2 \right)\right)\right) 
    \Big]\Bigg | \dup {\mb x}
\end{align}  

As an example, consider a general polynomial family, it is easy to verify that the above condition holds as a log concave distribution has finite moments of all the order. 

\section{Finite Sample Analysis}
\label{appendix-finitesmaple}
\label{appendix:finitesample}

In this section, we provide the proofs for analysis of the \vem algorithm in the finite sample case. Proof of Proposition \ref{prop:1d-finite-analysis} is presented in Section \ref{subsec:prop-finite-sample-step}, which establishes an one-iteration bound. Propositions~\ref{prop:first_stage} and~\ref{prop:second_stage} are proved in Section \ref{subsec:global_finite} for the global convergence. In Section \ref{subsec:finite-special-case}, we discuss the implication for some special distributions including Gaussian, Laplace and Logistic. 
\subsection{Proof of Proposition \ref{prop:1d-finite-analysis}}
\label{subsec:prop-finite-sample-step}
\onedfinite*
\begin{proof}
In the 1-d finite sample case, the \variant EM update is 
\begin{align*}
\tilde{M}(\beta^{*},\beta)= 
\frac{1}{n}\sum_{i=1}^{n}x^{i}\tanh(0.5F_{\beta,\sigma}(x^i)) 
\end{align*}
 
Since $|\tanh(\cdot)|\leq 1$, each summand $z^{i}:=x^{i}\tanh(0.5F_{\beta,\sigma}(x^i))$
is a sub-exponential random variable with $\Psi_1$ Orlicz norm upper bounded by $\beta^*+\sigma C_f$. This is because each $x_i\sim f_{\beta^*,\sigma}$ is a sub-exponential random variable (see Lemma \ref{lem:logconcave-subexponential}) with $\Psi_1$ Orlicz norm $O(\beta^*+\sigma C_f)$, where $C_f$ is the $\Psi_1$ Orlicz norm of a random variable with log-concave density $f$. Using Bernstein's inequality from Theorem \ref{appendix-thm:bernstein}, we have
\[
\mathbb{P}\{|\tilde{M}(\beta^{*},\beta) - M(\beta^{*},\beta)|\geq t\}\leq2\left[-cn\min \left(-\frac{t^{2}}{(\beta^*+\sigma C_f)^{2}},\frac{t}{\beta^*+\sigma C_f}\right)\right].
\]
Coupling with the one-step analysis for the population \variant EM update in Theorem \ref{thm:Global-Convergence-1D-VEM}, we can bound the finite sample \variant EM update as follows:
\[
|\tilde{M}(\beta,\beta^{*})-\beta^{*}|\leq\kappa(\beta^*,\beta,\sigma)|\beta-\beta^{*}|+O\left(\sqrt{\frac{(\beta^*+\sigma C_f)^{2}}{n}\log \frac{1}{\delta}}\right)
\]
with probability at least $1-\delta.$
\end{proof}
Let us recall the following equivalent definition for sub-exponential random variables:
\begin{lemma}[Proposition 2.7.1 of \cite{vershynin2018high}]
\label{lem:subexponential}
Let $X$ be a random variable in $\R$. $X$ is sub-exponential iff $\mathbb{E}\exp(|X|/K_{3})\leq2$ for some
$K_{3}>0$. 
\end{lemma}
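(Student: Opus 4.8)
The plan is to establish the equivalence in two directions, taking as the working definition of ``sub-exponential'' the tail characterization that $\P(|X|\ge t)\le 2\exp(-t/K_1)$ holds for all $t\ge 0$ and some constant $K_1>0$. (That this is equivalent to the moment-growth and one-sided MGF characterizations is the standard content of Vershynin's Proposition 2.7.1 and would be cited rather than reproved; indeed, if one instead \emph{defines} sub-exponentiality through finiteness of the Orlicz $\Psi_1$ norm $\|X\|_{\Psi_1}=\inf\{t>0:\E\exp(|X|/t)\le 2\}$, the statement is essentially immediate.)

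For the ``if'' direction, I would assume $\E\exp(|X|/K_3)\le 2$ and apply Markov's inequality to the nonnegative random variable $\exp(|X|/K_3)$: for every $t\ge 0$,
\[
\P(|X|\ge t)=\P\big(\exp(|X|/K_3)\ge e^{t/K_3}\big)\le e^{-t/K_3}\,\E\exp(|X|/K_3)\le 2e^{-t/K_3},
\]
so the tail bound holds with $K_1=K_3$, i.e.\ $X$ is sub-exponential.

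For the ``only if'' direction, I would start from $\P(|X|\ge t)\le 2e^{-t/K_1}$ and compute the moment generating function of $|X|$ via the layer-cake identity: for $\lambda>0$,
\[
\E\exp(\lambda|X|)=\int_0^\infty \P\big(\exp(\lambda|X|)>s\big)\,ds=1+\int_1^\infty \P\big(|X|>\tfrac{\ln s}{\lambda}\big)\,ds\le 1+2\int_1^\infty s^{-1/(\lambda K_1)}\,ds.
\]
Whenever $\lambda K_1<1$, the last integral equals $\lambda K_1/(1-\lambda K_1)$, so choosing $\lambda=1/(3K_1)$ gives $\E\exp(|X|/(3K_1))\le 1+1=2$, and $K_3=3K_1$ works.

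The argument is entirely routine, so there is no real obstacle here; the only point needing a little care will be the bookkeeping of constants, namely choosing $\lambda$ small enough (here $\lambda K_1=1/3$) that the geometric-type integral converges and the bound lands at exactly $2$. This is precisely why the paper states the result as a citation rather than spelling it out.
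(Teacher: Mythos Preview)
Your argument is correct and is essentially the standard textbook proof (Markov in one direction, layer-cake integration in the other, with the constant tuned so the bound lands at $2$). The paper itself does not prove this lemma at all---it is simply quoted as Proposition~2.7.1 of Vershynin---so there is nothing to compare against; your write-up is exactly the routine verification one would expect behind that citation.
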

We first show a random variable with a symmetric log-concave density is necessarily sub-exponential. 
\begin{lemma}
\label{lem:logconcave-subexponential}
If a random variable $X$ has a log-concave density $f$ that is also an even function, then $X$ is sub-exponential, with the $\Psi_1$ Orlicz norm (sub-exponential
norm) depending on log $f$.
\end{lemma}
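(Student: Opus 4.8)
The plan is to invoke the characterization recalled in Lemma~\ref{lem:subexponential}: it suffices to produce a constant $K_3>0$ with $\E\exp(|X|/K_3)\le 2$, while keeping track of the fact that $K_3$ is determined by $g:=-\log f$. Write $f=\exp(-g)$, so that $g$ is convex and even on its support. The support of a log-concave function is a convex set, hence, being symmetric, it is either a bounded interval $[-R,R]$ or all of $\mathbb{R}$. If it is bounded, then $|X|\le R$ almost surely, so $\E\exp(|X|/K_3)\le\exp(R/K_3)\le 2$ for any $K_3\ge R/\log 2$ and we are done. From now on assume the support is $\mathbb{R}$, so $g\colon\mathbb{R}\to\mathbb{R}$ is a finite, convex, even function.

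The first step is to show that $g$ has at least linear growth. Since $\int\exp(-g)=1<\infty$ and $g$ is continuous (a finite convex function), $g(x)\to+\infty$ as $|x|\to\infty$. In particular $g$ is not nonincreasing on $(0,\infty)$, so some supporting line of $g$ has positive slope: there exist $x_0>0$ and $a>0$ with
\[
g(x)\ \ge\ g(x_0)+a\,(x-x_0)\qquad\text{for all }x\in\mathbb{R},
\]
which is the one-sided version of the tangent-line inequality already used in the proof of Lemma~\ref{lem:bounded_moments}. Consequently $\exp(-g(x))\le\exp(a x_0-g(x_0))\exp(-ax)$ for every $x$, i.e., $X$ has a tail that decays at least exponentially.

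Finally I would estimate the exponential moment directly. Since $x\mapsto\exp(|x|/K)\exp(-g(x))$ is even,
\begin{align*}
\E\exp(|X|/K)
&=2\int_0^{\infty}\exp(x/K)\exp(-g(x))\dup x\\
&=2\int_0^{x_0}\exp(x/K)\exp(-g(x))\dup x+2\int_{x_0}^{\infty}\exp(x/K)\exp(-g(x))\dup x .
\end{align*}
The first integral is at most $\exp(x_0/K)\cdot 2\int_0^{x_0}\exp(-g(x))\dup x\le\exp(x_0/K)$, since $\int_0^{\infty}\exp(-g)=\tfrac12$ by symmetry. For the second, inserting the supporting-line bound gives, for every $K>1/a$,
\begin{align*}
2\int_{x_0}^{\infty}\exp(x/K)\exp(-g(x))\dup x
&\le 2\exp\big(a x_0-g(x_0)\big)\int_{x_0}^{\infty}\exp\big(-\big(a-\tfrac1K\big)x\big)\dup x\\
&= \frac{2\exp(x_0/K-g(x_0))}{a-1/K},
\end{align*}
which is finite. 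Thus $\E\exp(|X|/K)<\infty$ for every $K>1/a$; and since $\exp(|x|/K)\downarrow 1$ pointwise as $K\to\infty$ while being dominated by the integrable function at $K=2/a$, dominated convergence yields $\E\exp(|X|/K)\to\int\exp(-g)=1<2$. Hence there exists $K_3$, depending only on $g=-\log f$ through $x_0$, $a$, and $g(x_0)$, with $\E\exp(|X|/K_3)\le 2$, and Lemma~\ref{lem:subexponential} concludes that $X$ is sub-exponential with $\Psi_1$ Orlicz norm controlled by a quantity depending only on $-\log f$.

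There is no serious obstacle here; the only points requiring a little care are (i) that a convex $g$ need not be differentiable, so one should use a one-sided derivative (or any subgradient) when producing the supporting line, and (ii) the degenerate case of bounded support, which must be dispatched separately as above. Letting $K\to\infty$ rather than optimizing over a fixed $K$ is what recovers the precise constant $2$ appearing in Lemma~\ref{lem:subexponential}.
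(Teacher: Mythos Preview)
Your proof is correct and follows essentially the same route as the paper: both produce a supporting line of $g$ with positive slope to get an exponential tail bound, bound $\E\exp(|X|/K)$ directly, and then let $K\to\infty$ via dominated convergence to hit the constant $2$ from Lemma~\ref{lem:subexponential}. Your version is slightly more careful in that you explicitly dispose of the bounded-support case and correctly note the limit is $1$ (the paper writes $0$, a typo), but the argument is the same.
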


\begin{proof}
For a general symmetric log-concave distribution with density $f(x)=\frac{1}{C_g}\exp(-g(|x|))$, the set of sub-differential of $g$: $\left\{\partial g(x): x\in \R\right\}$ is non decreasing with respect to $x$. Moreover, the sub-differentials are non-negative when $x\geq 0$. Suppose that $\left\{\partial g(x):x\in \R\right\}$ has an upper bound $C$, we can pick $M$ with $0<M<C$ and $M\in\partial g(x_{0})$
for some $x_{0}>0$. Otherwise, the $\left\{\partial g(x): x\in \R\right\}$ does not have an upper bound and we can pick an arbitrary $M>0$ such that $M\in\partial g(x_{0})$ for some $x_{0}>0$. By definition of the sub-differential, we have
\[
g(x)\geq g(x_{0})+M(x-x_{0}).
\]

Now let us compute the moment generating function: $\mathbb{E}_{X\sim f}\exp(\frac{1}{K}|X|)$.
\begin{align*}
\mathbb{E}_{X\sim f} \exp\left(\frac{1}{K}|X|\right) & =2\int_{x\geq0}\frac{1}{C_g}\exp\left(\frac{1}{K}x\right )\exp(-g(x)) \dup x \\
 & \leq2\int_{x\geq 0}\frac{1}{C_g} \exp\left(\frac{1}{K}x\right)\exp(-g(x_{0})-M(x-x_{0}))\dup x\\
 & =2\int_{x\geq0} \frac{1}{C_g}\exp(-g(x_{0})+Mx_{0})\exp(-(M-\frac{1}{K})x) \dup x\\
 & =2 \frac{1}{C_g}\exp(-g(x_{0})+Mx_{0})\frac{1}{M-\frac{1}{K}}<\infty.
\end{align*}

Using the dominated convergence theorem, we know that
\begin{align*}
    \lim_{K\to \infty} \mathbb{E}_{X\sim f} \exp \left(\frac{|X|}{K}\right) = 0,
\end{align*}
thus, there exists some $K_0$ such that $\mathbb{E}_{X\sim f} \exp(\frac{|X|}{K})\leq 2$. In particular, $X$ is sub-exponential (by Lemma \ref{lem:subexponential}) with a finite $\|\cdot\|_{\Psi_1}$ Orlicz norm.
\end{proof}

Having established the sub-exponential property of the log-concave distribution, we use $C_f$ to denote the $\Psi_1$ Orlicz norm for a log concave distribution $f$. With translation and scaling, it is not hard to see the $\Psi_1$ Orlicz norm for $f_{\beta^*,\sigma}$ is of the order $O(\beta^*)+\sigma C_f(0,1)$.

\begin{theorem}[Bernstein's inequality Theorem 2.8.1 of \cite{vershynin2018high}] 
\label{appendix-thm:bernstein}
Let $X_{1},\ldots,X_{N}$
be independent sub-exponential random variables. Then for every $t\geq0$,
we have
\[
\mathbb{P}\{|\sum_{i=1}^{N}X_{i}|\geq t\}\leq2\exp\left[-c\min\left(\frac{t^{2}}{\sum_{i=1}^{N}\|X_{i}\|_{\psi_{1}}^{2}},\frac{t}{\max_{i}\|X_{i}\|_{\psi_{i}}}\right)\right],
\]
where $c>0$ is an absolute constant. 
\end{theorem}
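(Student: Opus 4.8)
The plan is to use the Cram\'er--Chernoff (exponential moment) method, which reduces the whole problem to controlling the moment generating function (MGF) of a single summand. (I assume, as in the standard form of this result, that each $X_i$ is centered; this is how the theorem is invoked in the paper, where it is applied to the centered variables $z^i - \E z^i$.) Writing $S := \sum_{i=1}^N X_i$, the exponential Markov inequality gives, for any $\lambda > 0$,
\[
\prob{S \ge t} \le e^{-\lambda t}\, \E e^{\lambda S} = e^{-\lambda t}\prod_{i=1}^N \E e^{\lambda X_i},
\]
where the product factorizes by independence. Applying the same bound to $-S$ controls the lower tail, and a union bound over the two one-sided events produces the factor $2$ in the conclusion.

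The technical heart is the following single-variable MGF estimate: if $X$ is centered with $\norm{X}{\psi_1} = K$, then there exist absolute constants $C, c > 0$ such that
\[
\E e^{\lambda X} \le \exp\paren{C K^2 \lambda^2} \qquad \text{whenever } \abs{\lambda} \le \frac{c}{K}.
\]
To prove this I would pass through the moment characterization of sub-exponentiality: $\norm{X}{\psi_1} = K$ is equivalent, up to absolute constants, to $\paren{\E \abs{X}^p}^{1/p} \le C_1 K p$ for all $p \ge 1$. Expanding $e^{\lambda X}$ in a power series, using $\E X = 0$ to kill the linear term, and inserting $\E\abs{X}^p \le (C_1 K p)^p \le (C_1 K e)^p\, p!$ turns the tail of the series into a geometric series in $\abs{\lambda} C_1 K e$; this converges and is bounded by $\exp(C K^2\lambda^2)$ precisely when $\abs{\lambda} C_1 K e$ is bounded away from $1$, i.e. $\abs{\lambda} \le c/K$. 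This is the main obstacle: unlike the sub-Gaussian case, the factorial (equivalently $p^p$) growth of the moments means the MGF is only finite in a neighborhood of the origin, so the geometric series dictates the admissible range of $\lambda$ and one cannot optimize $\lambda$ freely.

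With the single-variable bound in hand, independence gives
\[
\prod_{i=1}^N \E e^{\lambda X_i} \le \exp\paren{C \lambda^2 \Sigma^2} \qquad \text{for } \abs{\lambda} \le \frac{c}{\max_i \norm{X_i}{\psi_1}}, \quad \Sigma^2 := \sum_{i=1}^N \norm{X_i}{\psi_1}^2,
\]
so that $\prob{S \ge t} \le \exp(-\lambda t + C\lambda^2 \Sigma^2)$ on the admissible range. The final step is to minimize the exponent $-\lambda t + C\lambda^2\Sigma^2$ over admissible $\lambda$. Its unconstrained minimizer is $\lambda^\star = t/(2C\Sigma^2)$: if $\lambda^\star \le c/\max_i\norm{X_i}{\psi_1}$ (the small-deviation regime), substituting it yields the sub-Gaussian tail $\exp\paren{-t^2/(4C\Sigma^2)}$; otherwise the exponent is still decreasing in $\lambda$ across the whole admissible interval, so I take $\lambda$ at the boundary $c/\max_i \norm{X_i}{\psi_1}$ and, using that $t > 2C\Sigma^2\lambda$ in this regime, obtain the sub-exponential tail $\exp\paren{-c' t/\max_i\norm{X_i}{\psi_1}}$. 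Combining the two cases as a single $\min$ over the two exponents, relabeling the absolute constants, and adding the symmetric lower-tail estimate through the union bound gives the stated inequality.
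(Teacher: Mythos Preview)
Your argument is the standard Cram\'er--Chernoff proof of Bernstein's inequality for sub-exponential sums, and it is correct as sketched. Note, however, that the paper does not supply its own proof of this statement: it is quoted verbatim as Theorem~2.8.1 of Vershynin's textbook and used as a black-box tool in the finite-sample analysis, so there is nothing in the paper to compare your proposal against beyond observing that your outline is essentially the proof given in that reference.
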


\subsection{Proofs of Proposition~\ref{prop:first_stage} and Proposition~\ref{prop:second_stage}}
\label{subsec:global_finite}

 For readability, we restate the propositions below.
\firststage*
\secondstage*
\begin{proof}
The premise in Proposition~\ref{prop:first_stage} ensures that conditions in Lemma \ref{lem:l1} and Corollary \ref{cor:greater_than_ell} hold, which guarantee that all the future iterates remain in $(\beta^0,\infty)$. 
There are two stages of analysis for the \vem algorithm in the finite sample case:
\begin{enumerate}
    \item The initial $\beta^0$ is $\in (0,0.5\beta^*)$ or the initial $\beta^0$ is $\in (1.5\beta^*,\infty)$. In this case, the iterates will get into the local stable region $(0.5\beta^*,1.5\beta^*)$ quickly.   
    \item The iterates enters the stable region $(0.5\beta^*,1.5\beta^*)$, and converge to an $\epsilon$-close estimate. \end{enumerate}

Let $\tilde{\beta}^t$ denote the $t$-th iterate. The per iteration bound established in Proposition \ref{prop:1d-finite-analysis} says that with probability at least $1-\delta$:
\begin{align}
\label{eq:one_iter_bound}
|\tilde{\beta}^t-\beta^*|\leq &  \kappa(\beta^*,\tilde{\beta}^{t-1},\sigma)|\tilde{\beta}^{t-1}-\beta^*| + O\left(\sqrt{\frac{(\beta^*+C_f\sigma)^{2}}{n}\log\frac{1}{\delta}}\right).
\end{align}

 Let us first analyze the first stage:

In the case where $\beta^0 \in (0,0.5\beta^*)$, the iterates contracts to $\beta^*$ initially by Lemma \ref{lem:l1}. We use induction from step \eqref{eq:one_iter_bound} to obtain:
\begin{align*}
|\tilde{\beta}^t-\beta^*| \leq & \kappa(\beta^*,\beta^0,\sigma)|\tilde{\beta}^{t-1}-\beta^*| + O\left(\sqrt{\frac{(\beta^*+C_f\sigma)^{2}}{n}}\log \frac{1}{\delta}\right) \\
\leq & \kappa(\beta^*,\beta^0,\sigma)^t|\beta^0-\beta^*| + \frac{1}{1-\kappa(\beta^*,\beta^0,\sigma)}O\left(\sqrt{\frac{(\beta^*+C_f\sigma)^{2}}{n}\log \frac{1}{\delta}}\right). 
\end{align*}
Under the assumption that the size of fresh samples per iteration satisfies $n = \tilde{O}\left(\frac{(1+C_f\eta)^2}{(1-\kappa(\beta^*,\beta^0,\sigma))^2}\right)$, it is guaranteed that the accumulative statistical error is upper bounded: $\frac{1}{1-\kappa(\beta^*,\beta^0,\sigma)}O\left(\sqrt{\frac{C_f(\beta^*,\sigma)^{2}}{n}\log \frac{1}{\delta}}\right)\leq 0.25 \beta^*$. Therefore, after $T = O\left(\frac{\log \frac{0.25\beta^*}{|\beta^0-\beta^*|}}{\log \kappa (\beta^*,\beta^0,\sigma)}\right)$ iterations, $|\tilde{\beta}^t-\beta^*|<0.5 \beta^*$. The probability is at least $1-\delta \cdot O\left(\frac{\log \frac{0.25\beta^*}{|\beta^0-\beta^*|}}{\log \kappa (\beta^*,\beta^0,\sigma)}\right)$ by a union bound.

In the case where $\beta^0 > 1.5\beta^*$, the sample complexity per iteration ensures that all future iterates are lower bounded by $0.5\beta^*$ (see the proof of corollary \ref{cor:greater_than_ell}.) We deduce the following:
\begin{align}
|\tilde{\beta}^t-\beta^*| \leq & \kappa(\beta^*,0.5\beta^*,\sigma)|\tilde{\beta}^{t-1}-\beta^*| + O\left(\sqrt{\frac{(\beta^*+C_f\sigma)^{2}}{n}}\log \frac{1}{\delta}\right) \nonumber \\
\leq &\kappa(\beta^*,0.5\beta^*,\sigma)^t|\beta^0-\beta^*| + \frac{1}{1-\kappa(\beta^*,0.5\beta^*,\sigma)}O\left(\sqrt{\frac{(\beta^*+C_f\sigma)^{2}}{n}\log \frac{1}{\delta}}\right)
\end{align}
Again, the accumulative statistical error is bounded by $0.25\beta^*$. After $T = O\left(\frac{\log \frac{0.25\beta^*}{|\beta^0-\beta^*|}}{\log \kappa(\beta^*,0.5\beta^*,\sigma)}\right)$ iterations, $|\tilde{\beta}^t-\beta^*|<0.5\beta^*$. The probability is at least $1-\delta \cdot O\left(\frac{\log \frac{0.25\beta^*}{|\beta^0-\beta^*|}}{\log \kappa(\beta^*,0.5\beta^*,\sigma)}\right)$ by a union bound. 

Now let us analyze the second stage with the goal of achieving a relative error of $\epsilon$. Since the initial distance to $\beta^*$ is upper bounded by $0.5\beta^*$, it suffices to ensure the following:
\begin{align*}
0.5\kappa(\beta^*,0,5\beta^*,\sigma)^t\beta^* + \frac{1}{1-\kappa(\beta^*,0.5\beta^*,\sigma)}O\left(\sqrt{\frac{(\beta^*+C_f\sigma)^{2}}{n}\log \frac{1}{\delta}}\right)\leq \epsilon \beta^*,
\end{align*}
so that the iterates get $\epsilon$-close to $\beta^*$. Again, the assumption on the sample complexity per iteration in  Proposition~\ref{prop:second_stage} guarantees that the first part and the second part are both bounded by $0.5\epsilon \beta^*$. The proof is similar as before. 
\end{proof}

\subsubsection{Supporting Lemmas for Proposition~\ref{prop:first_stage} and Proposition~\ref{prop:second_stage}}
From the one-step analysis in the finite sample case as established in Proposition \ref{prop:1d-finite-analysis}, we would like to determine the region of contraction to $\beta^*$ (i.e, $|\tilde{\beta}^+-\beta^*|<|\beta-\beta^*|$) with probability at least $1-\delta$:
\begin{align}
\label{eq:contraction-region}
\mathcal{C}(f,\beta^*,\sigma):=\left\{\beta: (1-\kappa(\beta^*,\beta,\sigma ))|\beta-\beta^*|> O\left(\sqrt{\frac{C_f(\beta^*,\sigma)^{2}}{n}}\log \frac{1}{\delta}\right)\right\}.
\end{align}
This region allows us to control the convergence rate for the iterates. 
 Using the fact that $\kappa(\beta^*,\beta,\sigma)$ depends on $\min(\beta,\beta^*)$, a more explicit condition for the contraction region is the following:
\begin{align}
    \label{eq:condition_small}
    \left(1-\kappa(\beta^*,\beta,\sigma)\right)|\beta-\beta^*|\geq O\left(\sqrt{\frac{C_f(\beta^*,\sigma)^{2}}{n}}\log \frac{1}{\delta}\right)\quad \text{when }\beta < \beta^*;\\
    \label{eq:condition_big}
    \left(1-\kappa(\beta^*,\beta^*,\sigma)\right)|\beta-\beta^*|\geq O\left(\sqrt{\frac{C_f(\beta^*,\sigma)^{2}}{n}}\log \frac{1}{\delta}\right)\quad \text{when }\beta > \beta^*.
    \end{align}

Note that in \eqref{eq:condition_small}, $\beta$ being close to $0$ or close to $\beta^*$ will make the left hand side vanish, thus we infer that the contraction region for $\beta\leq \beta^*$ is an open interval $(L_1,L_2)$. In \eqref{eq:condition_big}, we infer that the contraction region for $\beta>\beta^*$ is an open interval $(R_1,\infty)$. We provide an illustration in Figure \ref{fig:contract_region}:
\begin{figure}[h]
    \centering
    \includegraphics[scale=0.4, trim = 0 300 100 300, clip]{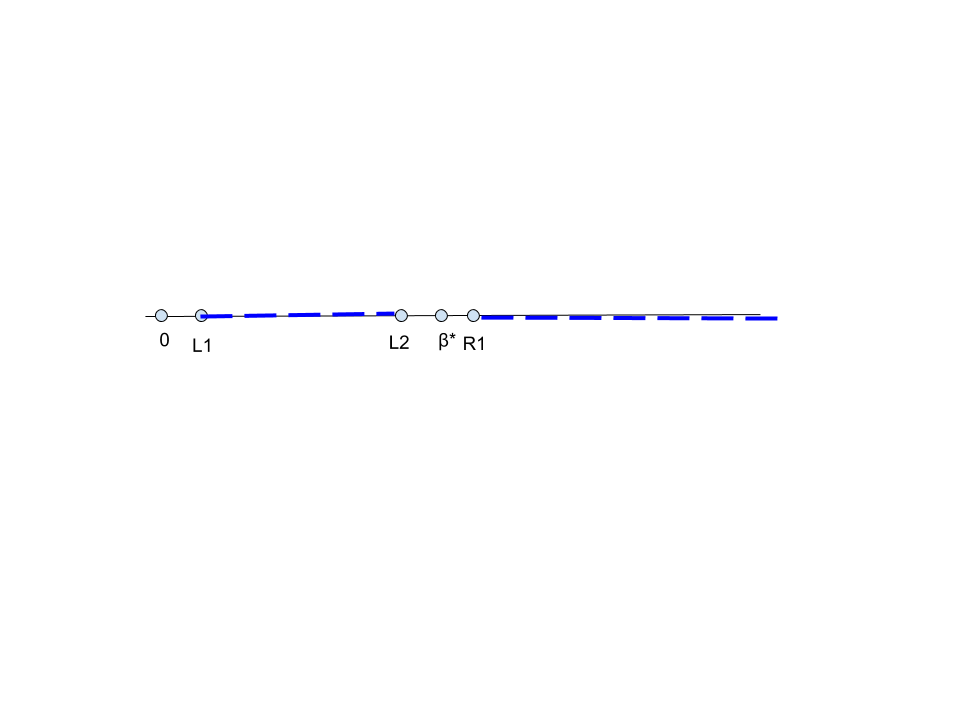}
    \caption{Contraction region: $(L_1,L_2)$ and $(R_1,\infty)$}
    \label{fig:contract_region}
\end{figure}

\begin{lemma}[Contraction implies Stability of the Iterate]
\label{lem:stability}
Suppose that $\hat{\beta} \in \mathcal{C}(f,\beta^*,\sigma)$, the contraction region, and $\hat{\beta}<\beta^*$. For all $\beta \in (\hat{\beta},2\beta^*-\hat{\beta})$, we have $\tilde{\beta}^+ \in (\hat{\beta},2\beta^*-\hat{\beta})$.
\end{lemma}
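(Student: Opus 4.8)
The plan is to reduce the statement to a one-step estimate. Note first that the open interval $(\hat\beta,\,2\beta^*-\hat\beta)$ is exactly the open ball around $\beta^*$ of radius $r:=\beta^*-\hat\beta>0$ (this uses $\hat\beta<\beta^*$), so the conclusion $\tilde\beta^+\in(\hat\beta,\,2\beta^*-\hat\beta)$ is equivalent to $|\tilde\beta^+-\beta^*|<r$. Thus it suffices to show that whenever the current iterate $\beta$ lies in that ball, the next finite-sample iterate $\tilde\beta^+=\tilde M(\beta^*,\beta)$ stays in it. The main tool is the per-iteration bound of Proposition~\ref{prop:1d-finite-analysis}: with probability at least $1-\delta$,
\[
|\tilde\beta^+-\beta^*|\;\le\;\kappa(\beta^*,\beta,\sigma)\,|\beta-\beta^*|\;+\;E,\qquad E:=(\beta^*+C_f\sigma)\cdot O\!\Big(\sqrt{\tfrac1n\log\tfrac1\delta}\Big),
\]
where $E$ is precisely the statistical-error term entering the definition \eqref{eq:contraction-region} of the contraction region $\mathcal C(f,\beta^*,\sigma)$.

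The argument then assembles three ingredients. First, for any $\beta\in(\hat\beta,\,2\beta^*-\hat\beta)$ we have $|\beta-\beta^*|<r=\beta^*-\hat\beta$. Second, since $0<\hat\beta<\beta^*$ and $\beta>\hat\beta$, one checks $\min(\beta,\beta^*)\ge\hat\beta$; because $\kappa(\beta^*,\cdot,\sigma)$ depends on its argument only through $\min(|\cdot|,|\beta^*|)$ and is a nonincreasing function of that quantity — this is the monotonicity of $T_1$ in Lemma~\ref{lem:derivative-opt-param-1}, which forces $\kappa=1-T_1$ (evaluated on the diagonal) to decrease — we get $\kappa(\beta^*,\beta,\sigma)\le\kappa(\beta^*,\hat\beta,\sigma)$. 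Plugging both into the displayed bound gives $|\tilde\beta^+-\beta^*|\le\kappa(\beta^*,\hat\beta,\sigma)(\beta^*-\hat\beta)+E$. Third, the hypothesis $\hat\beta\in\mathcal C(f,\beta^*,\sigma)$ says exactly $(1-\kappa(\beta^*,\hat\beta,\sigma))(\beta^*-\hat\beta)>E$, i.e.\ $\kappa(\beta^*,\hat\beta,\sigma)(\beta^*-\hat\beta)+E<\beta^*-\hat\beta=r$. Combining, $|\tilde\beta^+-\beta^*|<r$, which is the claim.

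The only real delicacy is bookkeeping rather than mathematics: one must make sure the implicit $O(\cdot)$ constant in Proposition~\ref{prop:1d-finite-analysis} is the same as the one used in \eqref{eq:contraction-region} (the two are written slightly differently — a $\sqrt{\log}$ versus a $\log$ factor — but both denote the same sub-exponential concentration error, so they should be fixed consistently throughout the section) so that the final inequality is genuinely implied. One should also note that the high-probability guarantee is for a \emph{fixed} current iterate $\beta$, which is exactly the regime in which the lemma is used — it is applied one step at a time along the trajectory in the proof of Proposition~\ref{prop:first_stage} — so no union bound over the continuum of $\beta$'s in the interval is needed here. Beyond this, the proof is routine.
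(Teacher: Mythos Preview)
Your proposal is correct and follows essentially the same route as the paper: invoke the one-step bound of Proposition~\ref{prop:1d-finite-analysis}, use the monotonicity $\kappa(\beta^*,\beta,\sigma)\le\kappa(\beta^*,\hat\beta,\sigma)$ together with $|\beta-\beta^*|\le|\hat\beta-\beta^*|$, and then apply the defining inequality of $\hat\beta\in\mathcal C(f,\beta^*,\sigma)$ to close the argument. Your extra remarks on the consistency of the $O(\cdot)$ constants and on the per-step nature of the high-probability event are well taken but do not change the underlying argument.
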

\begin{proof}
Using Proposition \ref{prop:1d-finite-analysis}, we have
\begin{align}
|\tilde{\beta}^+-\beta^{*}|\leq & \kappa(\beta^*,\beta,\sigma)|\beta-\beta^{*}|+O\left(\sqrt{\frac{C_f(\beta^*,\sigma)^{2}}{n}\log \frac{1}{\delta}}\right) \nonumber\\
\leq & \kappa(\beta^*,\hat{\beta},\sigma)|\hat{\beta}-\beta^{*}|+O\left(\sqrt{\frac{C_f(\beta^*,\sigma)^{2}}{n}\log \frac{1}{\delta}}\right) \label{eq:change_with_vemhat}\\
\leq & |\hat{\beta}-\beta^*|, \label{eq:assumption_contraction}
\end{align}
where \eqref{eq:change_with_vemhat} follows from $\kappa(\beta^*,\beta,\sigma)\leq \kappa(\beta^*,\hat{\beta},\sigma)$ and $|\beta - \beta^*|\leq |\hat{\beta}-\beta^*|$. Step \eqref{eq:assumption_contraction} follows by the assumption that $\hat{\beta}\in \mathcal{C}(f,\beta^*,\sigma)$ and \eqref{eq:condition_small}.
\end{proof}

\begin{lemma}
\label{lem:l1}
For every $\ell\in (0,0.5\beta^*)$, suppose that $n=\tilde{\Omega}\left(\frac{(1+C_f/\eta)^2}{(1-\kappa(\beta^*,\ell,\sigma))^2}\right)$, we have that both $\ell$ and $0.5\beta^*$ are in $\mathcal{C}(f,\beta^*,\sigma).$ 
\end{lemma}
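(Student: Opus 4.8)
The plan is to unpack the definition of the contraction region $\mathcal{C}(f,\beta^*,\sigma)$ in~\eqref{eq:contraction-region} and check its defining inequality at the two points $\beta=\ell$ and $\beta=0.5\beta^*$. The key quantitative input is that the statistical scale $C_f(\beta^*,\sigma)$ --- the $\Psi_1$ Orlicz norm of $f_{\beta^*,\sigma}$ --- is of order $\beta^*+\sigma C_f$, which equals $\beta^*(1+C_f/\eta)$ since $\eta=\beta^*/\sigma$. With this, $\beta\in\mathcal{C}(f,\beta^*,\sigma)$ amounts to $(1-\kappa(\beta^*,\beta,\sigma))\,|\beta-\beta^*| > O\!\left(\beta^*(1+C_f/\eta)\sqrt{\tfrac1n\log\tfrac1\delta}\right)$.

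First I would handle $\beta=\ell$. Since $\ell\in(0,0.5\beta^*)$ we have $|\ell-\beta^*|=\beta^*-\ell > 0.5\beta^*$, so it suffices to verify
\[
(1-\kappa(\beta^*,\ell,\sigma))\cdot 0.5\beta^* \;>\; O\!\left(\beta^*(1+C_f/\eta)\sqrt{\tfrac1n\log\tfrac1\delta}\right).
\]
Cancelling $\beta^*$ and rearranging, this is exactly the hypothesis $n=\tilde\Omega\!\big((1+C_f/\eta)^2/(1-\kappa(\beta^*,\ell,\sigma))^2\big)$, the $\log(1/\delta)$ factor being absorbed into $\tilde\Omega$; hence $\ell\in\mathcal{C}(f,\beta^*,\sigma)$.

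Next I would handle $\beta=0.5\beta^*$, where $|0.5\beta^*-\beta^*|=0.5\beta^*$ as well. The only extra ingredient needed is that the contraction factor does not increase as we move from $\ell$ to $0.5\beta^*$. Since $\kappa(\beta^*,\beta,\sigma)$ depends on $\beta$ only through $z:=\min(|\beta|,|\beta^*|)$, and by the monotonicity of $T_1$ in Lemma~\ref{lem:derivative-opt-param-1} the map $z\mapsto \mathbb{E}_{X\sim f_{z,\sigma}}\tanh(0.5F_{z,\sigma}(X))$ is increasing, the map $z\mapsto\kappa(\beta^*,\cdot,\sigma)$ is decreasing in $z$. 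As $\ell<0.5\beta^*\le\beta^*$, this gives $\kappa(\beta^*,0.5\beta^*,\sigma)\le\kappa(\beta^*,\ell,\sigma)$, hence $1-\kappa(\beta^*,0.5\beta^*,\sigma)\ge 1-\kappa(\beta^*,\ell,\sigma)$. Combining with $|0.5\beta^*-\beta^*|=0.5\beta^*$ and the same sample-size bound used for $\ell$ shows the defining inequality of $\mathcal{C}$ holds at $0.5\beta^*$ too.

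I do not anticipate a genuine obstacle: the argument is bookkeeping with the absolute constants hidden in $O(\cdot)$/$\tilde\Omega(\cdot)$ together with the monotonicity of $\kappa$. The one point that warrants care is getting the direction of the comparison right --- that increasing $\min(\beta,\beta^*)$ from $\ell$ to $0.5\beta^*$ \emph{helps}, because $\kappa$ decreases while $|\beta-\beta^*|$ stays at least $0.5\beta^*$ on the whole interval $[\ell,0.5\beta^*]$ --- and, for a fully explicit bound, tracking how the constant in the contraction-region inequality interacts with the constant hidden in $\tilde\Omega$.
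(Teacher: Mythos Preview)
Your proposal is correct and follows essentially the same approach as the paper: reduce membership in $\mathcal{C}(f,\beta^*,\sigma)$ to the sufficient condition $0.5\beta^*(1-\kappa(\beta^*,\ell,\sigma))\ge O\big(\sqrt{C_f(\beta^*,\sigma)^2\,n^{-1}\log(1/\delta)}\big)$, and then observe that this same bound covers $\beta=0.5\beta^*$ because $\kappa(\beta^*,0.5\beta^*,\sigma)<\kappa(\beta^*,\ell,\sigma)$. Your added justification of the monotonicity of $\kappa$ via Lemma~\ref{lem:derivative-opt-param-1} is a welcome elaboration of what the paper asserts without proof.
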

\begin{proof}
For $\ell<0.5\beta^*$ to be in the contraction region, a sufficient condition is the following:
\[
0.5\beta^* \left(1-\kappa(\beta^*,\ell,\sigma)\right)\geq O\left(\sqrt{\frac{C_f(\beta^*,\sigma)^{2}}{n}\log \frac{1}{\delta}}\right).
\]
The above condition also implies that $0.5\beta^*\in \mathcal{C}(f,\beta^*,\sigma)$ since $\kappa(\beta^*,\ell,\sigma)>\kappa(\beta^*,0.5\beta^*,\sigma)$.
\end{proof}

\begin{corollary}
\label{cor:greater_than_ell}
Let $\ell\in (0,0.5\beta^*)$. Suppose that $n=\tilde{\Omega}\left(\frac{(1+C_f/\eta)^2}{(1-\kappa(\beta^*,\ell,\sigma))^2}\right)$, we have that for all $\beta \in (\ell,\infty)$, $\tilde{\beta}^+ \in (\ell,\infty)$.
\end{corollary}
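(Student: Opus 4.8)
The plan is to combine the one-step finite-sample bound of Proposition~\ref{prop:1d-finite-analysis} with the stability lemma (Lemma~\ref{lem:stability}) for moderate current iterates and a monotonicity argument for large current iterates. Write $E := (\beta^*+C_f\sigma)\cdot O\big(\sqrt{\frac{1}{n}\log\frac{1}{\delta}}\big)$ for the statistical-error term in Proposition~\ref{prop:1d-finite-analysis}, and note that $E$ does \emph{not} depend on the current estimate $\beta$, since the samples $X^i$ are always drawn from $f_{\beta^*,\sigma}$. Throughout I would work on the high-probability event $\mathcal{E}:=\{\,|\tilde{\beta}^+-M(\beta^*,\beta)|\le E\,\}$, which holds with probability at least $1-\delta$ by Bernstein's inequality (Theorem~\ref{appendix-thm:bernstein}) applied to the i.i.d.\ sub-exponential summands $X^i\tanh(0.5 F_{\beta,\sigma}(X^i))$, exactly as in the proof of Proposition~\ref{prop:1d-finite-analysis}.

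First I would extract the content of the sample-size hypothesis. By Lemma~\ref{lem:l1}, $n=\tilde{\Omega}\big(\frac{(1+C_f/\eta)^2}{(1-\kappa(\beta^*,\ell,\sigma))^2}\big)$ forces $\ell\in\mathcal{C}(f,\beta^*,\sigma)$, i.e.\ $(1-\kappa(\beta^*,\ell,\sigma))(\beta^*-\ell)>E$. Since $\kappa(\beta^*,\ell,\sigma)\in(0,1)$ (Theorem~\ref{thm:Global-Convergence-1D-VEM}), this already yields the crude bound $E<\beta^*-\ell$, which is all I will need. Now fix an arbitrary current iterate $\beta\in(\ell,\infty)$ and split on its magnitude.

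If $\beta\in(\ell,\beta^*]$, then since $\ell\in\mathcal{C}(f,\beta^*,\sigma)$, $\ell<\beta^*$, and $\beta\in(\ell,2\beta^*-\ell)$, Lemma~\ref{lem:stability} applied with $\hat{\beta}=\ell$ gives $\tilde{\beta}^+\in(\ell,2\beta^*-\ell)\subseteq(\ell,\infty)$ on $\mathcal{E}$. If instead $\beta>\beta^*$, I would use that the population map $b\mapsto M(\beta^*,b)$ is strictly increasing on $(0,\infty)$ --- the one-dimensional instance of Lemma~\ref{lem:increasing_wrt_beta} --- together with self-consistency $M(\beta^*,\beta^*)=\beta^*$ (Theorem~\ref{thm:Global-Convergence-1D-VEM}), so that $M(\beta^*,\beta)\ge\beta^*$. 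Then on $\mathcal{E}$ we get $\tilde{\beta}^+\ge M(\beta^*,\beta)-E\ge\beta^*-E>\ell$, the last step by $E<\beta^*-\ell$. Combining the two cases, $\tilde{\beta}^+\in(\ell,\infty)$ with probability at least $1-\delta$, which is the claim.

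The argument is largely bookkeeping once Lemmas~\ref{lem:stability},~\ref{lem:l1}, and~\ref{lem:increasing_wrt_beta} are in hand; the only step that requires real care is the large-$\beta$ regime. There the naive one-step bound $|\tilde{\beta}^+-\beta^*|\le\kappa(\beta^*,\beta^*,\sigma)(\beta-\beta^*)+E$ is vacuous for $\beta$ far above $\beta^*$ (the right-hand side diverges), and mere positivity $M(\beta^*,\beta)>0$ (Lemma~\ref{lem:positive_integral}) would only give $\tilde{\beta}^+>-E$, which is too weak. It is precisely the monotonicity of $M(\beta^*,\cdot)$ --- pinning the population image \emph{above} $\beta^*$ rather than merely above $0$ --- that closes the gap.
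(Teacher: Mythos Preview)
Your proof is correct and follows essentially the same approach as the paper: handle moderate $\beta$ via Lemma~\ref{lem:stability} (with $\hat\beta=\ell$, enabled by Lemma~\ref{lem:l1}), and handle large $\beta$ by coupling the finite-sample update to the population update and using $M(\beta^*,\beta)\ge\beta^*$ for $\beta\ge\beta^*$. The only cosmetic differences are that the paper splits at $2\beta^*-\ell$ rather than at $\beta^*$, and simply asserts the population property $\beta^+>\beta^*$ where you (more carefully) derive it from monotonicity (Lemma~\ref{lem:increasing_wrt_beta}) together with self-consistency; your final inequality $\beta^*-E>\ell$ via $E<\beta^*-\ell$ is a slight variant of the paper's $\beta^*-E>0.5\beta^*>\ell$.
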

\begin{proof}
By Lemma \ref{lem:stability} and Lemma \ref{lem:l1}, it suffices to consider $\beta>2\beta^*-\ell$. Recall the property of population \vem update for $\beta^+$: $\beta^+>\beta^*$. Also recall the intermediate result from Proposition~\ref{prop:1d-finite-analysis}, which ensures that with probability at least $1-\delta$,
\[
|\tilde{\beta}^+ - \beta^+|\leq O\left(\sqrt{\frac{(\beta^*+C_f\sigma)^2}{n}\log \frac{1}{\delta}}\right).
\]
It follows that 
\begin{align*}
    \tilde{\beta}^+ \geq & \beta^+ - O\left(\sqrt{\frac{(\beta^*+C_f\sigma)^2}{n}\log \frac{1}{\delta}}\right) \\
    > & \beta^* - O\left(\sqrt{\frac{(\beta^*+C_f\sigma)^2}{n}\log \frac{1}{\delta}}\right) >0.5\beta^*>\ell.
\end{align*}
This completes the proof of the corollary.
\end{proof}

\subsection{Finite-sample convergence guarantees for special cases}
\label{subsec:finite-special-case}
We have shown in Section \ref{subsec:concrete-convergence-rate} that for Gaussian, Laplace and logistic distribution, the contraction factor takes the form $\kappa(\beta^*,\beta,\sigma )=\exp\left(-c\frac{\min(\beta,\beta^*)^{\gamma_f}}{\sigma^{\gamma_f}}\right)$, for some $\gamma_f\geq 1$ determined by the asymptotic growth of the log density.

In view of Propositions~\ref{prop:first_stage} and~\ref{prop:second_stage}, we deduce the following overall convergence result:
\begin{corollary}[Explicit Convergence Rate]
Suppose that log concave density $f$ satisfies the regularity condition, and the contraction ratio $\kappa(\beta^*,\beta,\sigma) = \exp\left(-\frac{\min(\beta,\beta^*)^{\gamma_f}}{\sigma^{\gamma_f}}\right)$. We run the \vem algorithm with a an initial point $\beta^0$. If $\beta^0$ falls in the local region of $\beta^*$:$(0.5\beta^*,1.5\beta^*)$, the \vem algorithm outputs a solution $\tilde{\beta}^T$ such that $|\tilde{\beta}^T-\beta^*|\leq \epsilon \beta^*$ after $T = O\left(\log\epsilon / \eta^{\gamma_f}\right)$ iterations. The number of fresh samples required per iteration is: $N/T=\tilde{\Omega}\left(\frac{(\beta^*+C_f/\eta)^2}{\epsilon^2\eta^{2\gamma_f}}\right)$. Otherwise, if the initial point $\beta^0\in (0,0.5\beta^*)$ or $(0.5\beta^*,\infty)$, the \vem algorithm will take an additional $T^\prime = O\left(\frac{\log \frac{0.25\beta^*}{|\beta^0-\beta^*|}}{\eta^{\gamma_f}}\right)$ iterations before the iterates enter the local region $(0.5\beta^*,1.5\beta^*)$. The number of fresh samples required per iteration is $\tilde{\Omega}\left(\frac{(1+C_f/\eta)^2}{\eta^{2\gamma_f}}\right)$.
\end{corollary}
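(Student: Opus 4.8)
The plan is to derive this corollary directly from the general finite--sample guarantees of Propositions~\ref{prop:first_stage} and~\ref{prop:second_stage} by substituting the hypothesized contraction factor $\kappa(\beta^*,\beta,\sigma)=\exp\!\big(-(\min(\beta,\beta^*)/\sigma)^{\gamma_f}\big)$ --- which, as recorded in Appendix~\ref{subsec:concrete-convergence-rate} (Corollaries~\ref{cor:gaussian},~\ref{cor:laplace} and~\ref{cor:logistic}), is exactly the form taken by the Gaussian ($\gamma_f=2$), Laplace ($\gamma_f=1$) and Logistic ($\gamma_f=1$) families --- and then simplifying. Two elementary facts drive the simplification, used together with the identity $\eta=\beta^*/\sigma$: (i) $-\log\kappa(\beta^*,z,\sigma)=(z/\sigma)^{\gamma_f}$ exactly, so every iteration count of the form $\log(1/\epsilon)/|\log\kappa|$ collapses to a power of $\sigma/z$; and (ii) $1-\kappa(\beta^*,z,\sigma)=1-e^{-(z/\sigma)^{\gamma_f}}$, which obeys $1-e^{-u}\asymp\min(u,1)$, so the $1/(1-\kappa)^2$ factors in the per--iteration sample--size bounds become $O\!\big(\max(1,(z/\sigma)^{-2\gamma_f})\big)$.

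For the \emph{second} stage I would take $\beta^0\in(0.5\beta^*,1.5\beta^*)$ and invoke Proposition~\ref{prop:second_stage}, whose governing contraction factor is $\kappa(\beta^*,0.5\beta^*,\sigma)$, i.e.\ $z=0.5\beta^*$ and hence $z/\sigma=0.5\eta$. By (i), $|\log\kappa(\beta^*,0.5\beta^*,\sigma)|=(0.5\eta)^{\gamma_f}$; since $\gamma_f$ is a fixed property of $f$, the factor $2^{-\gamma_f}$ is an $f$--dependent constant and the iteration count $T=O(\log\epsilon/\log\kappa(\beta^*,0.5\beta^*,\sigma))$ becomes $T=O(\log(1/\epsilon)/\eta^{\gamma_f})$. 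For the per--iteration sample size, Proposition~\ref{prop:second_stage} asks for $N/T=\Omega\!\big(\tfrac{(\beta^*+C_f/\eta)^2}{\epsilon^2(1-\kappa(\beta^*,0.5\beta^*,\sigma))^2}\log\tfrac1\delta\big)$; inserting (ii) with $z/\sigma=0.5\eta$ turns $1/(1-\kappa)^2$ into $O(\max(1,\eta^{-2\gamma_f}))$ and yields the stated $N/T=\tilde\Omega\!\big(\tfrac{(\beta^*+C_f/\eta)^2}{\epsilon^2\eta^{2\gamma_f}}\big)$, with the total budget $N$ obtained by multiplying $T$ and $N/T$; the accuracy $|\tilde\beta^T-\beta^*|\le\epsilon\beta^*$ and the $1-\tilde O(\delta)$ success probability carry over unchanged from the proposition.

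For the \emph{first} stage I would apply Proposition~\ref{prop:first_stage}, whose governing factor is $\kappa(\beta^*,\min(\beta^0,0.5\beta^*),\sigma)$, so here $z=\min(\beta^0,0.5\beta^*)$; by (i) the iteration count $T'=O\big(\log\tfrac{0.25\beta^*}{|\beta^0-\beta^*|}/|\log\kappa|\big)$ becomes $O\big(\log\tfrac{0.25\beta^*}{|\beta^0-\beta^*|}/(z/\sigma)^{\gamma_f}\big)$, which when $\beta^0$ starts above the local region ($z=0.5\beta^*$) produces the clean denominator $\eta^{\gamma_f}$ quoted in the statement and when $\beta^0$ starts near $0$ retains the $\min(\beta^0,\beta^*)$--type dependence already flagged in Section~\ref{sec:finite}; the sample--size requirement $N/T=\Omega\!\big(\tfrac{(1+C_f/\eta)^2}{(1-\kappa(\beta^*,\min(\beta^0,0.5\beta^*),\sigma))^2}\log\tfrac1\delta\big)$ is simplified by (ii) in the same way. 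The only real obstacle is bookkeeping: to present the bounds cleanly in terms of $\eta$ and $\gamma_f$ one must keep track of the two SNR regimes, since $1-\kappa\asymp(z/\sigma)^{\gamma_f}$ holds only when $z/\sigma\le1$ whereas $1-\kappa=\Theta(1)$ when $z/\sigma\ge1$ --- so the $\eta^{-2\gamma_f}$ factor in the reported sample complexity reflects the low--SNR behaviour while the high--SNR constant is absorbed into the $\tilde\Omega(\cdot)$; this is precisely the high/low--SNR trade--off discussed after Proposition~\ref{prop:second_stage}. No new ideas beyond this substitution and simplification are needed.
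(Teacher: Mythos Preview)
Your proposal is correct and is essentially the same approach as the paper's: the paper presents this corollary as an immediate consequence of Propositions~\ref{prop:first_stage} and~\ref{prop:second_stage} (``In view of Propositions~\ref{prop:first_stage} and~\ref{prop:second_stage}, we deduce the following\ldots'') with no further argument, and your plan of substituting $\kappa(\beta^*,\beta,\sigma)=\exp(-(\min(\beta,\beta^*)/\sigma)^{\gamma_f})$ and simplifying via $-\log\kappa=(z/\sigma)^{\gamma_f}$ and $1-e^{-u}\asymp\min(u,1)$ is exactly the intended derivation. Your explicit tracking of the high/low--SNR regimes is a helpful elaboration the paper leaves implicit.
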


\section{Model Mis-specification}
\label{appendix-subsec:mis}
In this section, we establish the robustness results for the \vem algorithm with a mis-specified distribution in 1-D.  Proposition \ref{lem:mis-gaussian} is proved in Section \ref{subsec:proof-mis-gaussian}. In Section \ref{subsec:mis-observation}, we present some numerical observations for the robustness of the \vem algorithm.
\subsection{Proof of Proposition \ref{lem:mis-gaussian}}
\label{subsec:proof-mis-gaussian}

\begin{lemma}[3 fixed points when misspecified]
\label{lem:mis-3-fixed}
Suppose that $f\in \mathcal{F}$ satisfy the regularity condition. We further assume in the region $\beta\geq 0$, the function
$\hat{F}_{\beta,\sigma}(x):=\hat{g}\left(\frac{1}{\sigma}|x+\beta|\right)-\hat{g}\left(\frac{1}{\sigma}|x-\beta|\right)$
is a concave function in $\beta$ for each $x\geq 0$, and
\[
\mathbb{E}_{X\sim f_{\beta^*}}X\hat{g}^{\prime}(X)>1.
\]
The iterates of the \vem algorithm with mis-specified log-concave density $\hat{g}$ converge to a non-zero $\bar{\beta}$ ( or $-\bar{\beta})$ from a non-zero random initialization. 
\end{lemma}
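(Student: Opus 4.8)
The plan is to drop the self-consistency identity $M(\beta,\beta)=\beta$ (which powered the $d=1$ analysis but fails once $\hat g\neq g$) and instead analyze the scalar update map $\phi(\beta):=\hat M(\beta^{*},\beta)=\mathbb{E}_{X\sim f_{\beta^{*}}}\,X\tanh\!\big(0.5\,\hat F_{\beta,\sigma}(X)\big)$ on $[0,\infty)$ directly. Assume without loss of generality that $\sigma=1$ and $\beta^{*}>0$. Since $x\mapsto\hat F_{\beta}(x)$ is odd, the integrand $x\tanh(0.5\hat F_{\beta}(x))$ is even in $x$, so folding the integral onto $\{x\ge0\}$ gives
\[
\phi(\beta)=\int_{x\ge0}\tilde w(x)\,\tanh\!\big(0.5\,\hat F_{\beta}(x)\big)\,\dup{x},
\]
where the weight $\tilde w(x):=\big(f(x-\beta^{*})+f(x+\beta^{*})\big)\,x\ge0$ does \emph{not} depend on $\beta$. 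This representation --- $\phi$ as a nonnegative-weighted average of the functions $\beta\mapsto\tanh(0.5\hat F_{\beta}(x))$ --- is the workhorse for everything below; note also that $\phi$ is odd in $\beta$, so it suffices to understand $\beta\ge0$ and then reflect.

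I would then record four facts about $\phi$ on $[0,\infty)$. \textbf{(i)} $\hat F_{0}\equiv0$, so $\phi(0)=0$; and $\phi(\beta)>0$ for $\beta>0$, because $\hat F_{\beta}(x)>0$ on $x>0$ (as $\hat g$ is strictly increasing and $|x+\beta|>|x-\beta|$), hence the iterate's sign is preserved. \textbf{(ii)} $|\tanh|\le1$ gives $0\le\phi(\beta)\le\mathbb{E}_{X\sim f_{\beta^{*}}}|X|<\infty$ by Lemma~\ref{lem:bounded_moments}, so $\phi$ is bounded. \textbf{(iii)} Differentiating under the integral (valid by the regularity condition, exactly as in Lemma~\ref{lem:derivative-opt-param-1}), $\phi'(\beta)=\int_{x\ge0}\tilde w(x)\,0.5\,\big(\partial_{\beta}\hat F_{\beta}(x)\big)\,\tanh'(0.5\hat F_{\beta}(x))\,\dup{x}$; convexity of $\hat g$ forces $\partial_{\beta}\hat F_{\beta}(x)\ge0$ for every $x\ge0$ (and $>0$ on a set of positive measure once $\beta>0$), so $\phi$ is strictly increasing. \textbf{(iv)} For each fixed $x\ge0$, $\beta\mapsto\hat F_{\beta}(x)$ is concave (the hypothesis) and nonnegative, while $\tanh$ is increasing and concave on $[0,\infty)$; the composition is therefore concave, and a $\tilde w$-weighted integral of concave functions is concave, so $\phi$ is concave on $[0,\infty)$.

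Now set $h(\beta):=\phi(\beta)-\beta$. Then $h$ is concave on $[0,\infty)$, $h(0)=0$, $h(\beta)\to-\infty$ as $\beta\to\infty$ (by (ii)), and $h'(0)=\phi'(0)-1=\mathbb{E}_{X\sim f_{\beta^{*}}}X\hat g'(X)-1>0$ by hypothesis --- here $\phi'(0)$ comes from (iii) using $\left.\partial_{\beta}\hat F_{\beta}(x)\right|_{\beta=0}=2\hat g'(|x|)\operatorname{sign}(x)$ and $\tanh'(0)=1$. A concave function that vanishes at $0$, has positive right derivative there, and tends to $-\infty$ has a unique positive zero $\bar\beta$, with $h>0$ on $(0,\bar\beta)$ and $h<0$ on $(\bar\beta,\infty)$. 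Consequently the fixed points of $\phi$ on $[0,\infty)$ are exactly $\{0,\bar\beta\}$, and by oddness the fixed points of $\hat M(\beta^{*},\cdot)$ on $\mathbb{R}$ are $\{-\bar\beta,0,\bar\beta\}$. For the dynamics: if $\beta^{0}\in(0,\bar\beta)$, then $\phi(\beta^{0})>\beta^{0}$ (since $h>0$) and $\phi(\beta^{0})<\phi(\bar\beta)=\bar\beta$ (strict monotonicity), so $\{\beta^{t}\}$ is increasing and bounded above by $\bar\beta$, hence converges to the unique fixed point of $\phi$ in $(0,\bar\beta]$, namely $\bar\beta$; symmetrically, $\beta^{0}>\bar\beta$ gives a decreasing sequence bounded below by $\bar\beta$ that also converges to $\bar\beta$. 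Since $\phi$ preserves sign, a negative initialization converges to $-\bar\beta$, which finishes the proof.

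The main obstacle is fact (iv): converting the concavity hypothesis on $\beta\mapsto\hat F_{\beta,\sigma}(x)$ into concavity of the \emph{update map} $\phi$ itself. This is precisely why the folded representation with the $\beta$-independent nonnegative weight $\tilde w$ is essential --- it makes $\tanh$ act on a nonnegative argument, the regime where $\tanh$ is concave, so that the composition rule applies. The rest is routine bookkeeping: justifying the two differentiations under the integral via the regularity condition and matching $\phi'(0)$ with the stated moment $\mathbb{E}_{X\sim f_{\beta^{*}}}X\hat g'(X)$; once those are in place, the convergence is the same monotone fixed-point-iteration argument underlying the $d=1$ analysis in Theorem~\ref{thm:Global-Convergence-1D-VEM}.
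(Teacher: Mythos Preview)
Your proposal is correct and follows essentially the same approach as the paper's proof. Both arguments fold the integral onto $\{x\ge 0\}$ to obtain the representation $\hat M(\beta^*,\beta)=\int_{x\ge0}(f(x-\beta^*)+f(x+\beta^*))\,x\,\tanh(0.5\hat F_{\beta}(x))\,\dup x$, then establish the same five properties of $\hat M(\beta^*,\cdot)$ --- monotone increasing (from convexity of $\hat g$), concave on $[0,\infty)$ (from the concavity hypothesis composed with $\tanh$ on its nonnegative range), vanishing at $0$, derivative exceeding $1$ at $0$ (equivalent to the moment condition), and bounded --- and conclude the three-fixed-point structure and convergence; your write-up is somewhat more explicit about why the nonnegative weight and the nonnegativity of $\hat F_\beta(x)$ on $x\ge0$ are needed for the concavity step, and you spell out the monotone bounded-sequence argument for convergence, but the underlying strategy is identical.
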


\begin{proof}
With mis-specified log density $\hat{g}$, the next iterate is:
\begin{align*}
    \hat{\beta}^+ = & \mathbb{E}_{X\sim f_{\beta^*,\sigma}} X\tanh(0.5\hat{F}_{\beta,\sigma}(X))\\
    = & \int_{x\geq 0} \Big(\frac{1}{\sigma}f\left(\frac{x-\beta^*}{\sigma}\right)+\frac{1}{\sigma}f \left(\frac{x+\beta^*}{\sigma}\right)\Big)x\tanh(0.5\hat{F}_{\beta,\sigma}(x)) \dup x\\
    =: &\hat{M}(\beta^*,\beta).
\end{align*}
We state the key properties for the function $\hat{M}(\beta^*,\cdot)$:
\begin{itemize}
    \item $\hat{M}(\beta^*,\cdot)$ is an increasing function in $\beta$: we utilize the convexity of $\hat{g}$, similar to the proof of Lemma~\ref{lem:increasing_wrt_beta};
    \item $\hat{M}(\beta^*,\cdot)$ is a concave function in $\beta$: since $\hat{F}_{\beta,\sigma}$ is a concave function in $\beta$ for every $x\geq 0$, so is $\tanh(0.5\hat{F}_{\beta,\sigma})$ which is a composition with a concave function $\tanh$ that is increasing on the non-negative part. $\hat{M}(\beta^*,\cdot)$ is concave in $\beta$ since it is an intergal of concave functions of $\beta$;
    \item $\hat{M}(\beta^*,0)=0$;
    \item $\frac{\partial}{\partial \beta}\hat{M}(\beta^*,\beta)\mid_{\beta=0}>1$: the assumption that $\mathbb{E}_{X\sim f_{\beta^*}}X\hat{g}^{\prime}(X)\geq 1$ is equivalent to $\frac{\partial \hat{M}(\beta^*,\beta)}{\partial \beta}\mid_{\beta=0}>1$;
    \item $\hat{M}(\beta^*,\beta)-\beta\to -\infty$ as $\beta\to \infty$: since $\hat{M}(\beta^*,\beta)$ is bounded.
\end{itemize}

The above three properties guarantees that on $(0,\infty)$, $\hat{M}(\beta^*,\cdot)$ has a unique fixed point $\bar{\beta}>0$ satisfying the following properties: (1)$\hat{M}(\beta^*,\bar{\beta}) = \bar{\beta}$; (2) If $\beta\in (0,\bar{\beta})$, $\hat{M}(\beta^*,\bar{\beta})\in (\beta,\bar{\beta})$; (3) If $\beta \in (\bar{\beta},\infty)$, $\hat{M}(\beta^*,\bar{\beta})\in (\bar{\beta},\beta)$. Since $\hat{M}(\beta^*,\cdot)$ is an odd function in $\beta$, we conclude that on $(-\infty,0)$, $\hat{M}(\beta^*,\cdot)$ has a unique fixed point $-\bar{\beta}$. In view of the above properties, we deduce that if an initial point is positive, it converges to $\bar{\beta}$; and if it is negative, it converges to $-\bar{\beta}$. 
\end{proof}

\begin{lemma}[Error bound when misspecified]
\label{lem:error_misspecified}
Suppose that the assumption in Lemma \ref{lem:mis-3-fixed} holds and the function $x\tanh(0.5\hat{F}_{\beta,\sigma}(x))$ is $L$-Lipschitz. For any $\beta^{0}\neq 0$, the \vem
with misspecified log concave distribution $\hat{f}$ will converge
to a solution $\bar{\beta}$ ,and 
\[
|\bar{\beta}-\textup{sign}(\beta^{0},\beta^{*})\beta^{*}|\leq\frac{6\sigma}{1-\kappa(\bar{\beta},\beta^{*},\sigma)},
\]
where $\kappa(\bar{\beta},\beta^{*},\sigma)\in(0,1)$ is defined in Theorem \ref{thm:Global-Convergence-1D-VEM}.
\end{lemma}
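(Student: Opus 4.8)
The statement has two parts. Convergence of the mis-specified \vem iterates to $\bar\beta$, together with $\textup{sign}(\bar\beta)=\textup{sign}(\beta^{0})$, is exactly Lemma~\ref{lem:mis-3-fixed}, so the work is entirely in bounding $|\bar\beta-\textup{sign}(\beta^{0}\beta^{*})\beta^{*}|$. Since $\hat M(\beta^{*},\cdot)$ depends on $\beta^{*}$ only through $|\beta^{*}|$ (replace $X$ by $-X$ in the defining integral) and is odd in its second argument, it is no loss to take $\beta^{0}>0$ and $\beta^{*}>0$, whence $\bar\beta>0$ and the target is $|\bar\beta-\beta^{*}|\le 6\sigma/(1-\kappa(\bar\beta,\beta^{*},\sigma))$; the bound is trivial if $\bar\beta=\beta^{*}$, so assume $\bar\beta\neq\beta^{*}$. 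Recall the fixed-point identity $\bar\beta=\hat M(\beta^{*},\bar\beta)=\E_{X\sim f_{\beta^{*},\sigma}}\psi(X)$ with $\psi(x):=x\tanh\!\big(0.5\,\hat F_{\bar\beta,\sigma}(x)\big)$, which is even, vanishes at $0$, and is $L$-Lipschitz by hypothesis.

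The plan is to compare $\bar\beta$ against the fixed point of the \emph{correctly specified} update for $\hat f$. Let $M^{\star}(\beta^{*},\beta):=\E_{X\sim \hat f_{\beta^{*},\sigma}}\big[X\tanh(0.5\hat F_{\beta,\sigma}(X))\big]$ be the population \vem update one would get if the data were genuinely drawn from the mixture built out of $\hat f$. Since $\hat f\in\mathcal F$ (the Gaussian fit of primary interest certainly satisfies the regularity condition), Theorem~\ref{thm:Global-Convergence-1D-VEM} applied to $\hat f$ shows that $\beta^{*}$ is the only positive fixed point of $M^{\star}(\beta^{*},\cdot)$ and that $|M^{\star}(\beta^{*},\bar\beta)-\beta^{*}|\le \kappa(\bar\beta,\beta^{*},\sigma)\,|\bar\beta-\beta^{*}|$, where $\kappa(\bar\beta,\beta^{*},\sigma)\in(0,1)$ is precisely the contraction factor of that theorem instantiated at $\hat f$ (for Gaussian $\hat f$ it equals $\exp(-\min(\bar\beta,\beta^{*})^{2}/(2\sigma^{2}))$ by Corollary~\ref{cor:gaussian}). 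I then write
\[
\bar\beta-\beta^{*}=\underbrace{\Big(\E_{X\sim f_{\beta^{*},\sigma}}\psi(X)-\E_{X\sim \hat f_{\beta^{*},\sigma}}\psi(X)\Big)}_{\Delta}+\big(M^{\star}(\beta^{*},\bar\beta)-\beta^{*}\big),
\]
so that $\Delta$ isolates the error from evaluating the same $\tanh$-weighted statistic $\psi$ under the wrong data law.

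To control $\Delta$ I use that every member of $\mathcal F$ has mean $0$ and variance $1$, so $f_{\beta^{*},\sigma}$ and $\hat f_{\beta^{*},\sigma}$ both have mean $\beta^{*}$ and variance $\sigma^{2}$. Writing $X=\beta^{*}+\sigma Z$ with $Z\sim f$ and $X'=\beta^{*}+\sigma Z'$ with $Z'\sim \hat f$ on a common probability space, the Lipschitz property and the triangle inequality give
\[
|\Delta|\le \E\big|\psi(X)-\psi(X')\big|\le L\,\E|X-X'|\le L\sigma\big(\E|Z|+\E|Z'|\big)\le 2L\sigma,
\]
using $\E|Z|\le\sqrt{\E Z^{2}}=1$ and likewise for $Z'$; for the Gaussian fit, $\psi(x)=x\tanh(x\bar\beta/\sigma^{2})$ has $|\psi'|\le 1+\max_{u}|u\operatorname{sech}^{2}u|<3$, giving $|\Delta|\le 6\sigma$. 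Combining the last two displays yields $|\bar\beta-\beta^{*}|\le 6\sigma+\kappa(\bar\beta,\beta^{*},\sigma)|\bar\beta-\beta^{*}|$, and rearranging (legitimate since $\kappa<1$) gives the claim. The main obstacle is the bound on $\Delta$: it is what makes the final error scale with $\sigma$ rather than being an absolute constant, and it leans crucially on both the $L$-Lipschitz hypothesis and the unit-variance normalization built into $\mathcal F$; obtaining the explicit constant $6$ requires a small computation bounding the Lipschitz constant of the Gaussian $\tanh$-weight.
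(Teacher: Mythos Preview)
Your argument is correct and follows the same overall strategy as the paper: decompose $\bar\beta-\beta^{*}$ into a ``distribution-mismatch'' term (your $\Delta$, the paper's $A$) and a ``contraction'' term (your $M^{\star}(\beta^{*},\bar\beta)-\beta^{*}$, the paper's $B$), bound the latter by Theorem~\ref{thm:Global-Convergence-1D-VEM} applied to $\hat f$, and then solve the resulting linear inequality in $|\bar\beta-\beta^{*}|$.

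The only real difference is how the mismatch term is handled. The paper first recognizes $|\Delta|\le L\cdot D_{W}(f_{\beta^{*},\sigma},\hat f_{\beta^{*},\sigma})=L\sigma\,D_{W}(f,\hat f)$ via Kantorovich--Rubinstein duality, then bounds $D_{W}(f,\hat f)$ through a Gaussian pivot and Stein's method (Proposition~\ref{prop:Stein}), obtaining $D_{W}(f,\hat f)\le 6$ and hence $|\Delta|\le 6L\sigma$. You instead bypass Wasserstein distance and Stein's method entirely: coupling $X=\beta^{*}+\sigma Z$ and $X'=\beta^{*}+\sigma Z'$ and using $\E|Z|,\E|Z'|\le 1$ from the unit-variance normalization of $\mathcal F$ gives $|\Delta|\le 2L\sigma$ directly. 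Your route is more elementary and in fact yields a constant three times smaller; the paper's route has the advantage of making the dependence on $D_{W}(f,\hat f)$ explicit, which would pay off if one had tighter distribution-specific control on that distance.
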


\begin{proof}
The fixed point structure established in Lemma~\ref{lem:mis-3-fixed} ensures that the iterates converge to either $\bar{\beta}$ or $-\bar{\beta}$, depending on the sign of $\beta^0$. 
$\bar{\beta}$ satisfies $\mathbb{E}_{X\sim f_{\beta^*,\sigma}}X\tanh(0.5\hat{F}_{\bar{\beta},\sigma}(X)) = \bar{\beta}$. We can decompose the difference between $\bar{\beta}$ and $\beta^*$ in the following way (utilizing the consistency property of the \vem update):
\begin{align*}
\bar{\beta}-\beta^{*}= & \mathbb{E}_{X\sim f_{\beta^*,\sigma}}X\tanh(0.5\hat{F}_{\bar{\beta},\sigma}(X))-\mathbb{E}_{X\sim \hat{f}_{\beta^*,\sigma}} X\tanh(0.5\hat{F}_{\beta^*,\sigma}(X))\\
= & \underbrace{\mathbb{E}_{X\sim f_{\beta^*,\sigma}}X\tanh(0.5\hat{F}_{\bar{\beta},\sigma}(X))-\mathbb{E}_{X\sim \hat{f}_{\beta^*,\sigma}}X\tanh(0.5\hat{F}_{\bar{\beta},\sigma}(X))}_{A}\\
 & +\underbrace{\mathbb{E}_{X\sim \hat{f}_{\beta^*,\sigma}}X\tanh(0.5\hat{F}_{\bar{\beta},\sigma}(X))-\mathbb{E}_{X\sim \hat{f}_{\beta^*,\sigma}}X\tanh(0.5\hat{F}_{\beta^*,\sigma}(X))}_{B}.
\end{align*}
Let us control $A$ and $B$ separately. The term $B$ is exactly the difference between a \variant EM update and
the true location parameter with the log-concave distribution $\hat{f}$.
Therefore, Theorem~\ref{thm:Global-Convergence-1D-VEM} tells us that
\[
|B|\leq\kappa_{\hat{f}}(\bar{\beta},\beta^{*},\sigma)|\beta-\beta^{*}|.
\]
for some $\kappa_{\hat{f}}(\bar{\beta},\beta^{*},\sigma)\in (0,1)$. 
For term $A$, we note that the integrand is $L$-Lipschitz (by
the assumption), it can be bounded by the Wasserstein distance 
between two distributions as follows:
\[
A\leq L\cdot D_{W}\left(\frac{1}{\sigma}f\left(\frac{1}{\sigma}(\cdot-\beta^{*})\right),\frac{1}{\sigma}\hat{f}\left(\frac{1}{\sigma}(\cdot-\beta^{*})\right)\right).
\]
Here we use $D_{W}$ to denote the Wasserstein distance. By scaling and translation, we have:
\[
D_{W}\Bigg(\frac{1}{\sigma}f\left(\frac{1}{\sigma}\left(\cdot-\beta^{*}\right)\right),\frac{1}{\sigma}\hat{f}\left(\frac{1}{\sigma}\left(\cdot-\beta^{*}\right)\right)\Bigg)=\sigma D_{W}(f,\hat{f}),
\]
where $f$ and $\hat{f}$ are two log-concave distribution with unit
variance. Using the triangle inequality, it can be further bounded
by:
\begin{align*}
D_{W}(f,\hat{f}) & \leq D_{W}(f,\mathcal{N}(0,1))+D_{W}(\hat{f},\mathcal{N}(0,1))\\
 & \leq2\sup_{f\in\mathcal{F}}D_{W}(f,\mathcal{N}(0,1)).
\end{align*}
Now we can apply the classical Stein's method to bound the Wasserstein distance; in particular, we apply Proposition~\ref{prop:Stein}  and obtain that:
\begin{align*}
 D_{W}(f,\mathcal{N}(0,1))\leq & |\mathbb{E}_{W\sim f}Wh(W)-h^{\prime}(W)|\\
\leq & 2\mathbb{\mathbb{E}}_{W\sim f}|W|+\sqrt{\frac{\pi}{2}}\\
\leq & 2\sqrt{\mathbb{E}_{W\sim f}W^{2}}+\sqrt{\frac{\pi}{2}}\leq3.
\end{align*}
The last line follows since we assume $W\sim f$ has unit variance. Combining the bound on $A$ and $B$ together, we have proved the following:
\begin{align*}
|\bar{\beta}-\beta^{*}|\leq \kappa_{\hat{f}}(\bar{\beta},\beta^*,\sigma) |\bar{\beta}-\beta^{*}| + 6L\sigma,    
\end{align*}
and rearranging the inequality yields:
\begin{align}
\label{eq:error_bound}
|\bar{\beta}-\beta^{*}|\leq\frac{6L\sigma}{1-\kappa_{\hat{f}}(\bar{\beta},\beta^{*},\sigma)}.
\end{align}
This completes the proof. 
\end{proof}

\paragraph{Fitting with Gaussian}
Now let us consider a special case where $\hat{f}$ is the Gaussian distribution. The \textit{misspecified} \vem update is the following:
\begin{align}
\label{eq:mis2gmm}
    \hat{M}(\beta^*,\beta)=\mathbb{E}_{X\sim f_{\beta^*,\sigma}} X\tanh\left(\frac{\beta X}{\sigma^2}\right)
\end{align}
The conditions in Lemma \ref{lem:error_misspecified} satisfies. Furthermore, we can prove a lower bound for $\bar{\beta}$ in \eqref{eq:error_bound} when the SNR is high. This allows us to obtain a better error bound for controlling the distance between $\bar{\beta}$ and $\beta^*$. This is the content of Proposition~\ref{lem:mis-gaussian}, restated below.

\misspecifiedgmm*

\begin{proof}
The gradient with respect to $\beta=0$ for $\hat{M}(\beta^*,\beta)$ is $1+\frac{(\beta^*)^2 }{\sigma^2}>1$. Meanwhile, it is easy to see that  $\hat{M}(\beta^*,\beta)$ is a concave function of $\beta$ on the region where $\beta \geq 0$. Moreover, $x\tanh(\frac{\beta x}{\sigma^2})$ is $1.5$-Lipschitz as a function of $x$ for all $\beta\geq 0$.  Lemma \ref{lem:mis-3-fixed} is applicable and it tells us that the mis-specified \vem updates converge to a point $\bar{\beta}$ from a random non-zero initialization. 

In the following: let $M_g(\beta^*,\beta)$ denote the \variant EM update with the ground truth log-concave distribution: $0.5\mathcal{N}(\beta^*,\sigma^2)+0.5\mathcal{N}(-\beta^*,\sigma^2)$. In Lemma~\ref{lem:error_misspecified}, we have proved an intermediate result (bound for term $B$) that:
\begin{align}
\label{eq:mis-gap}
   |\hat{M}(\beta^*,\beta) - M_g(\beta^*,\beta)|\leq 9\sigma,
\end{align}

Meanwhile, the convergence result from \cite{daskalakis2016ten} (i.e., Corollary~\ref{cor:gaussian}) says that at $\beta = \frac{\beta^*}{2}$, 
\begin{align}
\label{eq:10-step-bound}
    \beta^* -M_g(\beta^*,0.5\beta^*)\leq 0.5 \exp\left(-\frac{(\beta^*)^2}{8\sigma^2}\right) \beta^*.
\end{align}
Combining \eqref{eq:mis-gap} and \eqref{eq:10-step-bound}, We deduce  that:
\begin{align*}
\hat{M}(\beta^*,0.5\beta^*) \geq & M_g(\beta^*,0.5\beta^*)-9\sigma \\
\geq &\beta^*\Big(1-0.5 \exp\left(-\frac{(\beta^*)^2}{8\sigma^2}\right)\Big)- 9\sigma\\
= & \beta^*\left(1-0.5\exp(-0.125\eta^2)-\frac{1}{\eta}\right).
\end{align*}
When $\eta >C_0$ for some absolute constant $C_0>0$, we can show that $\beta^*\left(1-0.5\exp(-0.125\eta^2)-\frac{1}{\eta}\right)>0.5\beta^*$.  In particular, this implies that $\bar{\beta}>0.5\beta^*$ by Lemma \ref{lem:mis-3-fixed}. Therefore, the error bound in equation~\eqref{eq:error_bound} can be further bounded by
\begin{align*}
|\bar{\beta}-\beta^{*}|\leq\frac{9\sigma}{1-\exp\left(-0.125\eta^2\right)}.   
\end{align*}
The right hand side bound is smaller than $10\sigma$ when $\eta$ is large. This completes the proof of Proposition~\ref{lem:mis-gaussian}.
\end{proof}

\begin{proposition}[Wasserstein Distance Bound by Stein's Method~\cite{ross2011fundamentals}]
\label{prop:Stein}
We have
\[
D_{W}(f,\mathcal{N}(0,1))\leq\sup_{h\in\mathcal{F}}|\mathbb{E}\left[Wh(W)-h'(W)\right]|,
\]
where $\mathcal{F=}\{h:\|h\|\leq2,\|h^{\prime}\|\leq\sqrt{\frac{\pi}{2}},\|h^{\prime\prime}\|\leq2\}$. 
\end{proposition}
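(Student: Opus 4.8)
The plan is to derive the bound from two classical ingredients: the Kantorovich--Rubinstein dual representation of the $1$-Wasserstein distance, and the Stein characterization of the standard Gaussian. First, recall that
\[
D_W(f,\mathcal N(0,1)) = \sup_{\phi:\,\mathrm{Lip}(\phi)\le 1} \big| \E_{W\sim f}\phi(W) - \E_{Z\sim \mathcal N(0,1)}\phi(Z)\big|,
\]
so it suffices to control the right-hand side for an arbitrary $1$-Lipschitz test function $\phi$. The idea is to transfer each such $\phi$ to a function in the class $\mathcal F$ of the statement through the Stein equation, so that the Gaussian expectation is absorbed into the Stein operator $h\mapsto Wh(W)-h'(W)$.

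For a fixed $1$-Lipschitz $\phi$, I would introduce the Stein equation for the standard normal,
\[
h'(x) - x\,h(x) = \E_{Z}\phi(Z) - \phi(x),
\]
whose unique bounded solution admits the explicit representation
\[
h(x) = e^{x^2/2}\int_{-\infty}^{x}\big(\E_Z\phi(Z)-\phi(t)\big)e^{-t^2/2}\,\mathrm dt = -e^{x^2/2}\int_{x}^{\infty}\big(\E_Z\phi(Z)-\phi(t)\big)e^{-t^2/2}\,\mathrm dt,
\]
the two forms agreeing because $\E_Z[\E_Z\phi(Z)-\phi(Z)]=0$. Evaluating the Stein equation at $W\sim f$ and taking expectations gives the key identity $\E_W\phi(W) - \E_Z\phi(Z) = \E_W\big[W h(W) - h'(W)\big]$, which already exhibits the functional appearing in the statement. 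It then remains only to verify that $h$ belongs to the admissible class $\mathcal F$.

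The main work, and the step I expect to be the real obstacle, is verifying that this Stein solution lands in $\mathcal F$, i.e.\ establishing the uniform smoothing bounds $\|h\|_\infty \le 2$, $\|h'\|_\infty \le \sqrt{\pi/2}$, and $\|h''\|_\infty \le 2$ whenever $\mathrm{Lip}(\phi)\le 1$. These are the classical Gaussian Stein-factor estimates. I would obtain them from the integral representation above: setting $\psi=\phi-\E_Z\phi(Z)$, which is $1$-Lipschitz with $\E_Z\psi(Z)=0$, the bound on $\|h\|_\infty$ follows by splitting at the sign of $x$ and using the uniform tail estimate $e^{x^2/2}\int_{x}^{\infty} e^{-t^2/2}\,\mathrm dt \le \min(\sqrt{\pi/2},\,1/x)$ together with $|\psi(t)|\le |t-x|$; the derivative bounds then follow by differentiating the representation and integrating by parts, so that the Lipschitz bound $\|\psi'\|_\infty=\mathrm{Lip}(\phi)\le 1$ (rather than $\psi$ itself) controls the result. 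In fact the sharp constants yield $\|h'\|_\infty\le\sqrt{2/\pi}<\sqrt{\pi/2}$, so membership in $\mathcal F$ holds comfortably; these estimates are precisely the content of the cited Stein-method references, so I would either reproduce the short Gaussian-tail computations or invoke them as stated.

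Finally, combining the pieces: since $h\in\mathcal F$ for every $1$-Lipschitz $\phi$, the key identity gives
\[
\big|\E_W\phi(W)-\E_Z\phi(Z)\big| = \big|\E_W[W h(W)-h'(W)]\big| \le \sup_{h\in\mathcal F}\big|\E[Wh(W)-h'(W)]\big|.
\]
Taking the supremum over all $1$-Lipschitz $\phi$ on the left and invoking the dual representation of $D_W$ completes the proof. The only genuinely quantitative content beyond this bookkeeping is the regularity of the Stein solution; everything else is duality together with the defining identity of the Gaussian Stein operator.
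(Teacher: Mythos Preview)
Your argument is correct and is precisely the standard derivation of this bound: Kantorovich--Rubinstein duality, the Gaussian Stein equation, and the classical regularity estimates on the Stein solution (with the observation that the sharp constant $\sqrt{2/\pi}$ for $\|h'\|_\infty$ comfortably fits inside the paper's looser $\sqrt{\pi/2}$). There is nothing to compare against, however: the paper does not prove Proposition~\ref{prop:Stein} at all, but merely quotes it from Ross's survey~\cite{ross2011fundamentals} and applies it as a black box inside the proof of Lemma~\ref{lem:error_misspecified}. What you have written is essentially the proof one would find in that cited reference.
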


\subsection{General Observations}
\label{subsec:mis-observation}
In Section \ref{subsec:proof-mis-gaussian}, the robustness results rely on the assumptions in Lemma \ref{lem:mis-3-fixed}. In particular, we need the concavity of $\hat{F}_{\beta,\sigma}$. This is a very restrictive condition. Consider the family of the log-concave distribution whose log density is of the form $g(x)\propto |x|^r$, $r\geq 1$. The concavity condition holds only when $r\leq 2$. However, the 3-fixed point structure still holds as along as the five properties in the proof for Lemma \ref{lem:mis-3-fixed} hold. Indeed we observe that the function $\beta \to \hat{M}(\beta^*,\beta)$ is a concave function in $\beta$ even when $r>2$ (see Section~\ref{appendix:numerical-mis}).

Recall that the \variant EM iterate with the misspecified distribution $\hat{f}$ is the following:
\begin{align*}
\hat{M}(\beta^*,\beta) = & \mathbb{E}_{X\sim f_{\beta^*,\sigma}} x\tanh(0.5\hat{F}_{\beta,\sigma}(X)),
\end{align*}
where 
\begin{align*}
\hat{g} =  \log \hat{f} 
\quad\qquad\text{and}\qquad\quad
\hat{F}_{\beta,\sigma}(X) = \hat{g}\left(\frac{1}{\sigma}|X-\beta|\right) -\hat{g}\left(\frac{1}{\sigma}|X+\beta|\right).   
\end{align*}
We can decompose the iterate $\hat{M}(\beta^*,\beta)$ as follows:
\begin{align*}
    \hat{M}(\beta^*,\beta) = & \mathbb{E}_{X\sim f_{\beta^*,\sigma}} x\tanh(0.5\hat{F}_{\beta,\sigma}(x)) - 
     \mathbb{E}_{X\sim \hat{f}_{\beta^*,\sigma}} x\tanh(0.5\hat{F}_{\beta,\sigma}(X)) \nonumber \\
     &+ \mathbb{E}_{X\sim \hat{f}_{\beta^*,\sigma}} x\tanh(0.5\hat{F}_{\beta,\sigma}(X)),
\end{align*}
where the first difference term is a \textit{drift} term specifying the error due to the distribution misspecification and the second term is the ideal \vem update with $\hat{f}$, which contracts to $\beta^*$ at a linear rate. We empirically observe that if we fit a lighter tail log-concave distribution $\hat{f}$ compared to $f$, the drift term is positive (see Section~\ref{appendix:numerical-mis}) and thus the \variant EM iterate converges to some $\bar{\beta}>\beta^*$. By the triangle inequality, 
\begin{align*}
    \hat{M}(\beta^*,\bar{\beta})\leq \mathbb{E}_{X\sim f_{\beta^*,\sigma}} \big| X\tanh(0.5\hat{F}_{\bar{\beta},\sigma}(X)) \big| \leq  \beta^* + \mathbb{E}_{x\sim f_{\beta^*,\sigma}}|X|
    \leq \beta^*+\sigma.
\end{align*}
Thus the relative error is bounded by $\frac{\sigma}{\beta^*} = \frac{1}{\eta}$. When the SNR is large, the error is small. On the other hand, if we fit a heavier tail distribuiton $\hat{f}$ compared to $f$, the drift is negative and the corresponding fixed point can be $0$. This suggests a practical recipe: when one does not know the ground truth log-concave density, fit with a density that has a lighter tail. For instance, we can fit a Gaussian density when the ground truth is Laplace or logistic.

\section{Numerical Experiments}
\label{appendix-numerical}
We provide numerical evidences for the two observations we made in the main sections. In particular, in section \ref{appendix:numerical-approxm}, we demonstrate that the \variant M-step may be viewed as an approximation to the M-step in the classical EM algorithm. In section \ref{appendix:numerical-mis}, we study the convergence behavior as well as the quality of the solution for the \vem with a mis-specified distribution.  

\subsection{Approximate M-step}
\label{appendix:numerical-approxm}

We consider the family of polynomial distributions with log density $g\propto |x|^r$ for some $r\geq 1$. In the E-step of the classical EM algorithm, we obtain a lower bound $Q(\cdot \mid \beta)$ for the log-likelihood based on the current estimate $\beta$: 
\begin{align}
    Q(b\mid \beta) = \mathbb{E}_{x\sim f_{\beta^*,\sigma}}\left[-p_{\beta,\sigma}^1 g\left(\frac{|x-b|}{\sigma}\right) - p_{\beta,\sigma}^2 g\left(\frac{|x+b|}{\sigma}\right)\right].
\end{align}
The M-step is to compute $\text{argmax}_{b} Q(b\mid \beta)$. Note that $Q(\cdot\mid \beta)$ is a concave function, thus the optimization problem has a well-defined solution; however, it does not admit a closed form solution in general. Consider the above example where the ground truth distribution is polynomial, the M-step is equivalent to solving for a polynomial equation with degree $r-1$.

In the following, we plot the negative $Q$ function (which is convex) for two polynomial distributions in Figure~\ref{fig:q1} and Figure~\ref{fig:q2}. Meanwhile  we trace two points  $(\beta,Q(\beta\mid \beta))$ and $(\beta^+,Q(\beta^+\mid \beta))$, where $\beta^+$ is the \vem update:
\begin{align*}
\beta^+ = \mathbb{E}_{x\sim f_{\beta^*,\sigma}}\left[x\tanh(0.5F_{\beta,\sigma}(x))\right].
\end{align*}
Numerically we find that $b=\beta^+$ strictly increases the value of $Q$ function compared to $b=\beta$ when $\beta$ is not equal to the true parameter. 
\begin{figure}[h]
    \centering
    \includegraphics[scale=0.12, trim =  100 100 300 200, clip]{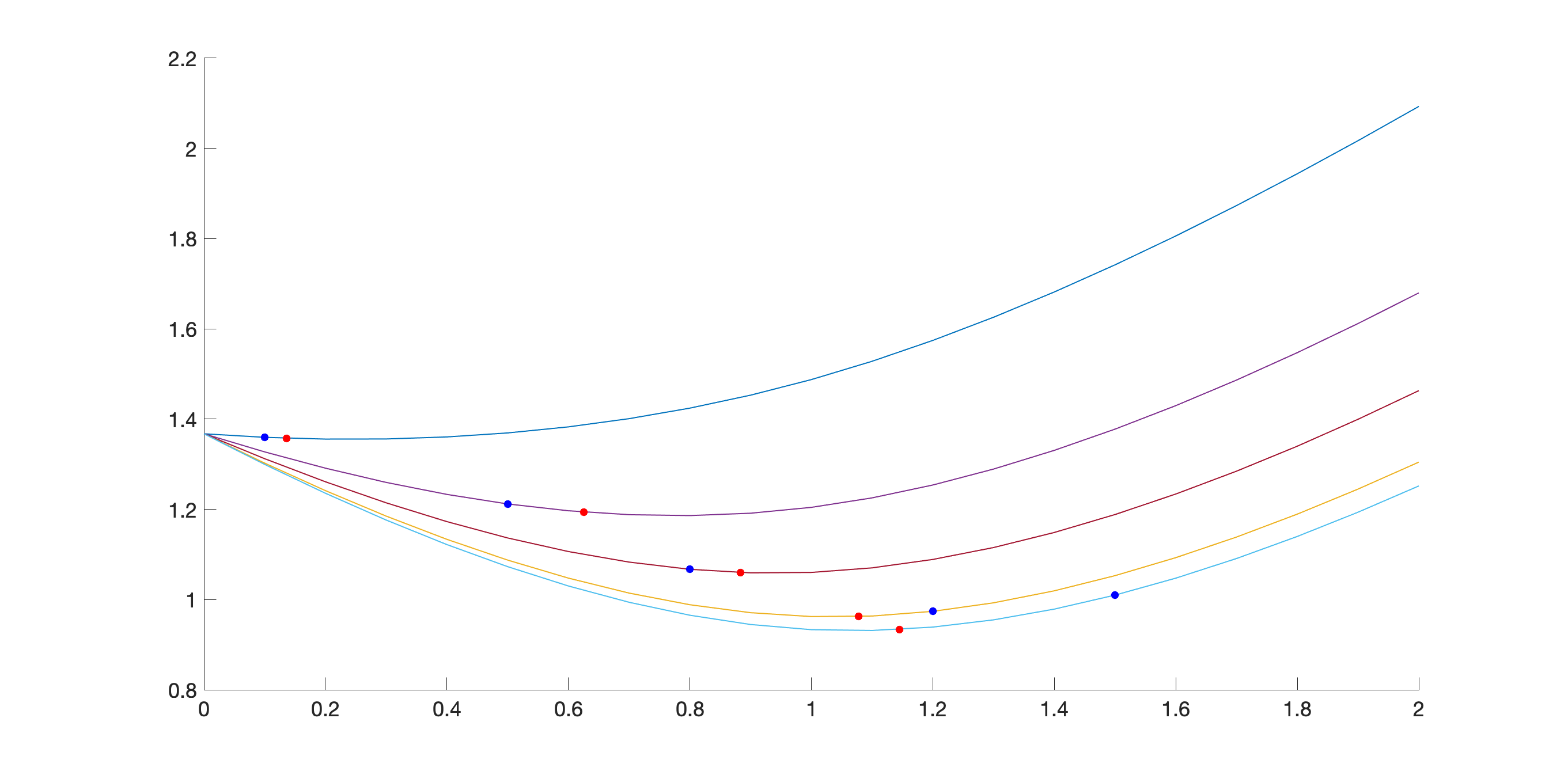}
    \caption{Plot of the function: $-Q(\cdot \mid \beta)$ for $g\propto |x|$ for $\beta=0.1,0.5,0.8,1.2,1.5$, represented by different colored curves. The true location parameter $\beta^*=1$. The blue dots correspond to $(\beta,-Q(\beta \mid \beta))$ and the red dots correspond to $(\beta^+, -Q(\beta^+ \mid \beta))$. It is seen that $\beta^+$ is not the exact the M-step, as they do not minimize the $-Q(\cdot \mid \beta)$ function. However, it is seen that $-Q(\beta^+\mid \beta)< -Q(\beta\mid \beta)$, suggesting that the \vem update is a type of approximate M-step.
    \yc{Scaling/trimming seems off. Also, what does each color represent? Also explain in this caption what are the red/blue dots.}}
    \label{fig:q1}
\end{figure}
\begin{figure}[h]
    \centering
    \includegraphics[scale=0.12,trim = 100 100 300 200, clip]{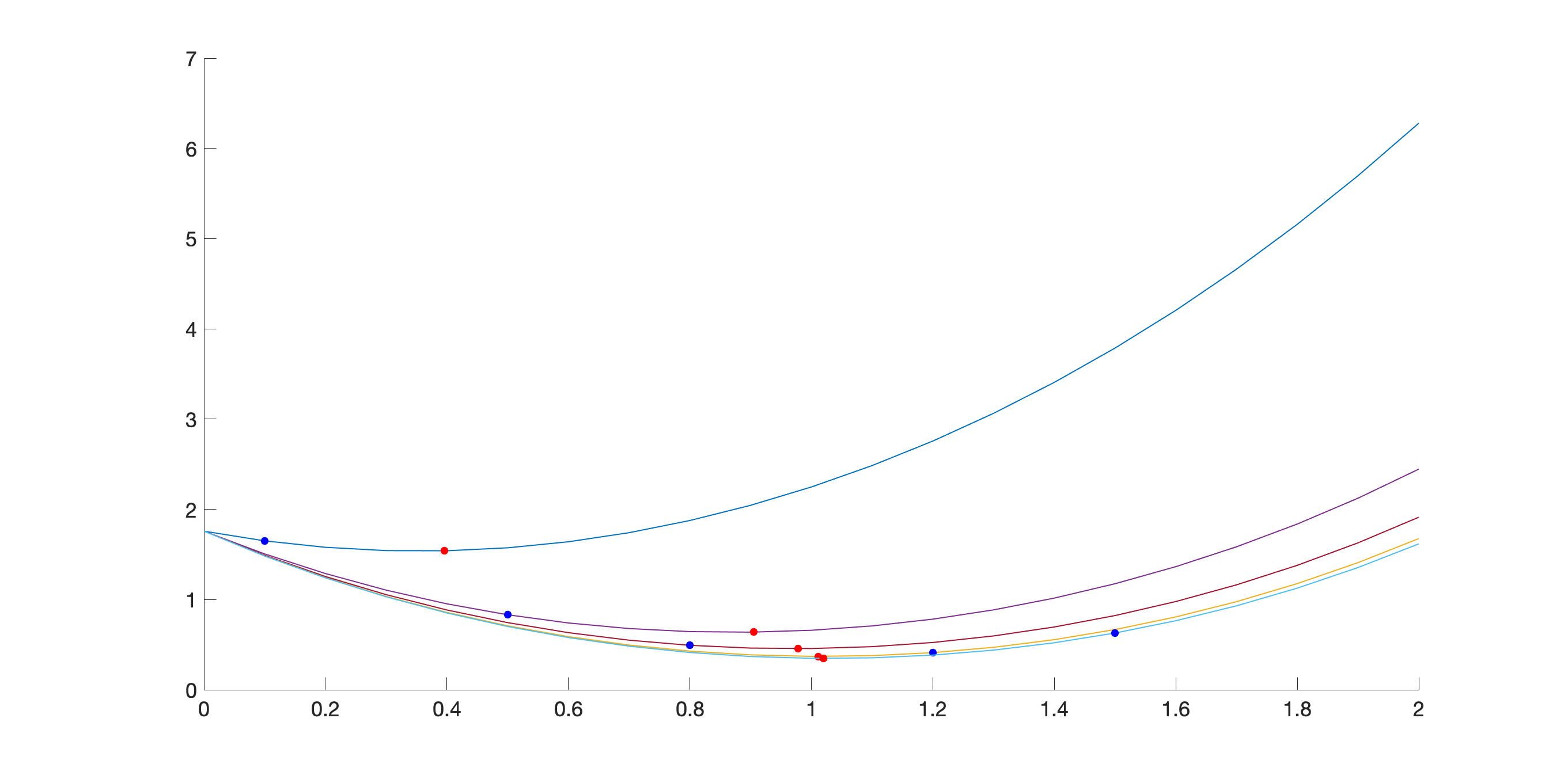}
    \caption{Plot of $-Q(\cdot \mid \beta)$ for $g\propto |x|^{2.5}$ for $\beta=0.1,0.5,0.8,1.2,1.5$,  represented by different colored curves. The true parameter is $\beta^*=1$. Similar to Figure \ref{fig:q1}, the blue dots correspond to $(\beta,-Q(\beta \mid \beta))$ and the red dots correspond to $(\beta^+, -Q(\beta^+ \mid \beta))$
     \yc{what does each color represent? Also explain in this caption what are the red/blue dots.}
	}
    \label{fig:q2}
\end{figure}

\subsection{Misspecified \vem}
\label{appendix:numerical-mis}

We consider mixtures of polynomial distributions with $\log f = g\propto |x|^r$ for some $r\geq 1$. We pick some $r_0$ in the family as the ground truth distribution, and then fit with another distribution with $r_1$. 

In Figure \ref{fig:mis1} and Figure \ref{fig:mis2}, we plot the new iterate $ \beta^+ $ computed by \vem as a function of the current iterate $ \beta $.
We observe that when $r_1>r_0$, the \vem updates tend to converge to a point greater than $\beta^*$. On the other hand, when $r_1<r_0$, the \vem updates tend to converge to a point smaller than $\beta^*$.  Specifically, in Figure \ref{fig:mis2}, we observe that when fitting a distribution with heavier tail than the ground the truth, the convergence point can be $0$, which might lead to a big error in estimation. Therefore, it suggests that one should fit a distribution with a lighter tail in practice. In both sets of experiments, we observe that fitting a 2GMM yields a fixed point close to the ground truth parameter. 
\begin{figure}
    \centering
    \includegraphics[scale=0.12,trim = 200 0 300 50, clip]{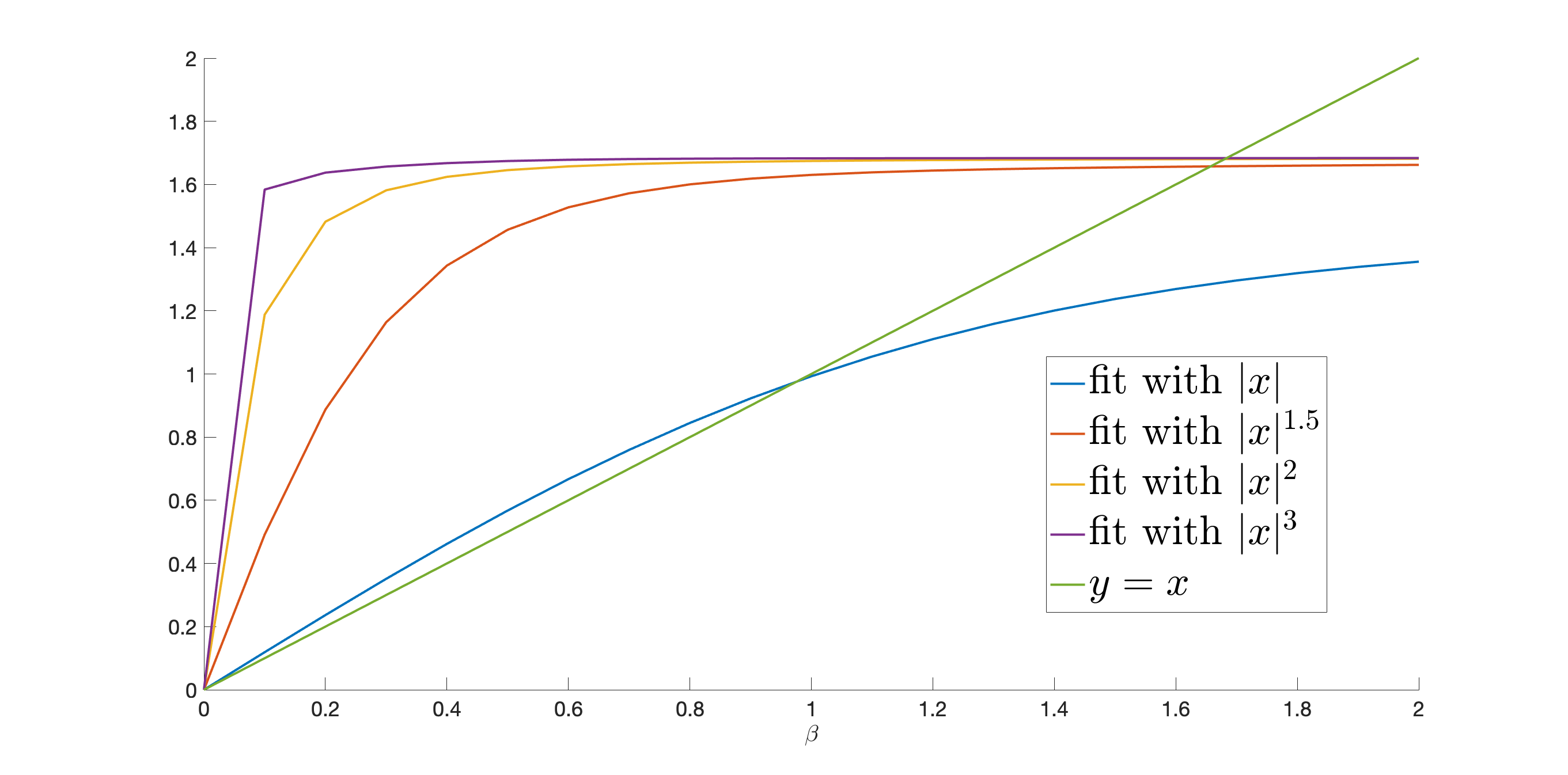}
    \caption{We plot $\beta^+$ as a function of $\beta$.  The ground truth distribution is $g\propto |x|$, with $\beta^*=1$. The mis-specified distributions are picked with degree $1.5,2,3$. The intersection point with the $ y=x $ line is the fixed point for the \vem update. It is seen that when we fit with a polynomial distribution with higher degree, the fixed points are all greater than $1$.}
    \label{fig:mis1}
\end{figure}

\begin{figure}[h]
    \centering
    \includegraphics[scale=0.12,trim = 200 0 300 50, clip]{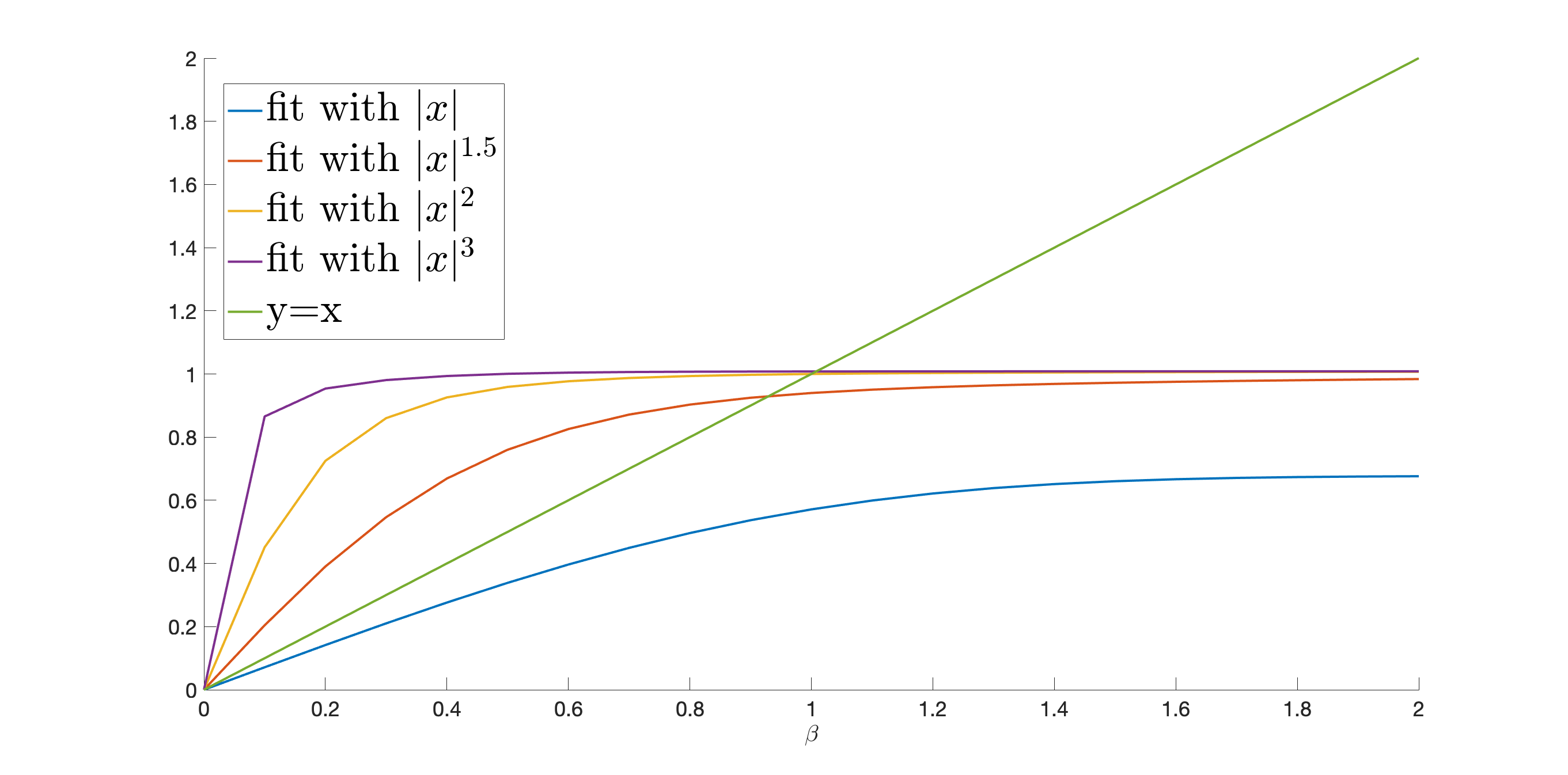}
    \caption{We plot $\beta^+$ as a function of $\beta$. The ground truth distribution is $g\propto |x|^2$, with $\beta^*=1$. The misspecified distributions are picked with degree $1.5,2,3$.  It is seen that when we fit with a polynomial distribution with higher degree, the fixed points are all greater than $1$. When we fit with a polynomial distribution with lower degree, the fixed points are all smaller than $1$. In particular, when we fit with a Laplace distribution, the only fixed point is $0$.
	}
    \label{fig:mis2}
\end{figure}

\end{document}